\newcommand{\R}{\mathbb{R}}
\newcommand{\E}{\mathop{\mathbb{E}}}
\newcommand{\dtv}{d_{\text{TV}}}
\newcommand{\dentry}{d_{\text{ENTRY}}}
\newcommand{\Aone}{\mathcal{A}_1}
\newcommand{\Atwo}{\mathcal{A}_2}
\newcommand{\Athree}{\mathcal{A}_3}
\newcommand{\blue}[1]{\textcolor{black}{#1}}
\newcommand{\cameraReady}[1]{\textcolor{black}{#1}}
\newcommand{\newparagraph}[1]{\noindent\textbf{#1\xspace~~}}
\newcommand{\remark}[1]{\noindent\textit{#1\xspace~~}}
\newtheorem{theorem}{Theorem}
\newtheorem{lemma}{Lemma}
\newtheorem{corollary}{Corollary}
\newtheorem{definition}{Definition}
\newtheorem{proposition}{Proposition}
\DeclareMathOperator*{\argmax}{arg\,max}
\DeclareMathOperator*{\argmin}{arg\,min}
\DeclareMathOperator\supp{supp}
\icmltitlerunning{On Robust Mean Estimation under Coordinate-level Corruption}
\begin{document}

\twocolumn[
\icmltitle{On Robust Mean Estimation under Coordinate-level Corruption}



\icmlsetsymbol{equal}{*}

\begin{icmlauthorlist}
\icmlauthor{Zifan Liu}{equal,wisc}
\icmlauthor{Jongho Park}{equal,wisc}
\icmlauthor{Theodoros Rekatsinas}{wisc}
\icmlauthor{Christos Tzamos}{wisc}
\end{icmlauthorlist}

\icmlaffiliation{wisc}{Department of Computer Science, University of Wisconsin-Madison, Madison, USA}

\icmlcorrespondingauthor{Zifan Liu}{zliu676@wisc.edu}
\icmlcorrespondingauthor{Jongho Park}{jongho.park@wisc.edu}
\icmlkeywords{Machine Learning, ICML}

\vskip 0.3in
]



\printAffiliationsAndNotice{\icmlEqualContribution} 

\begin{abstract}
We study the problem of robust mean estimation and introduce a novel Hamming distance-based measure of distribution shift for coordinate-level corruptions. We show that this measure yields adversary models that capture more realistic corruptions than those used in prior works, and present an information-theoretic analysis of robust mean estimation in these settings. We show that for structured distributions, methods that leverage the structure yield information theoretically more accurate mean estimation. We also focus on practical algorithms for robust mean estimation and study when data cleaning-inspired approaches that first fix corruptions in the input data and then perform robust mean estimation can match the information theoretic bounds of our analysis.
We finally demonstrate experimentally that this two-step approach outperforms structure-agnostic robust estimation and provides accurate mean estimation even for high-magnitude corruption.
\end{abstract}

\section{Introduction}\label{sec:intro}
Data corruption is an impediment to modern machine learning deployments. Corrupted data samples, i.e., data vectors with either noisy or missing values, can severely skew the statistical properties of a data set, and hence, lead to invalid inferences. Robust statistics seek to provide methods for problems such as estimating the mean and covariance of a data set that are resistant to data corruptions.

Much of the existing robust estimation methods assume that a data sample is either completely clean or completely corrupted; \cameraReady{{\em Huber contamination model}~\cite{huber1992robust} and the {\em strong contamination model} considered by \citet{diakonikolas2019recent} are typical examples of such corruption models.} Under this kind of model, robust estimators rely either on filtering or down-weighting corrupted data vectors to reduce their influence~\cite{diakonikolas2019robust, 10.5555/3305381.3305485}.
In many applications, however, we can have partially corrupted data samples and even all data vectors can be partially corrupted. 
For example, in DNA microarrays, measurement errors or dropouts can occur for batches of genes~\cite{troyanskaya2001missing}. Filtering or down-weighting an entire data vector can waste the information contained in the clean coordinates of the vector. 

Moreover, recent works~\cite{DBLP:journals/pvldb/RekatsinasCIR17, wu2020att, ijcai2019-377} show that to obtain state-of-the-art empirical results for predictive tasks over noisy data, one needs to leverage the redundancy in data samples introduced by statistical dependencies among the coordinates of the data (referred to as {\em structure} hereafter) to learn an accurate distribution of the clean data and use that to repair corruptions. This work aims to promote theoretical understanding as to why leveraging statistical dependencies is key in dealing with data corruption. To this end, we study the connections between robust statistical estimation under worst-case (e.g., adversarial) corruptions and {\em structure-aware} recovery of corrupted data. 


\newparagraph{Problem Summary}
We consider \emph{robust mean estimation} under \emph{coordinate-level} corruptions (either missing entries or value replacements). We study worst-case, adversarial corruption, i.e., we assume that corruption is systematic and cannot be modeled as random noise. 
We consider the \emph{adversarial model} for which a given data set generated from an unknown distribution can have up to $\alpha$-fraction of its coordinates corrupted adversarially, i.e., the adversary can strategically hide or modify individual coordinates of samples. The goal is to find an estimate $\hat{\mu}$ of the true mean $\mu$ of the data set that is accurate even in the worst case. 



\newparagraph{Main Contributions}
 First, we present a new information theoretic analysis of robust mean estimation for coordinate-level corruptions, i.e., both replacement-based corruptions and missing-data corruptions. Our analysis introduces a model for coordinate-level corruptions, and $\dentry$, a new measure of distribution shift for coordinate-level corruption. We base $\dentry$ on the Hamming distance between samples from the original and the corrupted distribution. The reason is that the Hamming distance between the original and the corrupted samples is at most the amount of corruption in this sample, neatly reflecting the noise level. 


We present an information-theoretic analysis of corruption under coordinate-level adversaries and formally validate the empirical observations in the data cleaning literature: one must exploit the structure of the data to achieve information-theoretically optimal error for mean estimation. To show that structure is key, we focus on the case where the data lies on a lower-dimensional subspace, i.e., the observed sample before corruption is $x = Az$, where $A \in \R^{n\times r}$ and $z \in \R^r$ is a lower-dimensional vector drawn from an unknown distribution $D_z$. Also, $n$ is the total number of coordinates in $x$. Such low-rank subspace-structure is common in real-world data and the linear assumption is standard in theoretical exploration. We identify a key quantity $m_A$, the minimum number of rows that one needs to remove from $A$ to reduce its row space by one, which captures the effect of structure on mean-estimation error $||\mu-\hat{\mu}||$. For Gaussian distributions, the de facto distribution considered in the robust statistics literature, we prove that no algorithm can achieve error better than $\Omega(\alpha \frac{n}{m_A})$ when $\alpha$-fraction of coordinates per sample on average is adversarially corrupted. Our analysis highlights that, for coordinate-level corruption, it is necessary to use the structure in the data to perform recovery before statistical estimation. Specifically, when $\alpha$-fraction of the values are corrupted, recently-introduced estimators~\cite{diakonikolas2019robust} yield an estimation error of $\Omega(\alpha n)$ which is not the information theoretic optimal in the presence of structure. We show that to achieve the information theoretic optimal error of $\Theta(\alpha \frac{n}{m_A})$, one needs to consider the dependencies amongst coordinates.

Second, we study the existence of practical algorithms with polynomial complexity which yield results that match the error bounds of our information-theoretic analysis. We first show that, when corruptions correspond to missing data, a data-cleaning-inspired two-step approach achieves the information-theoretic optimal error. Specifically, to achieve the optimal error, one must first use imputation strategies that leverage the dependencies between data coordinates to recover the missing entries and then proceed with robust mean estimation over the imputed data. We show that in the case of linear structure, if the dependencies across coordinates are modeled via a known matrix $A$, we can recover missing entries by solving a linear system; when $A$ is unknown, we show that under bounded amount of corruption, we can leverage matrix completion methods~\cite{matrixCompletion} to recover the missing entries and obtain the same performance as in the case with known structure. We also explore replacement-type corruptions. By drawing connections to sparse recovery, we show that recovery for replacements is computationally intractable in general. As a preliminary result, we propose a randomized recovery algorithm for replacements that achieves probabilistic guarantees when the structure $A$ is known.

Finally, we present an experimental evaluation of the aforementioned two-step approach for missing-data corruptions on real-world data and show it leads to significant accuracy improvements over prior robust estimators even for samples that do not follow a Gaussian distribution or whose structure does not conform to a linear model.

\section{Background and Motivation}\label{sec:background}
We review the problem of robust mean estimation and discuss models and measures related to our study.

\newparagraph{Robust Mean Estimation} Robust mean estimation seeks to recover the mean $\mu \in \mathbb{R}^n$ of a $n$-dimensional distribution $D$ from a list of i.i.d. samples where an unknown number of arbitrary corruptions has been introduced in the samples. 
Given access to a collection of $N$ samples $x_1, x_2, \dots, x_N$ from $D$ on $\mathbb{R}^n$ when a fraction of them have been fully or partially corrupted, robust mean estimation seeks to find a vector $\hat{\mu}$ such that $\|\mu - \hat{\mu}\|$ is as small as possible. We consider two norms to measure the mean estimation error. The first norm is the Euclidean ($\ell_2$) distance and the second is the scale-invariant \emph{Mahalanobis distance} defined as $\|\mu - \hat{\mu}\|_{\Sigma} = |(\mu-\hat{\mu})^T\Sigma^{-1}(\mu-\hat{\mu})|^{1/2}$, where $\Sigma$ is the covariance matrix. When the covariance matrix is the identity matrix, the Mahalanobis distance reduces to the Euclidean distance.

\newparagraph{Sample-level Corruption}A typical model to describe worst-case corruptions is that of a \emph{sample-level adversary}, hereafter denoted $\Aone^\epsilon$. \cameraReady{In this paper, we assume that this adversary is allowed to inspect the samples and corrupt an $\epsilon$-fraction of them in an arbitrary manner. All coordinates of those samples are considered corrupted.}
Corruptions introduce a shift of the distribution $D$, which we can measure using the \emph{total variation distance} ($\dtv$). Total variation distance between two distributions $P$ and $Q$ on $\R^n$ is defined as $\dtv(P, Q) = \sup_{E \subseteq \R^n} |P(E) - Q(E)|$ or equivalently $\frac{1}{2}\|P-Q\|_1$. 
For two Gaussians $D_1 = \mathcal{N}(\mu_1, \Sigma)$ and $D_2 = \mathcal{N}(\mu_2, \Sigma)$ with $\dtv(D_1,D_2) = \epsilon < 1/2$ it is that $\|\mu_1-\mu_2\|_\Sigma = \Theta(\epsilon)$, i.e., their total variation distance and the Mahalanobis distance of their means are equivalent up to constants. This result allows tight analyses of Gaussian mean estimation for a bounded fraction of corruptions.

\newparagraph{Motivation} Total variation only provides a coarse measure of distribution shift, and hence, leads to a coarse characterization of the mean estimation error. For example, corruption of one coordinate per sample versus corruption of all coordinates results in the same distribution shift under total variation. 
However, the optimal mean estimation error can be different for these two cases. For example, if a corrupted coordinate has identical copies in other uncorrupted coordinates, the effect to mean estimation should be zero as we can repair the corrupted coordinate. This motivates our study.

\section{Information Theoretic Analysis}\label{sec:mean_estimation}
We study robust mean estimation under fine-grained corruption schemes. First, we introduce two new coordinate-level corruption adversaries (models) and a new measure of distribution shift ($\dentry$) that characterizes the effect of those adversaries on the observed distribution. Second, we present an information theoretic analysis of robust mean estimation and prove information-theoretically optimal bounds for mean estimation over Gaussians $\mathcal{N}(\mu, \Sigma)$ under coordinate-level corruption with respect to Mahalanobis distance. The results in this section hold for replacement-based corruptions as well as missing values. All proofs are deferred to the supplementary material of our paper.

\subsection{Coordinate-level Corruption Adversaries}\label{sec:corruptionModels}
We introduce two new adversaries and compare them to the sample-level adversary $\Aone^\epsilon$ from Section~\ref{sec:background}:

First, we consider an extension of $\Aone^\epsilon$ to coordinates, and define a \emph{value-fraction adversary}, denoted by $\Atwo^\rho$. Given $N$ samples from distribution $D$ on $\mathbb{R}^n$, adversary $\Atwo^\rho$ is allowed to corrupt up to a $\rho$-fraction of values in each coordinate of the $N$ samples. This adversary can corrupt a total of $\rho \cdot N \cdot n$ values in the $N$ samples; these values can be distributed strategically across samples leading to cases where \emph{most} of the samples are corrupted but still the corruption per coordinate is bounded by $\rho N$. 

Second, we define the more powerful \emph{coordinate-fraction adversary} $\Athree^\alpha$ that can corrupt \emph{all samples} in the worst case. $\Athree^\alpha$ is allowed to corrupt up to $\alpha$-fraction of all values in the $N$ samples, i.e., up to a total of $\alpha \cdot N \cdot n$ values. When $\alpha \geq \frac{1}{n}$, adversary $\Athree^\alpha$ can corrupt \emph{all} $N$ samples.

\cameraReady{Note that similar to $\Aone^\epsilon$, the coordinate-level adversaries we consider ($\Atwo^\rho$, and $\Athree^\alpha$) are adaptive, i.e., they are allowed to inspect the samples before choosing a fraction of the coordinates to corrupt.} 

\newparagraph{Adversary Comparison} $\Aone^\epsilon$ corresponds to the standard adversary associated with the strong contamination model considered by \citet{diakonikolas2019recent}, which either corrupts a sample completely or leaves it intact.
Adversaries $\Atwo^\rho$ and $\Athree^\alpha$ are more fine-grained since they can corrupt only part of the entries of a sample. As a result, $\Atwo^\rho$ and $\Athree^\alpha$ can corrupt more samples than $\Aone^\epsilon$ for similar budget-fractions $\epsilon$, $\rho$, and $\alpha$. 

We formalize the comparison among $\Aone^\epsilon$, $\Atwo^\rho$, and $\Athree^\alpha$ in the next propositions. 
We seek to understand when an adversary $\mathcal{A}$ can \emph{simulate} another $\mathcal{A}^\prime$, i.e., $\mathcal{A}$ can perform any corruption performed by $\mathcal{A}^\prime$.
\begin{proposition}
\label{lowerBoundAdversaryProposition}
If $\alpha,\, \rho \leq \epsilon / n$, then $\Aone^\epsilon$ can simulate $\Atwo^\rho$ and $\Athree^\alpha$. If $\alpha \le \rho / n$, $\Atwo^\rho$ can simulate $\Athree^\alpha$.
\end{proposition}
\begin{proposition}
\label{adversaryLemma}
If $\alpha,\, \rho \geq \epsilon$, then $\Atwo^\rho$ and $\Athree^\alpha$ can simulate $\Aone^\epsilon$. If $\alpha \ge \rho$, $\Athree^\alpha$ can simulate $\Atwo^\rho$. 
\end{proposition}
These propositions show that the two adversary types (sample- and coordinate-level) can simulate each other under different budget conditions, thus, enabling reductions between the two types.

Proposition~\ref{lowerBoundAdversaryProposition} implies that we can reduce coordinate-level corruption to sample-level corruption by considering $\Athree^\alpha$ as $\Aone^\epsilon$ with $\epsilon = \alpha n$. This reduction guarantees that \emph{any algorithm for mean estimation with guarantees for $\Aone^\epsilon$ enjoys the same guarantees for coordinate-level corruption when $\epsilon \geq \alpha \cdot n$.} Similarly, Proposition~\ref{adversaryLemma} means that \emph{any lower-bound guarantee on mean estimation for $\Aone^\epsilon$ also holds for $\Athree^\alpha$ when $\epsilon = \alpha$}. However, this characterization is loose as the gap between $\alpha$ and $\alpha n$ is large, raising the question: Are there distributions for which this gap is more tight and are there data properties we can exploit to reduce the dimensional factor of $n$? Next, we show that structure in data affects the power of coordinate-level corruption and introduces information-theoretically tight bounds for mean estimation under coordinate-level corruption.

\subsection{Distribution Shift in Coordinate-level Corruption}\label{sec:dentry}

We propose a new type of distribution shift metric, referred to as $\dentry$, which can capture fine-grained coordinate-level corruption. We have the next definition:

\begin{definition}[$\dentry$]\label{def:dentry}
Consider the coupling $\gamma$ of two distributions $P,Q$, i.e., a joint distribution of $P$ and $Q$ such that the marginal distributions are $P, Q$. Let the set of all couplings of $P,Q$ be $\Gamma(P, Q)$, and define for $x,y \in \R^n$, $I(x,y) = [\mathds{1}_{x_1\neq y_1}, \dots, \mathds{1}_{x_n\neq y_n}]^\top$. For $D_1, D_2$ on $\R^n$,
\begin{align*}
\dentry^1(D_1, D_2) &= \inf_{\gamma \in \Gamma(D_1, D_2)} 
\frac{1}{n}\|\E_{(x,y)\sim \gamma}\left[I(x,y)\right]\|_1 \\
\dentry^\infty (D_1, D_2) &=  \inf_{\gamma \in \Gamma(D_1, D_2)} \|\E_{(x,y)\sim \gamma}\left[I(x,y)\right] \|_\infty
\end{align*}
\end{definition}

\cameraReady{The following theorem shows the relation between $\dentry^{1}$ and $\Athree^{\alpha}$.}
\begin{theorem}\label{thm:relation}
\cameraReady{Let $D_1, D_2$ be two distributions such that $\dentry^1(D_1, D_2) = \alpha'$. $\Athree^{\alpha}$ corrupts $\alpha$ fraction of $N$ samples from $D_1$. If $\alpha > 2 \alpha'$, $\Athree^{\alpha}$ has a way to make corruptions so that with probability at least $1-e^{-\Omega(\alpha^2 N)}$ it is indistinguishable whether the $N$ samples come from $D_1$ or $D_2$. If $\alpha < \alpha'/4$, no matter how $\Athree^{\alpha}$ makes corruptions, with probability at least $1-e^{-\Omega(\alpha^2 N)}$, we can tell that the $N$ samples come from $D_1$.}
\end{theorem}

\cameraReady{The relation above also holds for $\dentry^{\infty}$ and $\Atwo^\rho$. The theorem shows that $\dentry$ gives a tight asymptotic bound on the power of coordinate-level adversaries. }

Intuitively $\dentry^1 (D_1, D_2)$ represents how many coordinates need to be corrupted (out of $n$ on average) for $D_1$ and $D_2$ to be indistinguishable. Then, given the original distribution $D$ and sufficiently large sample size, 
$\{D' : \dentry^1(D, D') \le \alpha\}$ represents the set of distributions that $\Athree^{\alpha}$ can show us after corruption, and thus $\dentry^{1}$ allows us to capture all possible actions of this adversary. Similarly, $\dentry^{\infty}$ captures all possible actions of $\Atwo^\rho$. We use $\dentry$ when both $\dentry^{1}$ and $\dentry^{\infty}$ apply. 

\begin{figure}
\centering
    \subfigure[Corruption of both coordinates]{\includegraphics[width=0.45\textwidth]{./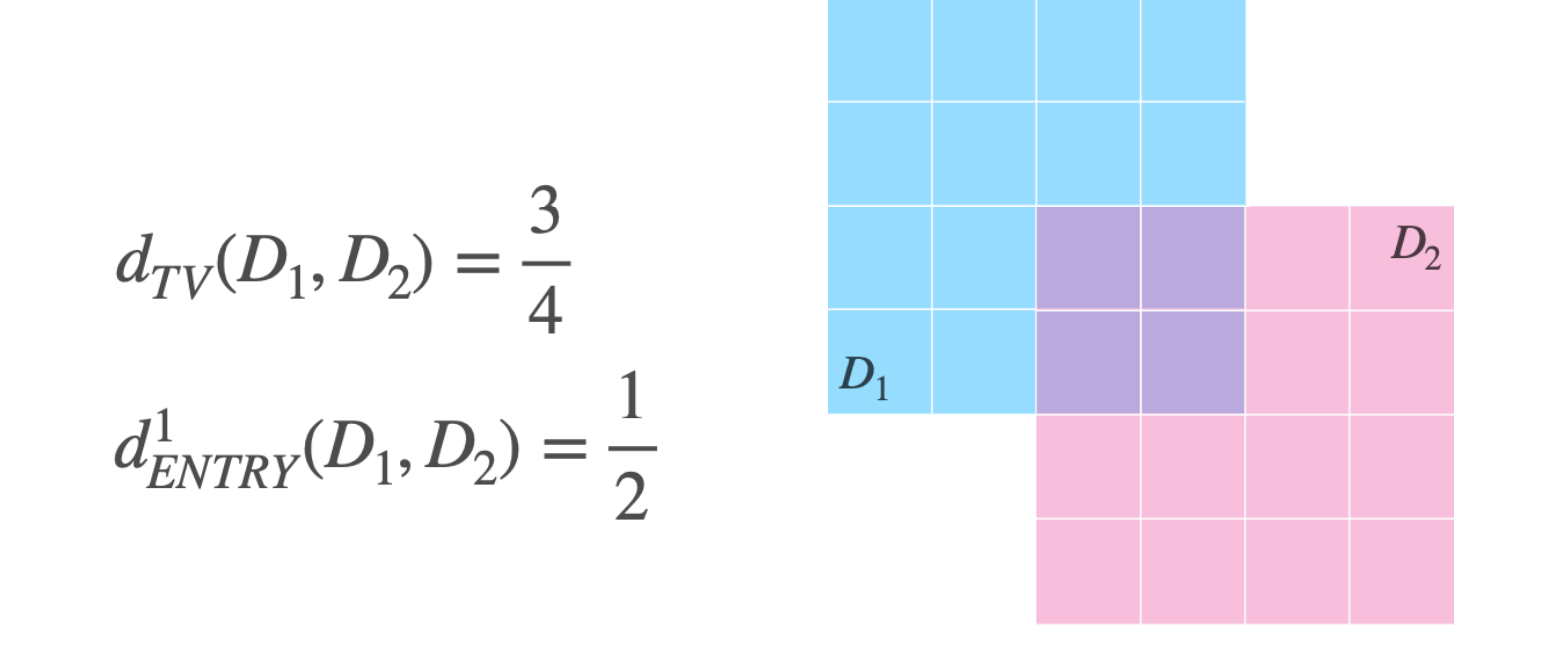}}
    \subfigure[Corruption of one coordinate]{\includegraphics[width=0.45\textwidth]{./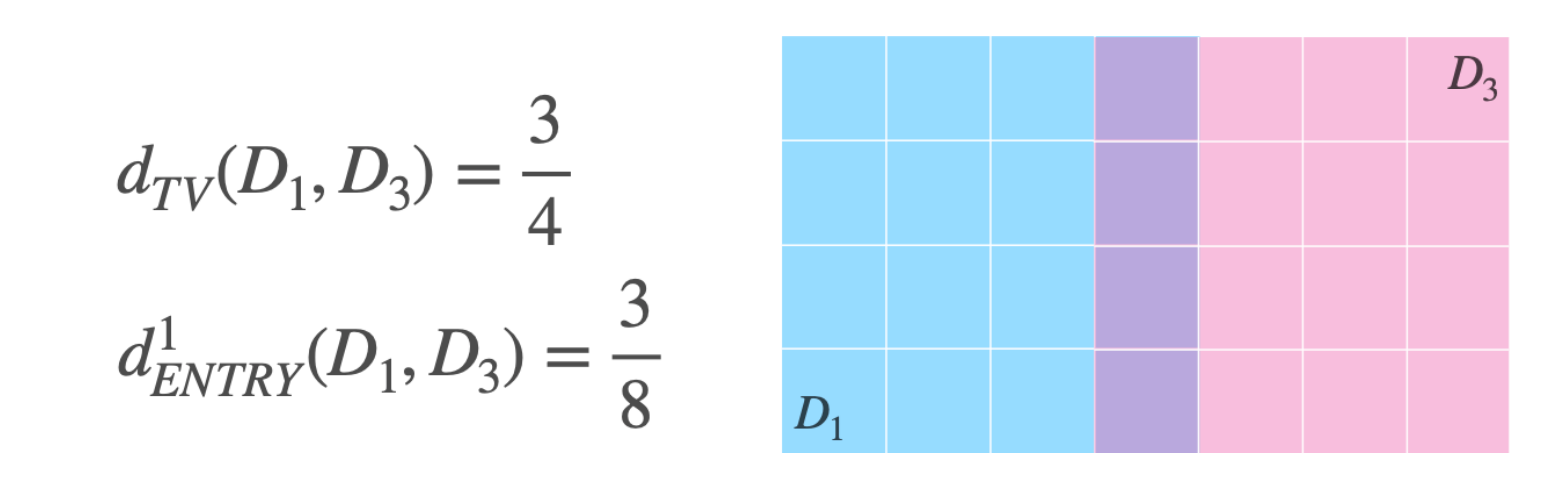}} 
\caption{\small Comparing $\dtv$ and $\dentry$: $D_1$ is the original 2D uniform distribution. $D_2$ is obtained after corruptions from $\Athree^{\alpha}$ with $\alpha = 1/2$, and $D_3$ after corruption from $\Athree^{\alpha}$ with $\alpha = 3/8$.}
\label{fig:disCompare}
\end{figure}

We compare $\dentry^1$ and $\dtv$ in Figure~\ref{fig:disCompare}. We consider a 2D uniform distribution $D_1$ and two corrupted versions $D_2$ and $D_3$. $D_2$ is obtained after an adversary corrupts both coordinates for samples from the upper-left quadrant of $D_1$ and one of the coordinates for samples in the lower-left and upper-right quadrant of $D_1$. $D_3$ is obtained after an adversary corrupts the horizontal coordinate for samples obtained from the left-most $3/4$ of $D_1$. In both cases, $\dtv(D_1,D_2) = \dtv(D_1, D_3) = 3/4$ since $3/4$ of the samples from $D_1$ are corrupted. $\dentry$ is different: From Definition~\ref{def:dentry}, we have $\dentry^1(D_1,D_2) = 1/2$ and $\dentry^1(D_1,D_3) = 3/8$, thus, we can distinguish the two.

\subsection{Information-theoretic Bounds for Gaussians}\label{sec:inf_analysis}
We analyze robust mean estimation under coordinate-level corruptions for Gaussian distributions, the de facto choice in the robust estimation literature. This choice enables us to draw comparisons to prior mean estimation approaches. Our results are summarized in Table~\ref{tab:summary}. We first show an impossibility result for arbitrary Gaussian distributions: in the general case, the information-theoretic analysis based on $\dtv$ and sample-level adversaries~\cite{tukey1975mathematics,diakonikolas2019robust} is tight even for coordinate-level corruption adversaries. However, we show that this result \emph{does not hold} for distributions that exhibit structure, i.e., redundancy across coordinates. We show that, for structured Gaussian distributions and corruptions that lead to a $\dentry$-bounded distribution shift, one must exploit the structure to achieve information-theoretically optimal error for mean estimation.

\begin{table}
\small
\center
\caption{Our results for robust mean estimation ($||\hat{\mu} - \mu||_{\Sigma}$) under $\Aone^\epsilon$ (sample-level adversary), $\Atwo^\rho$, and $\Athree^\alpha$ (coordinate-level adversaries). The results are for Gaussian distributions.}
\label{tab:summary}
\vskip 0.15in
\begin{threeparttable}
\begin{tabular}{lccc}
\toprule
Structure & $\Aone^\epsilon$ & $\Atwo^\rho$ & $\Athree^\alpha$ \\
\midrule
No Structure & $\Theta(\epsilon)$ & $\Omega(\rho \sqrt{n})$, $O(\rho n)$ & $\Theta(\alpha n)$ \\
Linear structure $A$& $\Theta(\epsilon)$ & $O(\rho \frac{n}{m_A})$ & $\Theta(\alpha \frac{n}{m_A})$\\
\bottomrule
\end{tabular}
\end{threeparttable}
\vskip -0.1in
\end{table}

\newparagraph{Mean Estimation of Arbitrary Gaussians}
We consider a Gaussian distribution $\mathcal{N}(\mu, \Sigma)$ with full rank covariance matrix $\Sigma$. We assume that observed samples are corrupted by a coordinate-level adversary. We first present a common upper-bound on the mean estimation error for both $\Atwo^\rho$ and $\Athree^\alpha$, and then introduce the corresponding lower-bounds.

We obtain an upper-bound on $\|\hat{\mu}- \mu\|_\Sigma$ by using Proposition~\ref{lowerBoundAdversaryProposition}: A sample-level adversary can simulate a coordinate-level adversary when $\epsilon = \alpha \cdot n$. But, for Adversary $\Aone^\epsilon$ the Tukey median achieves optimal error $\|\hat{\mu}- \mu\|_\Sigma = \Theta(\epsilon)$ when $\epsilon < 1/2$. Thus, the Tukey median yields error $O(\alpha n)$ for the coordinate-level adversaries $\Atwo^{\rho}$ (when $\rho=\alpha$) and $\Athree^\alpha$, when $\alpha \cdot n < 1/2$. Note that the condition $\alpha \cdot n < 1/2$ is necessary for achieving such an upper bound. When $\alpha \cdot n \geq 1/2$, the coordinate-level adversary is able to corrupt more than half of the samples in the worst case, leading to unbounded error, which shows exactly the power of the coordinate-level adversary.

We now focus on lower-bounds for the mean estimation error. We first consider adversary $\Atwo^\rho$ who can corrupt at most $\rho$-fraction of each coordinate in the samples. For this setting, the optimal estimation error depends on the \emph{disc} of the covariance matrix $\Sigma$, where disc is defined as:
\begin{definition}(disc)
For a positive semi-definite matrix $M$, define $s(M)_{ij} = M_{ij}/\sqrt{M_{ii} M_{jj}}$ and $\text{disc}(M) = \max_{x \in [-1,1]} \sqrt{ x^T s(M) x }$. 
\end{definition}
\begin{theorem}\label{thm:atwo_omega}
Let $\Sigma \in \R^{n\times n}$ be full rank.
Given a set of i.i.d. samples from $\mathcal{N}(\mu, \Sigma)$ where the set is corrupted by $\mathcal{A}_2^\rho$, any algorithm for estimating $\hat{\mu}$ must satisfy $\|\mu - \hat{\mu}\|_{\Sigma} = \Omega( \rho \cdot \text{disc}(\Sigma^{-1}) )$.
\end{theorem}
From this theorem, we obtain the next corollary for the mean estimation error for $\Atwo^\rho$:
\begin{corollary}\label{cor:atwo_omega}
Given a set of i.i.d. samples from $\mathcal{N}(\mu, \Sigma)$ where the set is corrupted by $\mathcal{A}_2^\rho$, any algorithm that outputs a mean estimator $\hat{\mu}$ must satisfy $\|\mu - \hat{\mu}\|_{\Sigma} = \Omega( \rho \sqrt{n} )$.
\end{corollary}
We see that there is a gap between the lower and upper bound on the mean estimation error for $\Atwo^\rho$. However, we show that such a gap does not hold for $\Athree^\alpha$. For $\Athree^\alpha$, the lower bound is the same as the upper bound presented above. Specifically, for the coordinate-fraction adversary $\Athree^\alpha$, it is impossible to achieve a mean estimation error better than $O(\alpha n)$ in the case of arbitrary Gaussian distributions:
\begin{theorem}
\label{noStructureAlpha}
Let $\Sigma \in \R^{n\times n}$ be full rank. 
Given a set of i.i.d. samples from $\mathcal{N}(\mu, \Sigma)$ where the set is corrupted by $\mathcal{A}_3^\alpha$, any algorithm that outputs a mean estimator $\hat{\mu}$ must satisfy $\|\hat{\mu} - \mu\|_\Sigma = \Omega(\alpha n)$. 
\end{theorem}
To gain some intuition, consider $\Athree^\alpha$ with $\alpha \geq \frac{1}{n}$. In this case, $\Athree^\alpha$ can concentrate all corruption in the first coordinate of all samples, and hence, we cannot estimate the mean for that coordinate. An immediate result is that for worst-case coordinate-corruptions, i.e., corruptions introduced by $\Athree^\alpha$, over arbitrary Gaussian distributions the mean estimation error is precisely $\|\mu - \hat{\mu}\|_{\Sigma} = \Theta(\alpha n)$.

\newparagraph{Mean Estimation of Structured Gaussians}
The previous analysis for $\Athree^\alpha$ shows that we cannot improve upon existing algorithms. However, real-world data often exhibit structural relationships between features such that one may be able to infer corrupted values via other visible values~\cite{wu2020att}. We show that in the presence of \emph{structure} due to dependencies, one must exploit the structure of the data to achieve information-theoretically optimal error for mean estimation. To show that structure is key, we focus on samples $x_i \in \R^n$ that lie in a low-dimensional subspace such that $x_i = Az_i$, where $A \in \R^{n\times r}$ represents the structure. \blue{Such low-rank subspace-structure is natural in many real-world scenarios and we assume linearity for the convenience of analysis. In fact, linear structure can also encode more complex structures (e.g., polynomials) if one considers an augmented set of features.} We assume $z_i$ comes from a non-degenerate Gaussian in $\R^r$. We consider a data sample $x = Az$ before corruption and assume that corruption is introduced in $x$. 

In this setting, the coordinate-level adversary has limited effect in mean estimation due to the redundancy that $A$ introduces. We can measure the strength of this redundancy with respect to coordinate-level corruption by considering its row space. 
The coordinates of $x = Az$, and hence, the corrupted data, exhibit high redundancy when many rows of $A$ span a small subspace.
We define a quantity $m_A$ to derive information-theoretic bounds for structured Gaussians.

\begin{definition}[$m_A$]
\label{def:ma}
Given matrix $A \in \R^{n\times r}$, $m_A$ is the minimum number of rows one needs to remove from $A$ to reduce the dimension of its row space by one.
\end{definition}

When $A=I$ is the identity matrix, it is $m_I = 1$ and we can remove any row to reduce its row space; we have low redundancy.
But, for $A = \left[\mathbf{e}_1, \dots, \mathbf{e}_1\right]^\top$ where $\mathbf{e}_1$ has $1$ in its first coordinate and $0$ in the others, $m_A = n$ since we need to remove all $\mathbf{e}_1$'s to reduce $A$'s row space.
It holds that $1 \leq m_A \leq n$.

We next show that the higher the value that $m_A$ takes, the weaker a coordinate-level adversary becomes due to the increased redundancy. 
Intuitively, the coordinate-level adversary has to spend more budget per sample to introduce corruptions that will counteract the redundancy introduced by matrix $A$. 
Theorem \ref{linear} shows that $\Atwo^\rho, \Athree^\alpha$ cannot alter the original distribution too far in $\dtv$, leading to information-theoretically tight bounds for mean estimation.

\begin{theorem}
\label{linear}
Given two probability distributions $D_1, D_2$ on $\R^n$ with support in the range of linear transformation $A$,
\begin{equation*}
(m_A/n)\cdot \dtv(D_1, D_2)  \le \dentry (D_1, D_2) \le \dtv(D_1, D_2)   
\end{equation*}
\end{theorem}

Here, $D$ with support in the range of $A$ means a distribution $D$ that is generated such that it lies on the subspace generated by $A$, i.e., there is zero measure outside of this subspace. Since $\dtv$ between two Gaussians is asymptotically equivalent to the Mahalanobis distance between them, we get the following corollary using $\dentry$.

\begin{corollary}\label{cor:upperBoundStructure}
Let $\mathcal{N}(\mu, \Sigma)$ be a Gaussian with support in the range of linear transformation $A$. For $\hat{\mu}$ such that $\dentry(\mathcal{N}(\mu, \Sigma),\mathcal{N}(\hat{\mu}, \Sigma)) \le \alpha$, $\|\mu - \hat{\mu}\|_\Sigma = O(\alpha \frac{n}{m_A})$.
\end{corollary}

 \cameraReady{Note that the upper bound above is under the condition that $\alpha < m_A/(2n)$ when the corruption is limited to missing entries, and $\alpha < m_A/(4n)$ when replacement is allowed. Otherwise, more than half of the samples can be corrupted and unrecoverable (the proof of Theorem~\ref{thm:lowerBoundStructure} provides the conditions for recovery, and the break points for mean estimation follow).}
 Corollary \ref{cor:upperBoundStructure} shows that $\Atwo^\rho$ (when $\rho=\alpha$) and $\Athree^\alpha$ can only shift structured distributions by $O(\alpha \frac{n}{m_A})$.  This result suggests that we can improve upon the previous $O(\alpha n)$ mean estimation guarantees. Furthermore, the following theorem proves that this upper bound is tight under $\Athree^\alpha$.

\begin{theorem}\label{thm:lowerBoundStructure}
Let $\mathcal{N}(\mu, \Sigma)$ be a Gaussian with support in the range of linear transformation $A$ and let $\Athree^\alpha$ adversarially corrupt the samples. Any algorithm that outputs a mean estimator $\hat{\mu}$ must satisfy $\|\mu - \hat{\mu}\|_\Sigma = \Omega(\alpha \frac{n}{m_A})$.
\end{theorem}

While our analysis focuses on Gaussian distributions, our analysis framework generalizes to any class of distributions that admits an efficient robust mean estimator under sample-level corruption, e.g. distributions with bounded covariance. This generality stems from our general reduction scheme between coordinate-level and sample-level corruption. Such extension is also feasible for results in Section~\ref{sec:algorithms}.

\section{Efficient Algorithms}\label{sec:algorithms}

We discuss efficient estimation algorithms the match the error bounds of our analysis in Section~\ref{sec:inf_analysis}. Building upon practical approaches in data cleaning~\cite{wu2020att,DBLP:journals/pvldb/RekatsinasCIR17} that leverage the structure in data to perform estimation over corrupted data, we study the practicality and theoretical guarantees obtained by such recover-and-estimate approaches for robust mean estimation.

\newparagraph{Two-Step Meta-Algorithm}
The meta-algorithm has two steps: 1) \emph{Recover}: use the dependencies across coordinates of the data (i.e., the structure) to recover the values of corrupted samples (when possible); 2) \emph{Estimate}: After fixing corruptions, perform statistical estimation using an existing mean estimation method (e.g., empirical mean estimation). For theoretical analysis, we require {\em exact recovery} in the first step, i.e., we seek to recover the true sample before corruption without any errors, and view the samples that cannot be exactly recovered as the remaining corruptions.

We first show that when the corruption is limited to missing entries, there exist efficient algorithms with polynomial complexity that perform exact recovery. We show that plugging these algorithms into the two-step meta-algorithm yields practical algorithms for robust mean estimation under coordinate-level corruption, and these algorithms match the information-theoretic bounds. We further study corruption due to replacements. Here, we show that exact recovery in the presence of coordinate-level corruptions is NP-hard by building connections to sparse recovery. To overcome such hardness, we propose a randomized algorithm with a probabilistic guarantee with respect to the recovery, and as such, mean estimation.  

\subsection{Efficient Algorithms for Missing Entries}\label{sec:algorithm_missing}
We show two computationally efficient algorithm instances that achieve near-optimal guarantees for known and unknown structure in the presence of missing entries. \cameraReady{Details of the recovery step in the two algorithms are provided in the supplementary material.} We assume a sufficiently large sample size (infinite in the case of unknown structure) for all the analysis.



\newparagraph{Mean Estimation with Known Structure} When matrix $A$ is known, we recover missing coordinates as follows: we solve the linear system of equations formed by the non-corrupted data in the sample and $A$ to estimate $z$, and then use this estimation to complete the missing values of $x$. Such a recovery step has a complexity of $O(r^3)$. Given the recovered samples, we proceed with mean estimation. 

The above algorithm achieves error $\Theta(\alpha \frac{n}{m_A})$: the best strategy of $\Atwo^\rho$ or $\Athree^\alpha$ is to corrupt coordinates so that recovery is impossible. To this end, a coordinate-level adversary must corrupt at least $m_A$ coordinates for a sample to make coordinate recovery impossible. 
The two-step approach of recovery by solving a linear system and mean estimation with the Tukey median over full samples is information theoretically-optimal. However, the Tukey median is computationally intractable, and we use the empirical mean to obtain a computationally efficient algorithm. This approach yields a near-optimal guarantee of $\tilde{O}(\frac{\alpha n}{m_A})$ that is tight up to logarithmic factors---here $\tilde{O}(\epsilon) = O(\epsilon \sqrt{\log(1/\epsilon)})$.

\begin{theorem}
\label{metaAlgThm}
Assume samples $x_i = Az_i$ and $z_i$ comes from a Gaussian such that $x_i\sim \mathcal{N}(\mu, \Sigma)$ with support in the range of linear transformation $A$. Given a set of i.i.d. samples corrupted by $\mathcal{A}_2^\rho$ (when $\rho=\alpha$) or $\Athree^\alpha$, recover missing coordinates by solving a linear system of equations then discard all unrecoverable samples. The empirical mean $\hat{\mu}$ of the remaining samples satisfies $\|\hat{\mu} - \mu\|_{\Sigma} = \tilde{O}(\frac{\alpha n}{m_A})$, while the Tukey median $\hat{\mu}_{\text{Tukey}}$ of the remaining samples satisfies $\|\hat{\mu}_{\text{Tukey}} - \mu\|_{\Sigma} = O(\frac{\alpha n}{m_A})$.
\end{theorem}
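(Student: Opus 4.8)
The plan is to turn the coordinate-level missing-data problem, once the recovery step is done, into a standard \emph{sample-level} (subtractive) corruption problem, and then invoke robustness guarantees for the empirical mean and the Tukey median under removal of an $\epsilon$-fraction of Gaussian samples, with the effective rate $\epsilon := \alpha n / m_A$. The degenerate covariance (since $x_i$ is supported on the range of $A$) is handled by interpreting $\|\cdot\|_\Sigma$ through the pseudoinverse restricted to $\mathrm{range}(A)$.

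\textbf{Step 1 (recoverability $=$ full rank of surviving rows).} First I would characterize exactly when a sample can be recovered. Since $x_i = A z_i$, the surviving (non-missing) coordinates indexed by $S$ give the system $A_S z_i = (x_i)_S$, where $A_S$ is the submatrix of the rows of $A$ in $S$. This determines $z_i$, and hence the full $x_i = A z_i$, uniquely iff $A_S$ has rank $r$, i.e. the surviving rows still span the row space of $A$. By the definition of $m_A$ as the minimum number of rows whose removal drops the row-space dimension, the adversary must delete at least $m_A$ coordinates of a sample before its surviving rows fail to span, that is, before recovery can fail. So every sample the algorithm discards costs the adversary at least $m_A$ corruptions.

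\textbf{Step 2 (budget $\Rightarrow$ bounded discard fraction).} Both $\Atwo^\rho$ (with $\rho=\alpha$) and $\Athree^\alpha$ spend at most $\alpha n$ corruptions per sample on average, i.e. a total budget of at most $\alpha n N$ over the $N$ samples. Combined with Step 1, the number of unrecoverable (discarded) samples is at most $\alpha n N / m_A$, so the discarded fraction is at most $\epsilon = \alpha n / m_A$. The point to stress is that every \emph{surviving} sample is recovered exactly, so after recovery the kept set is an adversarially chosen $(1-\epsilon)$-fraction of genuine i.i.d.\ draws from $\mathcal{N}(\mu,\Sigma)$; this is precisely subtractive sample-level corruption at rate $\epsilon$.

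\textbf{Step 3 (mean estimation under subtractive corruption).} It then remains to bound each estimator under removal of an adversarial $\epsilon$-fraction of Gaussian samples. For the Tukey median I would cite its standard robustness: the point of maximum Tukey depth lies within Mahalanobis distance $O(\epsilon)$ of $\mu$ under any $\epsilon$-mass perturbation of the Gaussian (removal included), giving $\|\hat{\mu}_{\text{Tukey}}-\mu\|_\Sigma = O(\alpha n / m_A)$. For the empirical mean, the shift is $\frac{1}{|S|}\sum_{i\in S}(x_i-\mu)$, and the worst adversary removes, along some direction $v$, the top-$\epsilon$ tail of $v^T(x_i-\mu)$. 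Its magnitude is governed by the conditional expectation of a standard Gaussian beyond its $(1-\epsilon)$-quantile, which is $\Theta(\epsilon\sqrt{\log(1/\epsilon)})$; a uniform bound over directions via Gaussian concentration (for $N$ large) yields $\|\hat{\mu}-\mu\|_\Sigma = \tilde{O}(\epsilon) = \tilde{O}(\alpha n / m_A)$, the extra $\sqrt{\log(1/\epsilon)}$ reflecting exactly the tail mass that the empirical mean, unlike the depth-based Tukey median, is sensitive to.

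\textbf{Main obstacle.} The conceptual heart is Steps 1--2, tying the coordinate budget to the effective sample rate through $m_A$; this is clean linear algebra plus counting. The genuinely technical piece is the Gaussian-truncation estimate for the empirical mean in Step 3, which is where the $\sqrt{\log(1/\epsilon)}$ factor (and hence the $\tilde{O}$ rather than $O$) arises and where the only real work lies---making the truncation bound uniform over all directions and confirming it in the regime $\epsilon = \alpha n / m_A < 1/2$ where these robustness guarantees are meaningful. The Tukey bound I would simply invoke as a known result.
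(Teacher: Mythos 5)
Your proposal is correct and follows essentially the same route as the paper's proof: charge the adversary at least $m_A$ corrupted coordinates per unrecoverable sample to bound the discarded fraction by $\epsilon = \alpha n/m_A$, then treat the surviving set as a subtractive $\epsilon$-corruption of the Gaussian and invoke the $O(\epsilon)$ Tukey-median guarantee and the $O(\epsilon\sqrt{\log(1/\epsilon)})$ shift of the empirical mean under $\epsilon$-tail removal. Your write-up is in fact somewhat more explicit than the paper's (e.g., the rank characterization of recoverability and the direction-uniform truncation argument), but the decomposition and the key quantities are identical.
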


Theorem \ref{metaAlgThm} shows that for $A$ with $m_A \approx n$, while the strong adversary $\Athree^\alpha$ introduces corruptions that shift the observed distribution by $\dentry = \alpha$, it can only affect the mean estimation as much as the weaker adversary $\Aone^\epsilon$ (with $\epsilon = \alpha$), which shifts the observed distribution only by $\dtv = \epsilon$.
This result implies that recovery by leveraging the structure reduces the strength of $\Athree^\alpha$ (and also $\Atwo^\rho$ with $\rho = \epsilon$) to that of $\Aone^\epsilon$. In fact, $m_A= n-r+1$ for almost every $A$ with respect to the Lebesgue measure on $\R^{n\times r}$, so $m_A \approx n$ when $r$ is sufficiently low-dimensional. The above means that we can tolerate coordinate-level corruptions with large $\rho$ and $\alpha$ only if we first recover and then estimate.

\newparagraph{Mean Estimation with Unknown Structure} If $A$ is unknown, we can estimate it using the visible entries before we use them to impute the missing ones. 
We build on the next result: matrix completion can help robust mean estimation in the setting of $x_i = Az_i$ when $A$ is unknown but has full rank, in which case $m_A = n-r+1$. 
Corollary 1 by ~\citet{matrixCompletion} gives the conditions in which we can uniquely recover a low rank matrix with missing entries.
This result goes beyond random missing values and considers deterministic missing-value patterns. We state it as follows:
\begin{lemma}
\label{lemmaNowak}
Assume samples $x_i = Az_i$ and $z_i$ comes from a Gaussian such that $x_i\sim \mathcal{N}(\mu, \Sigma)$ with support in the range of $A$, but $A$ is unknown and full rank. 
If there exist $r+1$ disjoint groups of $n-r$ samples, and in each group, any $k$ samples have at least $r+k$ dimensions which are not completely hidden, all the samples with at least $r$ visible entries can be uniquely recovered.
\end{lemma}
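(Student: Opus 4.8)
The plan is to reduce the recovery problem to a deterministic low-rank matrix completion instance, invoke Corollary 1 of \citet{matrixCompletion} to recover the column space, and then extend its guarantee from the structured groups to every sample with at least $r$ visible entries. First I would stack the relevant samples as columns of a data matrix $X = [x_1, \dots, x_N]$. Since each $x_i = A z_i$ lies in the range of $A$ and $A$ has rank $r$, the matrix $X$ has rank at most $r$; writing $X = A Z$ with $Z \in \R^{r \times N}$ makes this explicit (note that the mean causes no affine offset, as $x_i$ is genuinely a linear image of $z_i$). The adversary's missing entries define a sampling pattern $\Omega$ of the visible coordinates, so recovering the hidden coordinates is exactly the problem of completing the rank-$r$ matrix $X$ from its entries on $\Omega$.

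Next I would verify that the combinatorial hypothesis of the lemma is precisely the sufficient condition of Corollary 1. The assumption of $r+1$ disjoint groups of $n-r$ samples, where in each group any $k$ samples touch at least $r+k$ not-completely-hidden dimensions, is a Hall-type expansion condition on the bipartite (dimension, sample) graph induced by $\Omega$. This expansion guarantees that the observations over each group over-determine the $r$-dimensional column space $U = \mathrm{range}(A)$, and aggregating the $r+1$ groups pins $U$ down uniquely; I would apply the cited corollary to conclude that the used submatrix, and hence $U$, is uniquely determined by the visible entries.

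A subtlety I must address is that the cited characterization holds for \emph{generic} rank-$r$ matrices, whereas our $X$ is random. Here I would argue genericity almost surely: the exceptional set on which the deterministic completability criterion fails is an algebraic variety of measure zero in the parameter space of factorizations $(A,Z)$. Since $A$ is full rank and the latent vectors $z_i$ are drawn from a non-degenerate, hence absolutely continuous, Gaussian, the pair $(A,Z)$ lands in this null set with probability zero, so the hypotheses of Corollary 1 are met with probability $1$.

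The remaining step, upgrading from the structured groups to \emph{every} sample with at least $r$ visible entries, is where the main work lies. Once $U = \mathrm{range}(A)$ is known, I can recover $A$ up to a change of basis, and for any sample $x$ with at least $r$ visible coordinates the corresponding $r$ rows of $A$ are linearly independent for generic $A$, so the visible entries yield a square, invertible linear system for the latent $z$; solving it and setting $x = A z$ fills in every hidden coordinate. The hardest part will be ensuring this two-phase decoupling is valid: that recovering the subspace from the groups is genuinely prior to and independent of imputing the individual columns, and that the genericity used in the per-column inversion is the same almost-sure event established for the global completion. Confirming that the expansion condition in each group forces \emph{unique} (not merely rank-preserving) completability, and that no measure-zero coincidence among a sample's visible coordinates degenerates its inversion, are the technical points I would check most carefully.
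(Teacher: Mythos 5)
Your plan coincides with what the paper actually does: the paper gives no proof of this lemma at all, presenting it as a verbatim restatement of Corollary~1 of \citet{matrixCompletion}, which is exactly the result you invoke, and the step of completing every column with at least $r$ visible entries once the $r$-dimensional subspace is identified is already part of that corollary's conclusion. The one point to tighten is your genericity argument: the cited guarantee holds for almost every subspace, but here $A$ is a \emph{fixed} full-rank matrix, so the almost-sure statement must be taken over the Gaussian draw of the $z_i$ (and, strictly, a genericity assumption on $A$ itself) rather than over the pair $(A,Z)$ as if both were random --- a caveat the paper also leaves implicit.
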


The above leads to the next algorithm: We recover the missing coordinates via matrix completion. A typical algorithm for matrix completion is iterative hard-thresholded SVD (ITHSVD)~\cite{ithsvd}, which has a complexity of $O(TNnr)$, where $T$ is the maximum number of iterations and $N$ is the sample size. Then, we use either Tukey median or a empirical mean estimation. We next analyze the guarantees of this algorithm.

Matrix completion requires learning the $r$-dimensional subspace spanned by the samples. 
Samples with more than $r$ visible entries provide information to identify this subspace. Given the subspace, samples with at least $r$ visible entries can be uniquely recovered. For corruptions by $\Athree^\alpha$ we show:
\begin{lemma}
\label{lemma:metaUnknownA3}
If $A$ is unknown and the data is corrupted by $\Athree^\alpha$ where $\alpha \geq \frac{1}{n}$, we cannot recover any missing coordinate, otherwise we can recover all the samples with less than $m_A$ missing entries.  
\end{lemma}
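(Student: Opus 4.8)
The plan is to prove the two directions separately, with the threshold $\alpha = 1/n$ understood as the exact point at which $\Athree^\alpha$ can afford to delete one whole coordinate across the entire sample. I would first record that the total budget of $\Athree^\alpha$ is $\alpha N n$ hidden coordinates (with $N$ the number of samples), so $\alpha \geq 1/n$ gives a budget of at least $N$. For the impossibility direction I would have the adversary spend exactly $N$ of this budget to hide one fixed coordinate $j^\star$ in every sample. The heart of the argument is then an \emph{indistinguishability} construction exploiting that $A$ is unknown: I would exhibit two full-rank matrices $A$ and $A'$ that agree in all rows except the $j^\star$-th, sharing a common latent law $D_z$. Since coordinate $j^\star$ is never observed, the pairs $(A, D_z)$ and $(A', D_z)$ induce identical joint distributions on the visible $n-1$ coordinates, yet assign different values $A_{j^\star} z_i \neq A'_{j^\star} z_i$ to the hidden entry of almost every sample. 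Hence no estimator can pin down the missing entry of any sample, and since under this strategy every missing entry lies in coordinate $j^\star$, no missing coordinate is recoverable.

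For the recovery direction ($\alpha < 1/n$) the key quantitative observation is that the total budget $\alpha N n$ is now strictly below $N$, so the adversary cannot place at least one corruption in every sample; consequently at least $N(1-\alpha n) > 0$ samples stay completely clean, a count that tends to infinity as the sample size grows. I would then use these clean samples to instantiate Lemma~\ref{lemmaNowak} with the easiest possible visibility pattern: select $(r+1)(n-r)$ fully uncorrupted samples and split them into $r+1$ disjoint groups of size $n-r$. In each group every subset of $k$ samples has all $n \geq r+k$ coordinates visible, so the ``at least $r+k$ not-completely-hidden dimensions'' hypothesis holds trivially. Lemma~\ref{lemmaNowak} then certifies that the $r$-dimensional subspace is identified and that every sample with at least $r$ visible entries---equivalently, fewer than $m_A = n-r+1$ missing entries---is uniquely recoverable.

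I expect the impossibility direction to be the main obstacle, specifically making the indistinguishability airtight. I must verify that perturbing a single row of $A$ preserves full rank (true for almost every perturbation with respect to the Lebesgue measure, so a valid $A'$ certainly exists) and that the perturbed pair $(A', D_z)$ yields a bona fide Gaussian supported on the range of $A'$ whose marginals over the visible coordinates coincide exactly with those of $(A, D_z)$. By contrast, the recovery direction should be comparatively routine once the clean-sample count is pinned down, since it reduces to feeding the already-established Lemma~\ref{lemmaNowak} a group structure built entirely from uncorrupted samples.
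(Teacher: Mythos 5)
Your proposal is correct and follows essentially the same route as the paper's proof: for $\alpha \ge 1/n$ the adversary hides one coordinate entirely, and for $\alpha < 1/n$ a positive fraction of fully clean samples remains, which (with infinitely many samples) trivially satisfies the hypotheses of Lemma~\ref{lemmaNowak}. The only difference is that you spell out the indistinguishability argument (two matrices $A$, $A'$ differing in the hidden row) that the paper leaves implicit in its one-line claim that a fully hidden coordinate cannot be recovered when $A$ is unknown; this is a welcome elaboration rather than a different approach.
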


If we combine this lemma with Theorem~\ref{metaAlgThm}, we can obtain optimal mean estimation error---we obtain the same error guarantees with Theorem~\ref{metaAlgThm}---using matrix completion only when the budget of $\Athree^\alpha$ is bounded as a function of the data dimensions, i.e., when $\alpha < 1/n$ (see A.11 in the supplementary material). These guarantees are information-theoretically optimal but pessimistic: $\Athree^\alpha$ can hide all coordinates from the same dimension which can be unrealistic. Thus, we focus on adversary $\Atwo^\rho$.





\begin{theorem}
\label{thm:metaUnknownA2}
Assume samples $x_i = Az_i$ and $z_i$ comes from a Gaussian such that $x_i\sim \mathcal{N}(\mu, \Sigma)$ with support in the range of $A$, but $A$ is unknown and full rank. Under $\Atwo^\rho$ where $\rho < \frac{m_A-1}{n+(m_A-1)(m_A-2)}$, the above two-step algorithm obtains $\|\mu - \hat{\mu}\|_{\Sigma} = \tilde{O}(\frac{\rho n}{m_A})$, while the Tukey median $\hat{\mu}_{\text{Tukey}}$ of the remaining samples satisfies $\|\hat{\mu}_{\text{Tukey}} - \mu\|_{\Sigma} = O(\frac{\rho n}{m_A})$.
\end{theorem}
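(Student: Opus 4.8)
The overall strategy is a reduction to the known-structure setting of Theorem~\ref{metaAlgThm}. Although $A$ is unknown, the plan is to show that under the stated budget $\rho < \frac{m_A-1}{n+(m_A-1)(m_A-2)}$ the matrix-completion step provably recovers the $r$-dimensional column space of $A$, after which recovering an individual sample reduces to solving a linear system exactly as in the known-$A$ case. Once the subspace is pinned down, the estimation-error analysis becomes identical to that of Theorem~\ref{metaAlgThm}: every recovered sample equals its true pre-corruption value, so the only loss comes from the fraction of samples that remain unrecoverable and are discarded. I would therefore split the argument into (i) a recovery/identifiability step and (ii) an estimation step, with the threshold on $\rho$ entirely absorbed into step (i).

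For the recovery step I would invoke Lemma~\ref{lemmaNowak} with group size $n-r = m_A-1$, whose hypotheses require $r+1$ disjoint groups of $m_A-1$ samples such that, within each group, every sub-collection of $k$ samples has at least $r+k$ coordinates that are visible in at least one of the $k$ samples (not completely hidden). I would translate this into a covering/Hall-type condition on the binary missingness patterns: the $k=1$ case forces each selected sample to have at least $r+1$ visible entries, while the $k=m_A-1$ case forces each group to jointly cover all $n$ coordinates. The task is then to show such groups exist given only $\Atwo^\rho$'s total budget of (on average) $\rho n$ missing entries per sample, using a counting/pigeonhole argument over the effectively infinite sample pool. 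The threshold rearranges to $\rho n < (m_A-1)\bigl(1-\rho(m_A-2)\bigr)$, which I expect is precisely the bound guaranteeing that enough samples with few enough missing coordinates exist and can be packed into the required groups without any coordinate being commonly hidden beyond the allowed slack.

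Next, once the subspace is identified, I would classify samples as in Lemma~\ref{lemma:metaUnknownA3}: a sample with at least $r$ visible entries (equivalently, fewer than $m_A$ missing entries) is uniquely and exactly recovered by solving the induced linear system, whereas a sample with at least $m_A$ missing entries is discarded as unrecoverable. Since $\Atwo^\rho$ must spend at least $m_A$ units of its budget to render a single sample unrecoverable, and its total budget is $\rho n$ per sample on average, at most a $\frac{\rho n}{m_A}$ fraction of samples can be discarded. Writing $\epsilon = \frac{\rho n}{m_A}$, the surviving samples are then exactly i.i.d.\ draws from $\mathcal{N}(\mu,\Sigma)$ with an adversarially chosen $\epsilon$-fraction removed.

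Finally I would conclude exactly as in Theorem~\ref{metaAlgThm}: for a Gaussian, adversarial deletion of an $\epsilon$-fraction of samples shifts the empirical mean by at most $\tilde{O}(\epsilon) = O(\epsilon\sqrt{\log(1/\epsilon)})$ in Mahalanobis distance, while the Tukey median is stable under such deletion and incurs only $O(\epsilon)$; substituting $\epsilon = \frac{\rho n}{m_A}$ yields $\|\mu-\hat{\mu}\|_{\Sigma} = \tilde{O}(\frac{\rho n}{m_A})$ and $\|\mu-\hat{\mu}_{\text{Tukey}}\|_{\Sigma} = O(\frac{\rho n}{m_A})$. The main obstacle is the recovery step: verifying that the deterministic sampling conditions of Lemma~\ref{lemmaNowak} are met under $\Atwo^\rho$ is what forces the precise, somewhat opaque form of the threshold on $\rho$, and it must be done carefully because the groups must satisfy the condition for \emph{every} $k$ simultaneously. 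By contrast, the discarded-fraction count and the final Gaussian estimation bound are comparatively routine given Theorem~\ref{metaAlgThm}.
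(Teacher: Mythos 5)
Your decomposition matches the paper's: the theorem is proved by combining a recovery lemma (the paper's Lemma~\ref{lm:recovery_rho}, which shows that under $\rho < \frac{m_A-1}{n+(m_A-1)(m_A-2)}$ the hypotheses of Lemma~\ref{lemmaNowak} can be met and hence every sample with at least $r$ visible entries is exactly recovered) with the estimation argument of Theorem~\ref{metaAlgThm}. Your estimation step is correct and complete: the adversary must hide at least $m_A$ entries of a sample to make it unrecoverable, the per-sample average budget is $\rho n$, so at most an $\epsilon = \frac{\rho n}{m_A}$ fraction of samples is discarded, and the Tukey-median / empirical-mean bounds for an $\epsilon$-shift in total variation give $O(\epsilon)$ and $\tilde{O}(\epsilon)$ respectively. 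Your reading of Lemma~\ref{lemmaNowak}'s conditions (group size $n-r=m_A-1$, the $k=1$ and $k=n-r$ extremes) is also correct.

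The gap is that the recovery step --- which you correctly identify as the crux and the source of the threshold --- is not actually carried out; you only conjecture that the threshold ``rearranges to'' the right packing condition. The paper's argument is not a direct packing/pigeonhole count over samples but a two-part argument over \emph{hidden patterns} (the distinct sets of missing coordinates, each occurring with some probability $p_l$). First, a greedy procedure selects one sample per pattern, adding a pattern only if it contributes a new visible coordinate, and an induction on the group size shows the selected group satisfies the Lemma~\ref{lemmaNowak} condition for \emph{every} $k$ simultaneously --- this is the part you flag as delicate, and it does require the explicit inductive construction rather than just checking $k=1$ and $k=n-r$. Second, if the adversary defeats this procedure, the skipped patterns must share a common hidden coordinate, so their total probability is at most $\rho$; each picked pattern also has probability at most $\rho$ (it hides at least one coordinate), and at most $n-r-1$ patterns are picked, giving $\sum_l p_l \le (n-r)\rho$. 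Feeding this into the averaging inequality $\rho \ge \eta \ge \sum_l p_l \cdot \frac{1}{n} + \bigl(1-\sum_l p_l\bigr)\frac{n-r}{n}$ (samples outside these patterns have fewer than $r+1$ visible entries, hence at least $n-r$ missing ones) yields exactly $\rho \ge \frac{m_A-1}{n+(m_A-1)(m_A-2)}$ as a necessary condition for the adversary to succeed. Without this pattern-level accounting --- in particular the observation that failure forces a commonly hidden coordinate, which is what makes the budget bind --- the threshold does not follow from a generic counting argument, so this step cannot be treated as routine.
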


\begin{lemma}\label{lm:recovery_rho}
Assume $A$ is unknown and full rank. Under $\Atwo^\rho$, if $\rho \geq \frac{m_A-1}{n}$, we cannot recover any corrupted sample; if $\rho < \frac{m_A-1}{n+(m_A-1)(m_A-2)}$, we can recover all samples with at least $r$ visible entries.  
\end{lemma}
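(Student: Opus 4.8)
The plan is to prove the two directions separately, in both cases reducing the recovery question to identifiability of the range of $A$ and then to the deterministic matrix-completion condition of Lemma~\ref{lemmaNowak}. Throughout I use that, since $A$ is full rank, $m_A = n-r+1$, so the two stated thresholds are exactly $\frac{n-r}{n}$ and $\frac{n-r}{n+(n-r)(n-r-1)}$; as a sanity check they coincide when $n-r=1$ and separate only once $n-r\ge 2$.

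For the impossibility direction ($\rho \ge \frac{m_A-1}{n}$), first observe that $\rho n \ge m_A-1 = n-r$ lets $\Atwo^\rho$ hide $n-r$ coordinates in \emph{every} sample while spreading the hidden entries across the $n$ dimensions, so that no single coordinate is hidden in too large a fraction of samples. Each sample is then left with at most $r$ visible coordinates. The crux is to argue that this makes the range of $A$ unidentifiable: a sample observed on only $r$ coordinates imposes $r$ linear constraints on the pair $(A,z_i)$, which is never enough to pin down an $r$-dimensional subspace of $\R^n$ (the set of such subspaces has dimension $r(n-r)>0$). I would make this concrete by exhibiting two distinct full-rank matrices $A\ne A'$, together with latent vectors, so that the observed entries have identical distributions under both instances yet the completions of some corrupted sample differ; since no estimator can distinguish the two instances, no corrupted sample can be recovered.

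For the recovery direction ($\rho < \frac{m_A-1}{n+(m_A-1)(m_A-2)}$), the plan is to verify the hypothesis of Lemma~\ref{lemmaNowak}: produce $r+1$ disjoint groups of $n-r$ samples in which any $k$ samples jointly reveal at least $r+k$ coordinates, equivalently the set of coordinates hidden in all $k$ of them has size at most $(n-r)-k$. Two facts drive this. First, the stated bound forces $\rho n < n-r$, so every sample hides at most $n-r-1$ coordinates, which already gives the $k=1$ requirement. Second, I would \emph{select} the groups from the (infinite) sample pool so that the hidden sets inside each group have small pairwise overlap; because $\Atwo^\rho$ prevents the adversary from hiding any fixed coordinate in more than a $\rho$-fraction of the samples, there is enough diversity to carry out this selection. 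The exact denominator $n+(m_A-1)(m_A-2)$ is what I expect to emerge from a double-counting: placing $n-r$ hidden sets of size $\rho n$ into $n$ dimensions with controlled overlap, the total pairwise overlap scales like $\binom{n-r}{2}\cdot\frac{(\rho n)^2}{n}$, and forcing every $k$-wise common-hidden count below $(n-r)-k$ yields a constraint of the form $\rho n\,(n+(m_A-1)(m_A-2)) < (m_A-1)n$. Once the groups exist, Lemma~\ref{lemmaNowak} uniquely identifies the range of $A$, after which any sample with at least $r$ visible coordinates is completed by solving the (generically full-rank) linear system $x_\Omega = A_\Omega z$ on its visible set $\Omega$.

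The hard part will be the combinatorial selection in the recovery direction: I must satisfy the Hall-type surplus condition for \emph{all} $k$ simultaneously, and do so across $r+1$ \emph{disjoint} groups at once, while matching the precise quadratic term in the threshold rather than a loose constant. Getting the overlap bookkeeping tight enough that the sufficient condition fires exactly at $\rho = \frac{m_A-1}{n+(m_A-1)(m_A-2)}$, and checking that the impossibility construction meets the matching threshold $\frac{m_A-1}{n}$, is where the real work lies; the remaining linear algebra is routine.
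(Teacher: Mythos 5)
Your impossibility direction matches the paper's (the paper also has the adversary hide $m_A-1=n-r$ entries of every sample, spread evenly so each column stays within its $\rho$-budget, and then asserts that $r$ visible entries per sample cannot identify the structure); your plan to exhibit two indistinguishable instances would actually make that step more rigorous than the paper does. The recovery direction, however, has a genuine gap, and it starts with a mis-modeling of the adversary. Under $\Atwo^\rho$ the budget is per \emph{coordinate} (at most a $\rho$-fraction of the values in each dimension may be hidden), so $\rho n < n-r$ bounds only the \emph{average} number of hidden entries per sample, not the per-sample count: the adversary can hide all $n$ coordinates of some samples while respecting every column budget. Hence your claim that ``every sample hides at most $n-r-1$ coordinates,'' which you use to dispose of the $k=1$ case and to justify treating the hidden sets as sets of size $\rho n$, is false. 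This also undermines the double-counting: a bound on total pairwise overlap of the form $\binom{n-r}{2}(\rho n)^2/n$ presumes both a per-sample size bound and near-random placement, neither of which an adversary must respect, and in any case controlling pairwise overlaps does not yield the $k$-wise surplus condition of Lemma~\ref{lemmaNowak} for all $k$ simultaneously.

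The paper's argument avoids all of this by working with \emph{hidden patterns} (the distinct missing-coordinate masks, each carrying some probability mass $p_l$) rather than with overlaps. It greedily builds one group: start from a pattern with at least $r+1$ visible coordinates, and repeatedly add a sample from any pattern that exposes at least one coordinate not yet in the running visible set. A simple induction shows that any $k$ samples so chosen expose at least $r+k$ coordinates, which is exactly the Nowak condition (and disjoint copies of the group come for free from infinitely many samples per pattern). The adversary defeats this only if the procedure stalls with at most $n-r-1$ picked patterns, in which case some coordinate is hidden in \emph{every} skipped pattern, so the skipped patterns have total mass at most $\rho$, each picked pattern likewise has mass at most $\rho$, and the patterns with at least $r+1$ visible entries carry total mass at most $(n-r)\rho$. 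All remaining samples then have at least $n-r$ missing entries each, which forces the overall missing fraction above $\rho$ unless $\rho \ge \frac{m_A-1}{n+(m_A-1)(m_A-2)}$ --- this budget contradiction, not an overlap estimate, is where the exact denominator comes from. If you want to salvage your route, you would need to replace the pairwise-overlap bookkeeping with some analogue of this pattern-mass accounting; as written, the selection step cannot be carried out.
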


\subsection{Discussion of Corruptions due to Replacements}
The above two-step approach relies on exact recovery to provide guarantees for mean estimation. As we showed in the previous section, exact recovery is possible in the presence of missing values. However, when corruptions are introduced due to adverarial replacements, recovery is NP-hard in general. We show this by reducing the problem of sparse recovery to it. Sparse recovery is the problem of finding sparse solutions to underdetermined systems of linear equations, and it is shown to be computationally intractable by~\citet{codingIntractability}. Details of the connection between the two can be found in the supplementary material.

Works on decoding and signal processing have proposed efficient algorithms for exact recovery under replacements with probabilistic guarantees. Typical algorithms include basis pursuit~\cite{decodingLP} and orthogonal matching pursuit~\cite{davenport2010analysis}. However, these algorithms pose strict conditions on $A$, and have guarantees only on a limited family of matrices (e.g. Gaussian matrices). We defer the details of those conditions and guarantees to the supplementary material. For a generic $A$, as the matrices we consider in our problem setting, existing algorithms fail to provide any guarantee.

To alleviate the difficulty of recovery for replacements, we propose a randomized algorithm that has probabilistic guarantees for exact recovery without posing strict restrictions on matrix $A$. We fit this algorithm into the aforementioned two-step meta-algorithm, and provide an analysis similar to the case of missing entries for known structure.

\begin{algorithm}
\caption{Recovery for Coordinate-level Replacements}
\label{alg:approx}
\SetAlgoLined
 \textbf{input:} $A\in \R^{n \times r}$, corrupted sample $\tilde{x} \in \R^n$, $c>0$\\
 \If{$\exists z$ such that $\tilde{x}=Az$}{\Return $\tilde{x}$}
 \For{$i = 1 \dots r^c$}{
  Uniformly at random, select $r$ out of $n$ rows of $A$ such that they are linearly independent\\
  $(\tilde{x}', A') \gets$ linear system of the corresponding $r$ coordinates\\
  Compute the solution $\hat{z}$ to $\tilde{x}' = A'z$ \\
  Store the $\hat{z}$ that has the smallest $\|\tilde{x}-A\hat{z}\|_0$ so far
  }
  \Return $A\hat{z}$
\end{algorithm}

For a corrupted sample $\tilde{x}$ that does not lie on the subspace generated by $A$, Algorithm~\ref{alg:approx} efficiently recovers a candidate solution $A\hat{z}$ that is not too far from $\tilde{x}$. More precisely, it returns $A\hat{z}$ such that $\|\tilde{x} - A\hat{z}\|_0 \le \frac{r}{\ln(r)}\|\tilde{x} - x\|_0$ with high probability, assuming the original sample $x$ is information-theoretically recoverable (less than $\frac{m_A}{2}$ coordinates are corrupted by the proof of Theorem~\ref{thm:lowerBoundStructure}). Algorithm \ref{alg:approx} requires solving a linear system of size $r$ multiple times, yielding a runtime of $O(nr^{4+c})$, where $c$ is a parameter chosen depending on how accurate we want the solution to be. Since Algorithm \ref{alg:approx} gives us a recovery routine for each corrupted sample, we have the following result on mean estimation.

\begin{theorem}
\label{thm:approx}
Preprocessing the data with Algorithm \ref{alg:approx} and then applying a robust mean estimator for Gaussians, e.g. of Diakonikolas et al. \yrcite{diakonikolas2019robust}, yields a mean estimate $\hat{\mu}$ such that $\|\hat{\mu} - \mu\|_2 = \tilde{O}(\frac{r}{\ln{r}}\cdot \frac{\alpha n }{m_A})$.
\end{theorem}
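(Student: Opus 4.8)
The plan is to reduce the coordinate-level replacement problem, after the preprocessing step, to a standard sample-level robust mean estimation problem, and then invoke the Gaussian estimator of \citet{diakonikolas2019robust} as a black box. Write $x_i = A z_i \sim \mathcal{N}(\mu,\Sigma)$ for the clean samples, $\tilde{x}_i$ for the corrupted samples, and $A\hat{z}_i$ for the output of Algorithm~\ref{alg:approx} on $\tilde{x}_i$. I will call a sample \emph{effectively corrupted} if $A\hat{z}_i \neq x_i$, and I will show that only an $\epsilon' = O(\frac{r}{\ln r}\cdot\frac{\alpha n}{m_A})$ fraction of samples are effectively corrupted, while the remaining preprocessed samples coincide exactly with genuine draws from $\mathcal{N}(\mu,\Sigma)$. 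Feeding the preprocessed dataset to the estimator, which tolerates an $\epsilon'$ fraction of arbitrary (strong-contamination) outliers and returns error $\tilde{O}(\epsilon')$, then yields $\|\hat{\mu} - \mu\|_2 = \tilde{O}(\epsilon') = \tilde{O}(\frac{r}{\ln r}\cdot\frac{\alpha n}{m_A})$.

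The core of the argument is a per-sample accounting that turns the $\ell_0$ recovery guarantee of Algorithm~\ref{alg:approx} into a bound on the number of effectively corrupted samples. First I would observe that $\mathrm{range}(A)$ behaves like a linear code of minimum distance $m_A$: a nonzero $v = Aw$ vanishes exactly on the rows of $A$ orthogonal to $w$, and if $v$ had fewer than $m_A$ nonzero coordinates then deleting those (fewer than $m_A$) rows would leave $\ge n-m_A+1$ rows lying in $w^{\perp}$, hence of rank $\le r-1$, contradicting the definition of $m_A$ as the minimum number of rows one must remove to reduce the row space. Thus whenever $A\hat{z}_i \neq x_i$ we have $A\hat{z}_i - x_i \in \mathrm{range}(A)\setminus\{0\}$ and so $\|A\hat{z}_i - x_i\|_0 \ge m_A$. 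On the other hand, for any sample carrying fewer than $m_A/2$ corruptions the stated guarantee of Algorithm~\ref{alg:approx} gives $\|\tilde{x}_i - A\hat{z}_i\|_0 \le \frac{r}{\ln r}\|\tilde{x}_i - x_i\|_0$ with high probability, so by the triangle inequality for the Hamming metric,
\[
\|A\hat{z}_i - x_i\|_0 \le \Big(1 + \tfrac{r}{\ln r}\Big)\|\tilde{x}_i - x_i\|_0 = O\Big(\tfrac{r}{\ln r}\Big)\|\tilde{x}_i - x_i\|_0 .
\]
Combining the two displays, an effectively corrupted sample with fewer than $m_A/2$ corruptions must in fact carry $\|\tilde{x}_i - x_i\|_0 = \Omega(\frac{m_A \ln r}{r})$ actual corruptions.

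With this per-sample lower bound I would run a global counting argument against the adversary's total budget $\sum_i \|\tilde{x}_i - x_i\|_0 \le \alpha n N$. The number of effectively corrupted samples among those with fewer than $m_A/2$ corruptions is at most $\alpha n N / \Omega(\frac{m_A \ln r}{r}) = O(\frac{r}{\ln r}\cdot\frac{\alpha n}{m_A} N)$. The samples receiving at least $m_A/2$ corruptions, for which Algorithm~\ref{alg:approx} carries no guarantee, number at most $\alpha n N/(m_A/2) = O(\frac{\alpha n}{m_A} N)$, a strictly lower order term that I simply absorb into the outlier set. This yields the claimed $\epsilon' = O(\frac{r}{\ln r}\cdot\frac{\alpha n}{m_A})$ fraction of sample-level outliers; every remaining preprocessed sample equals its clean $x_i$, so the non-outlier part is a genuine Gaussian sample, and the final error follows by substituting $\epsilon'$ into the $\tilde{O}(\epsilon')$ guarantee of the estimator.

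The step I expect to be the main obstacle is making the ``with high probability'' clause uniform over all $N$ samples, since Algorithm~\ref{alg:approx} only guarantees the per-sample $\ell_0$ bound with a failure probability $\delta(c)$ that decays as the number of trials $r^c$ grows, and a naive union bound costs a factor $N$. I would handle this by choosing the trial exponent $c$ large enough that the samples on which recovery fails can just be counted as additional outliers: as long as their fraction is $O(\frac{r}{\ln r}\cdot\frac{\alpha n}{m_A})$ (passing from the expectation $\delta(c)$ to a high-probability fraction via a Markov/concentration bound), the bound on $\epsilon'$ and hence the final error is unaffected up to constants. A secondary point to verify is that the rank-$r$, degenerate covariance of $x_i$ does not break the black-box estimator; this is addressed by running it within (or after whitening inside) the $r$-dimensional subspace spanned by the recovered samples, so that the $\tilde{O}(\epsilon')$ guarantee transfers to $\|\hat{\mu}-\mu\|_2$.
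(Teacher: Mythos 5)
Your proposal is correct and follows essentially the same route as the paper's proof: both arguments use the minimum-distance property of $\mathrm{range}(A)$ (any two distinct points differ in at least $m_A$ coordinates) together with the $\frac{r}{\ln r}$-approximation guarantee of Algorithm~\ref{alg:approx} to conclude that exact recovery fails only on samples carrying $\Omega(\frac{m_A \ln r}{r})$ corruptions, and then convert the coordinate budget $\alpha n N$ into an $O(\frac{r}{\ln r}\cdot\frac{\alpha n}{m_A})$ fraction of sample-level outliers handed to the black-box Gaussian estimator. Your write-up merely makes explicit the counting step and the two caveats (uniformity of the high-probability guarantee over $N$ samples, and the degenerate covariance) that the paper leaves implicit.
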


Our exploration of computationally efficient algorithms for mean estimation under coordinate-level replacements are preliminary. The randomized algorithm only works when the structure is known, and the guarantee it provides is probabilistic. Finding algorithms with better guarantees and recovery methods when the structure is unknown is an exciting direction for future research.

\cameraReady{\remark{Remark} We assume sufficiently large sample size and ignore the sampling error. In fact, the sample size and the probability when the error bound holds depend on the mean estimator in the \emph{Estimate} step of the meta algorithm. For example, in Theorem~\ref{thm:approx}, if we apply the robust mean estimator for Gaussians by~\citet{diakonikolas2019robust}, with sample size $N=\tilde{\Omega}(\frac{m_A^2 \log^5(1/\delta)}{\alpha^2})$, the upper bound holds with probability $1-\delta$.}

\section{Experiments}\label{sec:experiments}
We compare the two-step recover-and-estimate procedure against standard robust estimators under missing entries in real-world data.
We consider the following methods: 

1) \emph{Empirical Mean:} Take the mean for each coordinate, ignoring all missing entries. 
2) \emph{Data Sanitization:} Remove any samples with missing entries, and then take the mean of the rest of the data. 
3) \emph{Coordinate-wise Median (C-Median):} Take the median for each coordinate, ignoring all missing entries. 
4) \emph{Two-Step method with Matrix Completion (Two-Step-M):} Use iterative hard-thresholded SVD~\cite{ithsvd} to impute missing entries and compute the mean. We use randomized SVD~\cite{randomSVD} to accelerate. 

Here, we do not include the exact recovery method in Theorem~\ref{metaAlgThm} since the structure is unknown in real-world data. In the supplementary material, we provide synthetic experiments showing that the two-step method with exact recovery outperforms structure-agnostic methods by over 50\%.

\blue{In each experiment, we inject missing values by hiding the smallest $\epsilon$ fraction of each dimension, which introduces bias to the mean. It is the worst possible $\Atwo^\rho$ corruption for Gaussian distributions if the estimation is taking the empirical mean or median. Note that the worst $\Athree^\alpha$ corruption for empirical mean or median is hiding the tail of the coordinate with the highest variance. We do not include it in the experiments since such setting yields either zero error (when $\alpha < 1/n$, $n$ is the number of coordinates) or unbounded error (when $\alpha \geq 1/n$) if we apply Two-Step-M for data with linear structure (see Lemma~\ref{lemma:metaUnknownA3}).}

\begin{figure*}[t]
    \centering
    \subfigure[Leaf]{\includegraphics[width=0.33\textwidth]{./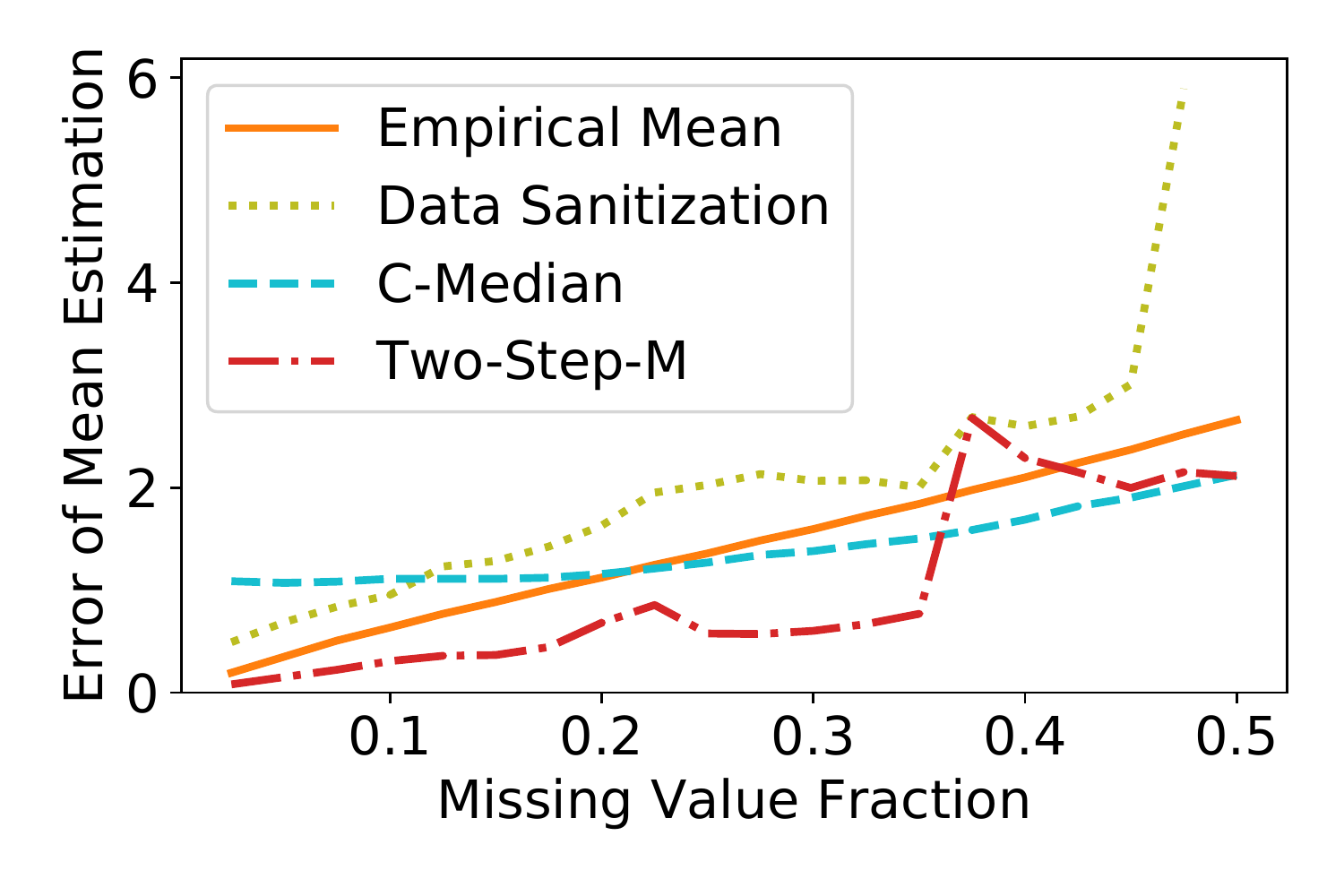}}
    \subfigure[Breast Cancer Wisconsin]{\includegraphics[width=0.33\textwidth]{./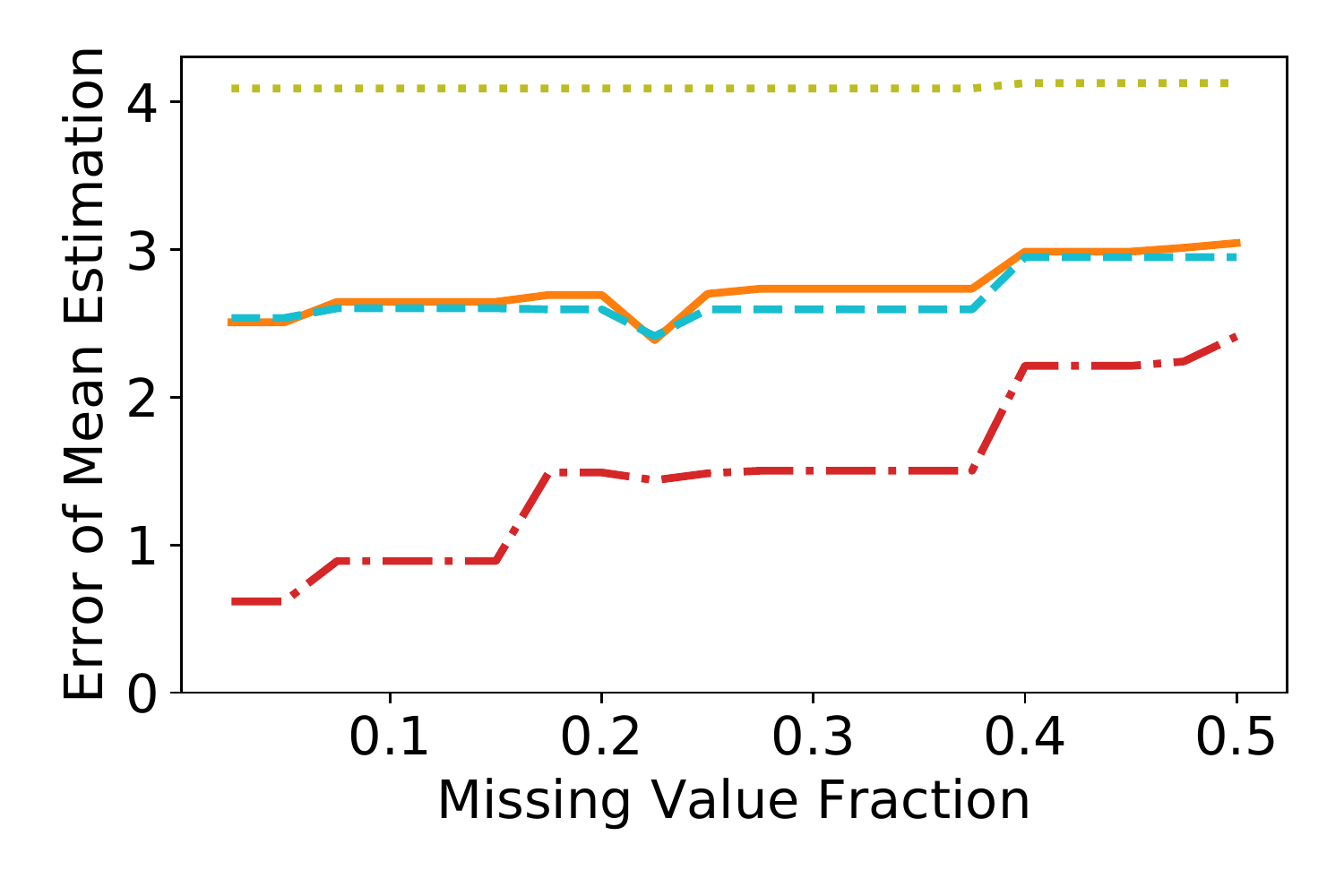}} 
    \subfigure[Blood Transfusion]{\includegraphics[width=0.33\textwidth]{./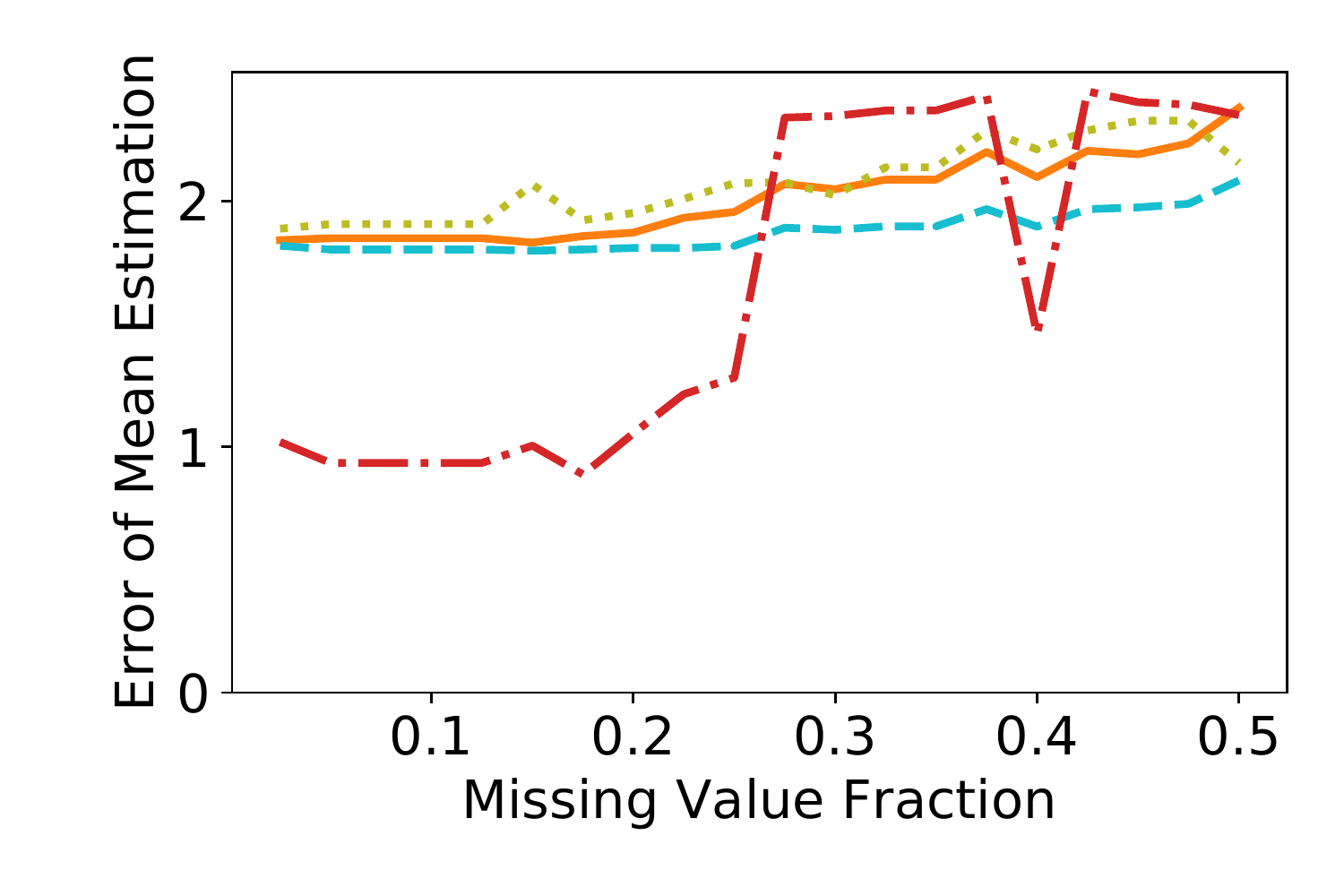}}
    \subfigure[Wearable Sensor]{\includegraphics[width=0.33\textwidth]{./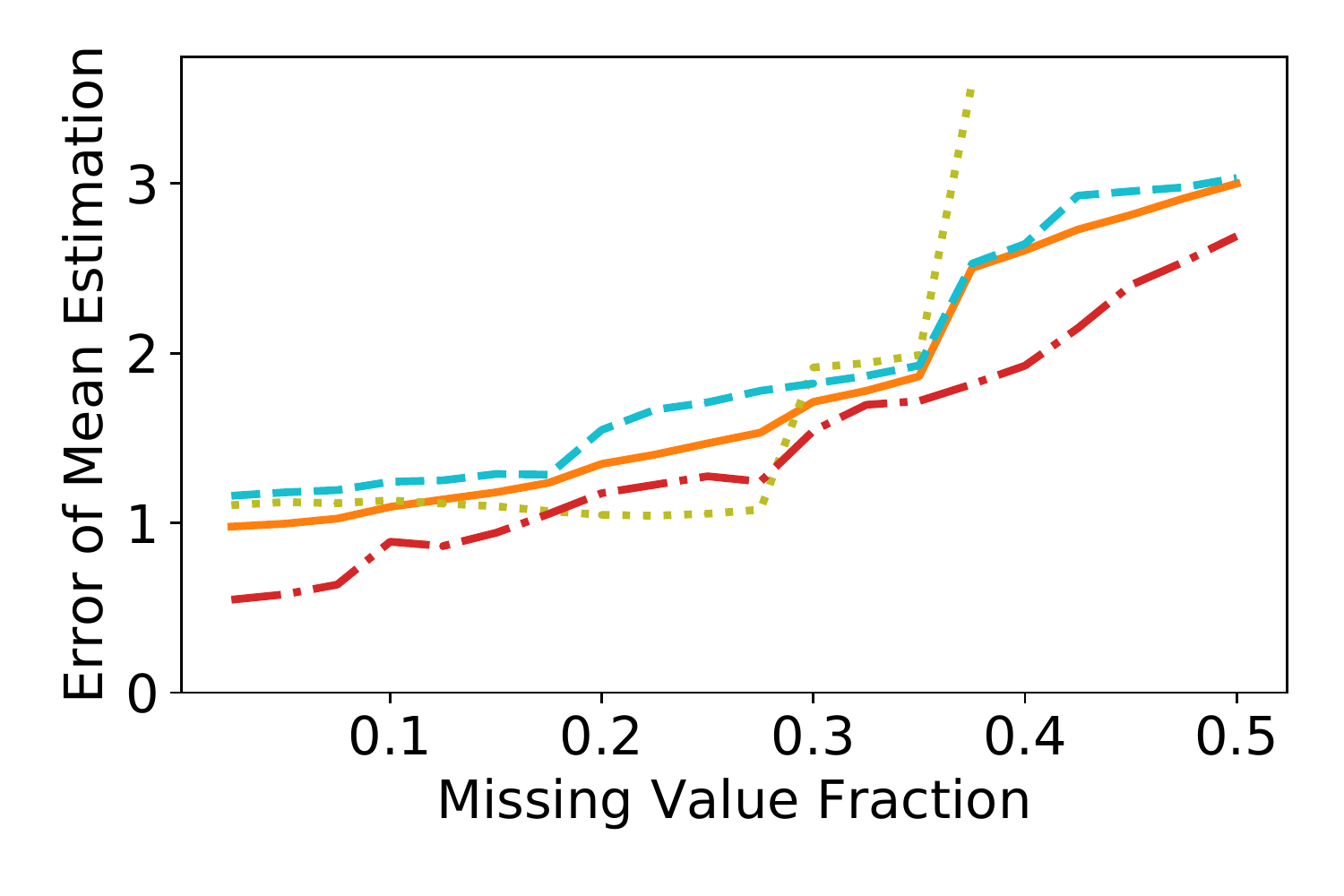}} 
    \subfigure[Mice Protein]{\includegraphics[width=0.33\textwidth]{./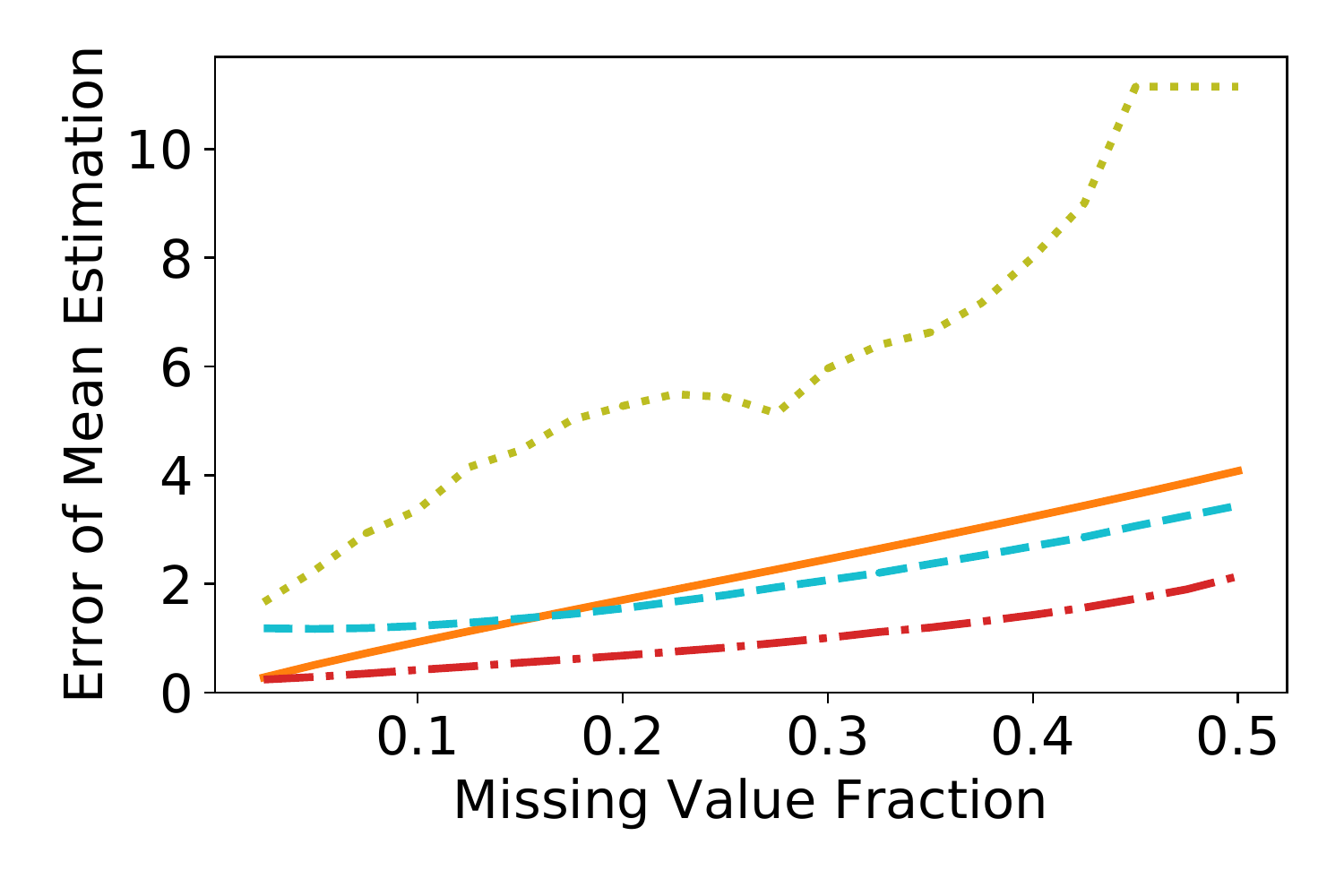}}
    \caption{Error of mean estimation on real-world data sets.}
    \label{fig:realworld}
    \vspace{-10pt}
\end{figure*}   

 We show that exploiting redundancy helps improve the robustness of mean estimation. We consider five data sets with unknown structure from the UCI repository~\cite{UCI}. Detailed information of these datasets is provided in the supplemental material.  For all the data sets, we report the $l_2$ error.  We use the empirical mean of the samples before corruption to approximate the true mean. We show the results in Figure \ref{fig:realworld}.
We find that Two-Step-M always outperforms Empirical Mean and C-Median on Breast Cancer Wisconsin, Wearable Sensor, Mice Protein Expression. 
For Leaf and Blood Transfusion, the error of Two-Step-M can be as much as $2\times$ lower than the error of the other methods for small budgets.
The estimation error becomes high only for large values of $\epsilon$.
Data Sanitization performs worse than Empirical Mean and C-Median.



Note that the error of Two-Step-M is not always increasing as the missing value fraction increases, even for synthetic data (see the supplemental material). This is because the performance of Two-Step-M depends on both the number of samples that can be recovered and the quality of the learned structure. When the missing value fraction is low, the conditions in Lemma~\ref{lemmaNowak} are satisfied, and the structure is learned exactly (for synthetic data) or well approximated (for the real data) via matrix completion. In this case, the error grows monotonically. When the missing value fraction is high, the conditions in Lemma~\ref{lemmaNowak} cannot be satisfied, and thus the quality of the learned structure is not guaranteed. In this case, the learned structure can be accurate or not by chance, and the error may not grow monotonically.

\section{Related Work}\label{sec:related}
Many works have studied problems in robust statistics. These include robust mean and covariance estimation~\cite{lai2016agnostic, daskalakis2018efficient, diakonikolas2019robust, cheng2019faster, kontonis2019efficient, zhu2019generalized}, robust optimization~\cite{charikar2017learning, diakonikolas2018sever, duchi2018learning, prasad2018robust}, robust regression~\cite{huber1973robust, klivans2018efficient, diakonikolas2019efficient, gao2020robust}, robust subspace learning~\cite{maunu2019robust, awasthi2020adversarially}, and computational hardness of robustness~\cite{hardt2013algorithms, klivans2014embedding, diakonikolas2019efficient, hopkins2019hard}.The works that are most relevant to ours focus on:


\newparagraph{Entry-level Corruption} 
~\citet{zhu2019generalized} define a family of non-parametric distributions for which robust estimation is well-behaved under a corruption model bounded by the Wasserstein metric, while we focus on the information-theoretic analysis of mean estimation.
\citet{wang2017robust} studies robust covariance estimation under a corruption model similar to $\Atwo^\rho$. They assume sparse covariance, which is different from our low-dimensional-subspace assumption.
Phocas~\cite{xie2018phocas} performs structure agnostic coordinate-wise estimation without considering recovery under $\Atwo^\rho$-type corruption. \citet{loh2018} study learning a sparse precision matrix of the data under cell-level noise but for a $\dtv$ adversary.

\newparagraph{Data Recovery} State-of-the-art methods for data imputation use data redundancy to obtain high accuracy even for systematic noise. SVD-based imputation methods~\cite{SVDspectral, dnamissing} assume linear relations across coordinates. There are other works that consider different models, including K-nearest neighbors, SVMs, decision trees, and even attention-based mechanisms~\cite{bertsimas2017predictive, wu2020att} to discover more complex non-linear structures. Our theoretical analysis provides intuition as to why these methods outperform solutions that only rely on coordinate-wise statistics.

\newparagraph{Robust Mean Estimators} \cameraReady{Previous works have studied the problem of robust mean estimation, and proposed computationally efficient estimators for high-dimensional data, such as truncated mean~\cite{burdett1996truncated}, geometric median~\cite{cohen2016geometric}, and the iterative filtering algorithm by~\cite{efficientRMEDiak}. Those works are orthogonal to ours, since they study robust mean estimation when there exhibit no structure in the data, while we focus on the effects of structure and recovery. Those estimators can be plugged into the two-step meta-algorithm, and the error bound analysis can be derived following our reduction scheme. }

\section{Conclusion}\label{sec:conclusion}
We studied the problem of robust mean estimation under coordinate-level corruption.
We proposed $\dentry$, a new measure of distribution shift for coordinate-level corruptions and introduced adversary models that capture more realistic corruptions than prior works.
We presented an information-theoretic analysis of robust mean estimation for these adversaries and showed that when the data exhibits redundancy one should first fix corrupted samples before estimation. Our analysis is tight. Finally, we study the existence of practical algorithms for mean estimation that matches the information-theoretic bounds from our analysis.



\section*{Acknowledgements}
The authors would like to thank Nils Palumbo for early discussions on this paper. Also, this work was supported by Amazon under an ARA Award, by NSF under grants IIS-1755676, IIS-1815538 and CCF-2008006.

\bibliography{struct_robust}
\bibliographystyle{icml2021}

\onecolumn

\section{Supplementary Material}\label{sec:proofs}
We now provide the proofs for all results stated in the main body of our work.
We also introduce the connection between recovery for replacements and sparse recovery.
Finally, we provide additional details and figures from our experimental evaluation that were omitted due to space constraints.

\subsection{Proof of Proposition~\ref{lowerBoundAdversaryProposition}}
\begin{proof}
    Suppose that $\Athree$ affects at least one entry in a subset $S$ of all samples. As at least one coordinate per sample is corrupted, $S$ must be at most an $\alpha$-fraction of all samples; since $\alpha \leq \epsilon / n$ the sample-level adversary can corrupt the entirety of every sample partially corrupted by the coordinate-level adversary, and thus, it is a stronger adversary given this condition. The proof for $\Atwo$ is similar.
\end{proof}

\subsection{Proof of Proposition~\ref{adversaryLemma}}
\begin{proof}
    If $\alpha,\, \rho \geq \epsilon$, similarly to the proof of Proposition \ref{lowerBoundAdversaryProposition}, $\Atwo$ and $\Athree$ can simulate $\Aone$ by placing all its corruptions on the $\epsilon N$ coordinates corrupted by $\Aone$. If $\alpha \ge \rho$, $\Athree$ can simulate $\Atwo$ by corrupting the coordinates corrupted by $\Atwo$ since $\Atwo$ can never corrupt more than $\rho$-fraction of coordinates in expectation. On the other hand, if $\alpha \le \rho / n$, $\Atwo$ can corrupt whatever coordinates $\Athree$ decides to corrupt since $\Athree$ cannot corrupt more than $\alpha n$-fraction of one coordinate. Thus, the three statements hold.
\end{proof}

\subsection{Proof of Theorem~\ref{thm:relation}}
\begin{proof}
\cameraReady{We first show that when $\alpha > 2 \alpha'$, $\Athree^\alpha$ has a way to make corruptions so that with probability at least $1-e^{-\Omega(\alpha^2 N)}$ it is indistinguishable whether the $N$ samples come from $D_1$ or $D_2$. From the definition of $\dentry^1$, if we take the coupling $\gamma$ that achieves the infimum, changing $\alpha'$ fraction of the entries per sample on average will make $D_1$ indistinguishable from $D_2$. Therefore, if the adversary corrupts the entries of the $N$ samples according to the coupling $\gamma$, by Hoeffding's inequality, the probability that more than $2 \alpha'$ fraction of the entries need to be changed to make it impossible to tell whether the samples come from $D_1$ or $D_2$ is less than $e^{-\Omega(\alpha^2 N)}$.}

\cameraReady{Then we show that when $\alpha < \alpha'/4$, no matter how $\Athree^{\alpha}$ makes corruptions, with probability at least $1-e^{-\Omega(\alpha^2 N)}$, we can tell that the $N$ samples come from $D_1$. Since $d_\text{ENTRY}^1(D_1, D_2) = \alpha'$, by Monge-Kantorovich duality theorem (see e.g. Theorem 5.10 of \cite{villani2009optimal}), there exists a function $u:(\mathbb{R} \cup \{\bot\} )^n \rightarrow [0,1]$, where $\bot$ denotes a missing entry, such that $u(x) - u(y) \le \frac 1n \|x-y\|_{0}$ and $\E_{D_1}[u(x)] - \E_{D_2}[u(x)] = \alpha'$.
This is because the optimal coupling of $D_1, D_2$ for $\dentry^1$ is represents the optimal to the primal Kantorovich problem where $c(x,y) = \frac{1}{n}\|I(x,y)\|_1$, while $u$ represents the optimal to the dual problem.
We use $u$ to distinguish whether the corrupted samples come from $D_1$ or $D_2$ by checking whether the expectation of $u$ according to the empirical distribution $\hat D$ that we observe is closer to the expectation corresponding to $D_1$ or $D_2$. By Hoeffding's inequality, the empirical distribution $\hat D_1$ of the $N$ samples before corruption satisfies $|\E_{D_1}[u(x)] - \E_{\hat D_1}[u(x)]| \le \alpha' / 4$ with probability at least $1-e^{-\Omega(\alpha^2 N)}$. After corruption, we have that $|\E_{\hat D}[u(x)] - \E_{\hat D_1}[u(x)]| \le \alpha$ by the bound on the number of corrupted entries and the Lipschitz property of $u$. Thus, with probability at least $1-e^{-\Omega(\alpha^2 N)}$, 
$|\E_{D_1}[u(x)] - \E_{\hat D}[u(x)]| \le \alpha' / 4 + \alpha < \alpha'/2$ while 
$|\E_{D_2}[u(x)] - \E_{\hat D}[u(x)]| > \alpha'/2$, which allows us to distinguish between $D_1$ and $D_2$.}

\cameraReady{In the case of the value-fraction adversary $\Atwo^\rho$ that can corrupt $\rho$-fraction of values in each coordinate, $\dentry^\infty$ can be bound similarly in $\Theta(\rho)$ by applying Hoeffding's inequality and Kantorovich duality theorem for each coordinate such that $u_i(x) - u_i(y) \le \|x_i - y_i\|_0$ and then comparing the mean for each coordinate. Therefore, for both $\Atwo^\rho$ and $\Athree^\alpha$, $\dentry$ is a tight characterization of the coordinate-level adversary.}
\end{proof}

\subsection{Proof of Theorem~\ref{thm:atwo_omega}}
\begin{proof}
Let $\mathrm{disc}(\Sigma^{-1}) = \max_{x \in [-1,1]} \sqrt{ x^T s(\Sigma^{-1}) x }$ and let $v$ be the vector with entries $(\Sigma^{-1}_{ii})^{-1/2}$.
To complete the proof, we will show that $\dentry^\infty(N(\mu, \Sigma), N(\mu + \rho v, \Sigma)) \le \rho$. 
To do this, we are going to use a hybrid argument showing that by only hiding $\rho$ fraction of the entries in the $i$-th coordinate,  $N(\mu, \Sigma)$ and $N(\mu + \rho \mathbf{e}_i / \Sigma^{-1/2}_{ii}, \Sigma))$ become indistinguishable where $\mathbf{e}_i$ is the vector that has $1$ in its $i^{\text{th}}$ coordinate and $0$ in the others. 
This is because, $\dtv(N(\mu, \Sigma), N(\mu + \rho \mathbf{e}_i / \Sigma^{-1/2}_{ii}, \Sigma))) \le \rho$. By applying this argument sequentially for every coordinate, $N(\mu, \Sigma)$ and $N(\mu + \rho v, \Sigma)$ are indistinguishable under an $\Atwo$ adversary. 
Since the total distance between $\mu$ and $\mu + \rho v$ in Mahalanobis distance is at least $\rho \cdot \mathrm{disc}(\Sigma^{-1})$, the theorem follows.
\end{proof}

\subsection{Proof of Corollary~\ref{cor:atwo_omega}}
\begin{proof}
We prove the following lemma that implies Corollary \ref{cor:atwo_omega} when combined with Theorem \ref{thm:atwo_omega}.

\begin{lemma}
\label{lm:psd_bounds}
For any $n \times n$ PSD matrix $M$, $\text{disc}(M) \in [\sqrt{n},n]$
\end{lemma}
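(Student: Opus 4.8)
The plan is to work directly with $C := s(M)$, the unit-diagonal normalization of $M$ obtained by scaling the $i$-th row and column by $(M_{ii})^{-1/2}$ (equivalently, the correlation matrix associated to $M$). Since $M$ is PSD, so is $C$, and moreover $C_{ii}=1$ for every $i$, whence $\operatorname{tr}(C)=n$. By definition $\mathrm{disc}(M)=\max_{x\in[-1,1]^n}\sqrt{x^\top C x}$, so the two inequalities amount to sandwiching the quantity $\max_{x\in[-1,1]^n} x^\top C x$ between $n$ and $n^2$.

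\textbf{Upper bound.} I would bound the quadratic form by the operator norm: for every $x$, $x^\top C x\le \lambda_{\max}(C)\,\|x\|_2^2$. On the cube we have $\|x\|_2^2\le n$, and since $C$ is PSD all its eigenvalues are nonnegative, so $\lambda_{\max}(C)\le \operatorname{tr}(C)=n$. Combining the two gives $x^\top C x\le n^2$, hence $\mathrm{disc}(M)\le n$. (Alternatively one can argue elementarily from $|C_{ij}|\le\sqrt{C_{ii}C_{jj}}=1$ and $|x_ix_j|\le1$, summing the $n^2$ terms.)

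\textbf{Lower bound.} The tempting deterministic choice $x=\mathbf 1$ fails, because $\mathbf 1^\top C\mathbf 1$ can be small—even near $0$—when $C$ has negative off-diagonal entries, so a fixed test vector does not suffice. Instead I would use an averaging argument over random signs: draw $x\in\{-1,+1\}^n$ uniformly. Then
\[
\E\big[x^\top C x\big]=\sum_{i}C_{ii}\,\E[x_i^2]+\sum_{i\ne j}C_{ij}\,\E[x_ix_j]=\operatorname{tr}(C)=n,
\]
using $\E[x_i^2]=1$ and $\E[x_ix_j]=0$ for $i\ne j$. Since the average equals $n$, some sign vector $x^\star\in\{-1,+1\}^n\subseteq[-1,1]^n$ satisfies $(x^\star)^\top C x^\star\ge n$, and therefore $\mathrm{disc}(M)\ge\sqrt n$.

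\textbf{Main obstacle and tightness.} The only genuinely delicate point is the lower bound: recognizing that a single deterministic vector is inadequate and that one must instead convert $\operatorname{tr}(C)=n$ into a pointwise guarantee via the expectation over sign flips. Everything else is routine. I would close by noting both endpoints are attained, so the interval cannot be improved: $C=I$ gives $\mathrm{disc}(M)=\sqrt n$ (as $x^\top x\le n$ on the cube), while $C=\mathbf 1\mathbf 1^\top$ gives $\mathrm{disc}(M)=n$, attained at $x=\mathbf 1$ where $x^\top C x=(\sum_i x_i)^2=n^2$.
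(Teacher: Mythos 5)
Your proof is correct and follows essentially the same route as the paper: the lower bound is the identical averaging argument over uniform random sign vectors using $\operatorname{tr}(s(M))=n$, and tightness at $I$ and $\mathbf{1}\mathbf{1}^\top$ matches. The only (immaterial) difference is the upper bound, where you use $\lambda_{\max}(C)\le\operatorname{tr}(C)=n$ together with $\|x\|_2^2\le n$, whereas the paper bounds the entries via $|s(M)_{ij}+s(M)_{ji}|\le 2$ and sums over all $n^2$ pairs; both are valid one-line arguments.
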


We have that $s(M)$ is a PSD matrix with diagonal elements equal to 1.
Consider a random $x$ with uniformly random coordinates in $\{-1,1\}$.
Then, $\mathbb{E}[x^T s(M) x] = \text{Trace}( s(M) ) = n$. 
Thus, $\max_{x \in [-1,1]} \sqrt{ x^T s(M) x } \ge \sqrt{n}$. 
This lower bound is tight for $M=I$.

For the upper-bound, we notice that since $s(M)$ is PSD, it holds that $|s(M)_{ij} + s(M)_{ji}| \le 2$. 
To see this notice that $x^T s(M) x \ge 0$ for both $x = e_i + e_j$ and $x = e_i - e_j$.

Given this, we have that $x^T s(M) x \le \frac 12 \sum_{ij} |s(M)_{ij} + s(M)_{ji}| \le n^2$. 
This gives the required upper-bound. 
Notice that the upper-bound is tight for the matrix $M$ consisting entirely of $1$'s.
\end{proof}

\subsection{Proof of Theorem~\ref{noStructureAlpha}}
\begin{proof}
With a budget of $\alpha$, $\Athree$ can concentrate its corruption on one particular coordinate, say the first coordinate. 
If $\alpha n \ge 1$, we will lose all information for the first coordinate, making mean estimation impossible. 
Since $\alpha < 1/n$, $\Athree$ can corrupt $\alpha n$-fraction of first coordinates of all samples. 
Since the marginal distribution with respect to the first dimension is a univariate Gaussian, information-theoretically any mean estimator of the first coordinate must be $\Omega(\alpha n)$-far from the true mean of the first coordinate.

\end{proof}

\subsection{Proof of Theorem~\ref{linear}}
\begin{proof}
    First, we show the case for $\dentry^1$. 
    
    $\dentry^1(D_1, D_2) \leq \dtv(D_1,D_2)$ follows from 
    \begin{align*}
        \dentry^1(D_1, D_2) &= \inf_{\gamma \in \Gamma(D_1, D_2)} 
        \frac{\|\E_{(x,y)\sim \gamma}\left[I(x,y)\right]\|_1}{n}\\
        &= \inf_{\gamma \in \Gamma(D_1, D_2)} \E_{(x,y)\sim \gamma}\left[\frac{||x - y||_0}{n}\right]\\
        &\leq \inf_{\gamma \in \Gamma(D_1, D_2)} \Pr_{(x,y)\sim\gamma}\left[x\neq y\right]\\
        &=  \dtv(D_1,D_2)
    \end{align*}
    
    Then we show that $\dentry^1(D_1, D_2) \geq \frac{m_A}{n}\, \dtv(D_1,D_2)$.
    
    We first show that for any $x \not\in \ker(A)$, $||Ax||_0 \geq m_A$. Suppose by way of contradiction that $\Pi_i\, Ax$ is nonzero for fewer than $m_A$ values of $i$. Call the rows of $A$ $v_0^T,\dots,v_{n-1}^T$ and let $S$ be the subspace of $\R^r$ spanned by the $v_i$'s. As $x \not\in \ker(A)$, $Ax$ is nonzero. Hence, $\langle x,\,v_i\rangle$ is nonzero for some $i$ so $\Pi_S\,x$ is nonzero.

    Now, let $\mathcal{B}$ be a basis for $S$ containing $\Pi_S\,x$. Consider the subspace $S'$ of $S$ spanned by $\{v_i\mid \langle x,\,v_i\rangle = 0\}$. As $\Pi_{S'}\, x = 0$, $\Pi_S\,x$ cannot be an element of $S'$ and so  $\mathcal{B}$ is not a basis for $S'$. Thus, the dimension of $S'$ is less than that of $S$; as $|\{v_i\}|-|\{v_i\mid \langle x,\,v_i\rangle = 0\}| < m_A$ 
    we have a contradiction of the definition of $m_A$. Thus, if $x\neq 0 \in \R^r$, $\Pi_i\, Ax$ must be nonzero for at least $m_A$ values of $i$, and hence $||Ax||_0 \geq m_A$.
    
    Now, suppose that $(x,y)\sim\gamma$ for some $\gamma \in \Gamma(D_1,D_2)$. Then, $x = Ax'$ and $y = Ay'$ for some $x', y' \in \R^r$. If $x\neq y$, then $Ax' \neq Ay'$ so $x'-y' \not\in \ker(A)$. Thus 
    $$||A(x'-y')||_0 \geq m_A$$ 
    by the above, and so 
    $$\E_{(x,y)\sim \gamma}\left[||x - y||_0\right] \geq m_A\Pr_{(x,y)\sim\gamma}\left[x\neq y\right]$$
    
    Therefore, we have that
    \begin{align*}
        \dentry^1(D_1, D_2) &= \inf_{\gamma \in \Gamma(D_1, D_2)} 
        \frac{\|\E_{(x,y)\sim \gamma}\left[I(x,y)\right]\|_1}{n}\\
        &= \inf_{\gamma \in \Gamma(D_1, D_2)} \E_{(x,y)\sim \gamma}\left[\frac{||x - y||_0}{n}\right]\\
        &\geq \inf_{\gamma \in \Gamma(D_1, D_2)} \frac{m_A}{n} \Pr_{(x,y)\sim\gamma}\left[x\neq y\right]\\
        &= \frac{m_A}{n}\, \dtv(D_1,D_2)
    \end{align*}
    
    In the case of $\dentry^\infty$, the left hand side ($\dentry^\infty(D_1, D_2) \geq \frac{m_A}{n}\, \dtv(D_1,D_2)$) follows from above by using the fact that $\|x\|_1 \le n \|x\|_\infty$ for $x \in \R^n$. The right hand side follows from
    \begin{align*}
        \dentry^\infty (D_1, D_2) &=  \inf_{\gamma \in \Gamma(D_1, D_2)} \|\E_{(x,y)\sim \gamma}\left[I(x,y)\right] \|_\infty \\
        &= \inf_{\gamma \in \Gamma(D_1, D_2)} \max_{i} \Pr_{(x,y)\sim \gamma}\left[x_i \neq y_i\right] \\
        &\leq \inf_{\gamma \in \Gamma(D_1, D_2)} \Pr_{(x,y)\sim\gamma}\left[x\neq y\right]\\
        &=  \dtv(D_1,D_2)
    \end{align*}
    
    Therefore, the theorem holds for the $\dentry$ metric.
\end{proof}

\subsection{Proof of Corollary~\ref{cor:upperBoundStructure}}
\begin{proof}
We can obtain the given upper bound relating the distance to $\dtv$. Since $\dtv(\mathcal{N}(0,1), \mathcal{N}(\mu,1)) = erf(\frac{\mu}{2\sqrt{2}})$, for small $\mu > 0$, $erf(\frac{\mu}{2\sqrt{2}}) = \Theta(\mu)$. Then
\begin{align*}
\dtv(\mathcal{N}(\mu, \Sigma),\mathcal{N}(\mu', \Sigma)) &= \dtv(\mathcal{N}(0, I),\mathcal{N}(\Sigma^{-1/2}(\mu'-\mu), I))  \\
&= \dtv(\mathcal{N}(0, 1),\mathcal{N}(\|\Sigma^{-1/2}(\mu'-\mu)\|_2, 1)) \\
&= \dtv(\mathcal{N}(0, 1),\mathcal{N}(\|\mu'-\mu\|_\Sigma, 1)) = \Theta(\|\mu'-\mu\|_\Sigma)
\end{align*}
Applying Theorem \ref{linear}, we get that $\|\mu - \mu'\|_\Sigma = O(\alpha \frac{n}{m_A})$.
\end{proof}

\subsection{Proof of Theorem~\ref{thm:lowerBoundStructure}}
\begin{proof}
We prove the theorem for both missing values and replaced values. In the case of missing values, for the lower bound, $\Athree^\alpha$ may corrupt at most $\frac{\alpha n}{m_A}$-fraction of the samples so that the coordinates are non-recoverable and shift part of the original distribution to anywhere along the axes of missing coordinates. 
Then the proof similarly follows the lower bound proof for estimating the mean of a Gaussian corrupted by $\Aone^\epsilon$. 
Hence, since we cannot distinguish between two Gaussians that share $1-\frac{\alpha n}{m_A}$ of mass, $\|\hat{\mu} - \mu\|_{\Sigma} = \Omega(\alpha \frac{n}{m_A})$.

For $\Athree^\alpha$ that replaces values, we prove the following lemma and the theorem follows.
\begin{lemma}
\label{lemma:corruptionRecover}
The adversary corrupts $\delta$ coordinates of a sample. Let $\tilde{x}$ be the corrupted sample and $x^*=Az^*$ be the original. We can only information-theoretically recover $x^*$ from $\tilde{x}$ if and only if $\delta < \frac{m_A}{2}$. Furthermore, if $\delta < \frac{m_A}{2}$ then $\|\tilde{x}-Az^*\|_0 < \delta$ and $\|\tilde{x}-Az'\|_0 \ge m_A - \delta$ for any $z' \neq z^*$.
\end{lemma}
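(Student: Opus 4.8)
The plan is to recognize that this is exactly the classical unique-decoding-radius statement from coding theory: the columns of $A$ generate a linear code whose nonzero codewords all have Hamming weight at least $m_A$, so $m_A$ plays the role of the minimum distance, replacing $\delta$ coordinates is exactly introducing $\delta$ symbol errors, and unique decoding is possible precisely when the number of errors stays below half the minimum distance. The single fact I would borrow is the one already isolated inside the proof of Theorem~\ref{linear}: for every $w \neq 0$ one has $\|Aw\|_0 \ge m_A$. Applied to $w = z^* - z'$, this gives that any two distinct codewords satisfy $\|Az^* - Az'\|_0 \ge m_A$, i.e. the range of $A$ is a code of minimum distance at least $m_A$ (and exactly $m_A$, since some $w$ attains it).

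I would prove the ``furthermore'' clause first, as it immediately yields the easy (``if'') direction. Since the adversary changes at most $\delta$ coordinates, $\|\tilde x - Az^*\|_0 \le \delta$ by the definition of the corruption budget. For any $z' \neq z^*$, the reverse triangle inequality for the Hamming metric gives
\[
\|\tilde x - Az'\|_0 \;\ge\; \|Az^* - Az'\|_0 - \|\tilde x - Az^*\|_0 \;\ge\; m_A - \delta .
\]
Now if $\delta < m_A/2$ then $\delta < m_A - \delta$, so $\|\tilde x - Az^*\|_0 \le \delta < m_A - \delta \le \|\tilde x - Az'\|_0$ for every competing $z'$; hence $z^*$ is the unique minimizer of the residual $\|\tilde x - Az\|_0$ and $x^* = Az^*$ is the only codeword within Hamming radius $\delta$ of $\tilde x$, so it is information-theoretically recoverable (and a minimum-residual decoder recovers it).

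For the converse I would exhibit an indistinguishable pair whenever $\delta \ge m_A/2$. Choose $w \neq 0$ attaining $\|Aw\|_0 = m_A$, let $T$ be its support (so $|T| = m_A$), and set the alternative codeword $x' = A(z^* + w)$, which differs from $x^* = Az^*$ exactly on $T$. Because $m_A \le 2\delta$, I can pick a subset $S \subseteq T$ with $m_A - \delta \le |S| \le \delta$ and define $\tilde x$ to agree with $x'$ on $S$ and with $x^*$ elsewhere; then $\|\tilde x - x^*\|_0 = |S| \le \delta$ and $\|\tilde x - x'\|_0 = m_A - |S| \le \delta$, so both $Az^*$ and $A(z^*+w)$ are legitimate pre-corruption samples within the adversary's budget and no decoder can tell them apart. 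This establishes the ``only if'' direction and completes the equivalence. The argument is essentially routine once the coding-theoretic dictionary is set up; the only point needing care is the integer bookkeeping at the boundary, namely verifying that the interval $[m_A - \delta,\, \delta]$ contains an integer exactly when $\delta \ge m_A/2$ (equivalently $\delta \ge \lceil m_A/2 \rceil$), which is what makes the threshold sharp for both even and odd $m_A$.
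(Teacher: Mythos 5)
Your proof is correct, and it takes a genuinely different route from the paper's. You cast the range of $A$ as a code of minimum Hamming distance $m_A$ (using the fact, proved inside Theorem~\ref{linear}, that $\|Aw\|_0 \ge m_A$ for $w \neq 0$) and then run the classical unique-decoding-radius argument: the reverse triangle inequality gives the ``furthermore'' clause and hence uniqueness of the minimum-residual codeword when $\delta < m_A/2$, and an explicit confusable pair built from a minimum-weight difference $Aw$ gives impossibility when $\delta \ge m_A/2$. The paper instead argues the positive direction via a majority vote: it isolates a minimal set $\mathcal{S} \supseteq \Delta$ of rows whose removal drops $\mathrm{rank}(A)$ by one (so $|\mathcal{S}| \ge m_A$), solves the $|\mathcal{S}|$ linear systems obtained by adding back one row of $\mathcal{S}$ at a time, and observes that at least $|\mathcal{S}| - \delta > |\mathcal{S}|/2$ of these solutions equal $z^*$; its bound $\|Az'-\tilde{x}\|_0 \ge m_A - \delta$ is obtained by counting matching coordinates, which is your triangle inequality in disguise. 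What each buys: the paper's voting argument doubles as a (non-polynomial) decoding procedure and is the form reused later in the discussion preceding Algorithm~\ref{alg:approx}, whereas your argument is shorter and, for the converse, strictly more rigorous --- the paper only asserts that ``there is no clear majority'' when $\delta \ge m_A/2$, while you exhibit the indistinguishable pair explicitly, including the boundary bookkeeping. Two small points to tidy: justify in one line that the minimum weight $m_A$ is actually attained (remove a minimal rank-reducing set $\mathcal{S}$ with $|\mathcal{S}| = m_A$ and pick $w \in \ker(A_{\mathcal{U}\setminus\mathcal{S}}) \setminus \ker(A)$, so $Aw$ is supported on $\mathcal{S}$), and note that your $\|\tilde{x}-Az^*\|_0 \le \delta$ is the correct reading of the lemma's (slightly misstated) strict inequality.
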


Assume that if $\delta < \frac{m_A}{2}$ then $\|\tilde{x}-Az^*\|_0 < \delta$ and $\|\tilde{x}-Az'\|_0 \ge m_A - \delta$ for any $z' \neq z^*$. This implies that we can consider all possible subsets $I \subseteq \mathcal{U}$ where $|I| = n-\frac{m_A}{2}$ and solve the linear system of equations of $\tilde{x}_I = A_I z$ and output the solution $z$, which achieves smallest hamming distance to $\tilde{x}$, as $z^*$. If $\delta \ge \frac{m_A}{2}$, it is information theoretically impossible to recover $x^*$ as $\arg \min_z \|\tilde{x}-Az\|_0$ may not be unique: since the corruptions are adversarial, $z^*$ may not be part of the set of minimizers.

Assume $\delta < \frac{m_A}{2}$. Without loss of generality, let $A$ be full rank. If not, the proof follows by replacing $r$ with $\mathrm{rank}(A)$ and considering the kernel of $A$. Let $A_i$ denote the $i$-th row of matrix $A$ and $\mathcal{U} = \{A_i: i \in [n]\}$. Let $\Delta$ denote the set of $A_i$'s that correspond to the corrupted coordinates of $\tilde{x}$ so that $|\Delta| = \delta$. Define  $\mathcal{S} \supseteq \Delta$ to be the smallest subset of $\mathcal{U}$ such that row space dimension (rank) of $A_{\mathcal{U}\setminus \mathcal{S}}$ is 1 less than that of $A$. By definition of $m_A$, $|\mathcal{S}| \ge m_A$. 

The entries corresponding to rows $\mathcal{U}\setminus \mathcal{S}$ are uncorrupted, so if we solve the linear system $A_{\mathcal{U}\setminus \mathcal{S}}z = \tilde{x}_{\mathcal{U}\setminus \mathcal{S}}$, we will a get a 1-dimensional solution space for $z$. Thus, any $z$ in this line will give at least $|\mathcal{U}\setminus \mathcal{S}|$ matching coordinates when multipied to $A$ with $x^*$. Now, we can generate $|\mathcal{S}|$ many solutions, each corresponding to the solution to the linear system $A_{\mathcal{U}\setminus \mathcal{S} \cup \{s\}}z = \tilde{x}_{\mathcal{U}\setminus \mathcal{S} \cup \{s\}}$ for each $s \in \mathcal{S}$. 

For $s$ that corresponds to an uncorrupted entry in $\tilde{x}$, the solution to the linear system is the true solution $z^*$ since none of the values in the system was corrupted. That gives us at least $|S|-\delta$ solutions out of $|S|$ solutions to be exactly $z^*$. Regardless of how the adversary corrupts the $\delta$ entries, if $\delta < \frac{m_A}{2}$, then the majority solution will always be $z^*$ since $|S|-\delta > \frac{m_A}{2} > \delta$.  Furthermore, for $z' \neq z^*$, $z'$ can match at most $|\mathcal{U}\setminus \mathcal{S}|+\delta \le n - m_A+\delta$ coordinates of $\tilde{x}$, i.e. $\|Az' - \tilde{x}\|_0 \ge m_A - \delta$. However, if $\delta \ge \frac{m_A}{2}$, then there is no clear majority so it is impossible to distinguish between the true solution and the other solution. In fact, when $\delta$ is strictly greater and corruptions adversarially chosen, $\|Az^* - \tilde{x}\|_0 = \delta$ and there exists some $z'$, $\|Az' - \tilde{x}\|_0 = m_A-\delta < \|Az^* - \tilde{x}\|_0$.

\end{proof}

\subsection{Details of the Recovery Steps in the Algorithms in Section~\ref{sec:algorithm_missing}}
\cameraReady{The input is the structure matrix $A$ and corrupted samples $\tilde{x}_i$ with $M_i$ missing entries for $i = 1,2,..,N$. When $A$ is known, we iterate over all samples. If $M_i \geq m_A$, we discard $\tilde{x}_i$. Otherwise, we remove the missing entries in $\tilde{x}_i$ and get $\tilde{x}'_i$, and also remove the rows in $A$ corresponding to those missing entries and get $A'$. Then we solve $A'z  = \tilde{x}'_i$ and recover sample $i$ with $Az$. When $A$ is unknown, we perform matrix completion on $\tilde{X}$, the $i^{\text{th}}$ row of which is $\tilde{x}'_i$. We first impute the missing entries with the coordinate-wise medians. Then we repeat the following procedure until convergence: 1) compute the rank-$m_A$ projection of $\tilde{X}$ 2) replace the entries that are missing initially with the corresponding entries from the projection. The details of the projection and the completion procedure can be found in~\cite{ithsvd}.}

\subsection{Proof of Theorem~\ref{metaAlgThm}}
\begin{proof}
Define $\epsilon$ be the fraction of samples that has at least one corrupted coordinate. Note that the coordinate-level adversary must corrupt at least $m_A$ coordinates of a sample to make his corruptions non-recoverable. 
Given that we can recover any sample with less than $m_A$ corrupted coordinates, we have that $\epsilon \le \frac{\alpha n}{m_A}$. 
If $D$ is the original distribution on $\R^n$ and $D'$ is the observed distribution, then $\dtv(D, D') \le \epsilon$. Since $\dtv$ between the two Gaussians is less than or equal to $\epsilon$, the Tukey median algorithm achieves $\|\hat{\mu}_{\text{Tukey}} - \mu\|_{\Sigma} = O(\frac{\alpha n}{m_A})$. On the other hand, removing $\epsilon$-fraction of a spherical Gaussian shifts the empirical mean by $O(\epsilon\sqrt{\log 1/\epsilon})$. Therefore, $\|\hat{\mu} - \mu\|_{\Sigma} = \tilde{O}(\frac{\alpha n}{m_A})$.
\end{proof}

\subsection{Proof of Lemma~\ref{lemma:metaUnknownA3}}
\begin{proof}
The adversary can simply hide one coordinate completely to prevent us from recovering that coordinate if at least one missing coordinate per sample in expectation is allowed. 
If the expected number of missing coordinates per sample is less than one, there must then be some positive fraction of samples with no missing coordinates; as we have infinite samples, we can select any $r+1$ disjoint sets of $n-r$ such samples to satisfy the conditions in Lemma \ref{lemmaNowak}.
\end{proof}

\subsection{Robust Mean Estimation Under Coordinate-fraction Adversaries with Unknown Structure}
\label{A3_unknown_Appendix}
If we consider the case where the data is corrupted by $\Athree^\alpha$, we have the following result.

\begin{theorem}
\label{thm:metaUnknownA3_Appendix}
Assume samples $x_i = Az_i$ and $z_i$ comes from a Gaussian such that $x_i\sim \mathcal{N}(\mu, \Sigma)$ with support in the range of $A$, but $A$ is unknown.  
Under corruption $\Athree^\alpha$ with budget $\alpha < \frac{1}{n}$, recover missing coordinates by solving the matrix completion problem and discard any unrecoverable samples. The empirical mean $\hat{\mu}$ of the remaining samples satisfies $\|\mu-\hat{\mu}\|_{\Sigma} = \tilde{O}(\frac{\alpha n}{m_A})$, while the Tukey median $\hat{\mu}_{\text{Tukey}}$ of the remaining samples satisfies $\|\hat{\mu}_{\text{Tukey}} - \mu\|_{\Sigma} = O(\frac{\alpha n}{m_A})$.
\end{theorem}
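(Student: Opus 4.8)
The plan is to reduce this unknown-structure statement to the known-structure analysis of Theorem~\ref{metaAlgThm}, by showing that when $\alpha < 1/n$ the matrix-completion recovery step reproduces exactly the same set of recovered samples that solving a linear system produces when $A$ is known. The central observation is that $\Athree^\alpha$ with $\alpha < 1/n$ has a total hidden budget of only $\alpha n N < N$ entries, so it can neither fully hide a single coordinate nor destroy more than a small fraction of the samples.

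First I would invoke Lemma~\ref{lemma:metaUnknownA3}: since $\alpha < 1/n$, the expected number of hidden entries per sample is below one, so a positive fraction of samples are fully observed. With the assumed infinite sample size, these fully observed (hence clean) samples can be arranged into $r+1$ disjoint groups of $n-r$ samples satisfying the conditions of Lemma~\ref{lemmaNowak}. Consequently, matrix completion identifies the $r$-dimensional subspace $\mathrm{range}(A)$ exactly, and every sample with at least $r$ visible entries --- equivalently, fewer than $m_A = n-r+1$ missing entries --- is recovered exactly to its pre-corruption value $Az_i$. Thus the recovery step with unknown $A$ coincides with the known-structure recovery.

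Next I would bound the fraction $\epsilon$ of discarded (unrecoverable) samples. A sample is unrecoverable only if at least $m_A$ of its coordinates are hidden, so making a sample unrecoverable costs the adversary at least $m_A$ of its total budget of $\alpha n N$ hidden entries. Hence the number of discarded samples is at most $\alpha n N / m_A$, giving $\epsilon \le \frac{\alpha n}{m_A}$. Because every kept sample is recovered exactly to its clean value and only an $\epsilon$-fraction is removed, the empirical distribution $D'$ of the kept samples is a clean Gaussian with an $\epsilon$-fraction adversarially deleted, so $\dtv(D, D') \le \epsilon \le \frac{\alpha n}{m_A}$.

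Finally I would apply the standard robust Gaussian mean-estimation guarantees under a distribution shift of total variation $\epsilon$, exactly as in the proof of Theorem~\ref{metaAlgThm}: the Tukey median attains $\|\hat{\mu}_{\text{Tukey}} - \mu\|_{\Sigma} = O(\epsilon) = O(\frac{\alpha n}{m_A})$, while deleting an $\epsilon$-fraction of a spherical Gaussian shifts the empirical mean by $O(\epsilon\sqrt{\log(1/\epsilon)})$, yielding $\|\mu - \hat{\mu}\|_{\Sigma} = \tilde{O}(\frac{\alpha n}{m_A})$. The main obstacle is the recovery step --- guaranteeing exact subspace recovery under an adversarial rather than random missing pattern --- but this is precisely where the hypothesis $\alpha < 1/n$ enters, as it forces enough fully observed samples to exist for Lemma~\ref{lemmaNowak} to apply; once exact recovery is secured, the rest is a direct reduction to the $\dtv$-based Gaussian estimation bounds already invoked in the known-structure case.
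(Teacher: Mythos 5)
Your proposal is correct and follows essentially the same route as the paper, whose own proof of this theorem is just the one-line remark that it follows by combining Lemma~\ref{lemma:metaUnknownA3} (exact recovery via matrix completion when $\alpha < 1/n$, using the fully observed samples to satisfy Lemma~\ref{lemmaNowak}) with the argument of Theorem~\ref{metaAlgThm} (bounding the discarded fraction by $\epsilon \le \alpha n/m_A$ and applying the $\dtv$-based Tukey-median and empirical-mean guarantees). Your writeup simply fills in the details the paper leaves implicit, and does so correctly.
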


Theorem \ref{thm:metaUnknownA3_Appendix} is based on Theorem \ref{metaAlgThm} and the Lemma \ref{lemma:metaUnknownA3}.

\subsection{Proof of Lemma~\ref{lm:recovery_rho}}
\begin{proof}
First, We introduce the concept of hidden patterns. 
The set of coordinates missing from a sample forms its hidden pattern. 
We only consider the hidden patterns which have been applied to infinitely many samples as if only finitely many samples share a pattern, the adversary could hide those samples completely with $0$ budget. 

When $\rho \geq \frac{m_A-1}{n}$, the adversary is able to hide $m_A-1$ entries for every sample, and we cannot learn the structure from samples with only $r$ visible entries.

When $\rho < \frac{m_A-1}{n}$, the adversary does not have enough budget to hide $m_A-1$ entries of all the samples, so there exist some patterns with at least $r+1$ coordinates visible. 
We use $M$ to denote the number of such patterns, and $p_l$ to denote the probability of the $l^{th}$ pattern $P_l$, $l = 1,2,\dots, M$.

Next, we show a necessary condition for the adversary to prevent us from learning the structure. 
Since we have infinitely many samples, one group of $n-r$ samples satisfying the conditions in Lemma \ref{lemmaNowak} is enough to learn the structure since we can find another $r$ groups by choosing samples with the same hidden patterns.

It is obvious that the adversary has to hide at least one coordinate per pattern, otherwise we have infinitely many samples without corruption. 
No matter what the patterns the adversary provides, we try to get the samples satisfies the conditions in Lemma \ref{lemmaNowak} by the following sampling procedure.

\begin{enumerate}
\item Start with one of the patterns, pick $r+1$ visible coordinates of it to form the initial visible set $V_1$. Take one sample from this pattern to form the initial sample group $G_0$. Mark this pattern as checked.
\item For $K = 1,2,3,\dots, M-1$, take one of the unchecked patterns and check if it contains at least one visible coordinate not in $V_K$, the current visible set. If so, take one sample $x_K$ from it and pick any one of its visible coordinates $v_K \notin V_K$. Add $x_K$ to the sample group and $v_K$ to the visible set: $G_{K+1} = G_{K} \cup \{ x_K \}$, $V_{K+1} = V_{K} \cup \{ v_K \}$. If not, skip it. Mark the pattern as checked.
\end{enumerate}

We show by induction that any $k (k \leq K)$ different samples in $G_K$ have at least $r+k$ coordinates in $V_K$ not completely hidden. It is trivial that the property holds for $K = 1$. Assume that the property holds for $K$. According to the sampling procedure, when a new sample $x_K$ comes, it has at least one visible coordinate $v_K$ not in $V_K$. Consider any $k (k \leq K+1)$ different samples in $G_{K+1}$. If the $k$ samples don't include the new sample $x_K$, by the induction assumption they have at least $r+k$ coordinates in $V_K \subset V_{K+1}$ not completely hidden. If $x_K$ is one of the $k$ samples, again by the induction assumption the other $k-1$ samples have at least $r+k-1$ coordinates in $V_K$ not completely hidden, plus $v_K$ of $x_K$ is also not hidden, so there are at least $r+k$ coordinates in $V_{K+1}$ not completely hidden. Thus, the property also holds for $K+1$. By induction, any $k$ distinct samples from the group we get at the end of step 2 have at least $r+k$ coordinates not completely hidden, which means if the group has at least $n-r$ samples, the conditions in Lemma \ref{lemmaNowak} can be satisfied.

Denote the set of the patterns being picked as $\mathcal{P}_P$ and the set of the patterns being skipped as $\mathcal{P}_S$. Based on the previous analysis, the adversary has to manipulate the patterns so that $|\mathcal{P}_P| \leq n-r-1$, in which case the visible set cannot cover all the coordinates, which means there exists at least one common hidden coordinate for the patterns in $\mathcal{P}_S$ (otherwise the pattern where that coordinate is visible should have been picked). Since the fraction of hidden entries in that common coordinate is less than or equal to $\rho$, the sum of the probabilities of the patterns in $\mathcal{P}_S$ satisfies $\sum_{l: P_l \in \mathcal{P}_S} p_l \leq \rho$. Since all the patterns have at least one missing coordinate, we also have $p_{l:P_l \in \mathcal{P}_P} \leq \rho$. Thus, we have $\sum_{l=1}^{M}p_l \leq (n-r) \rho$. In such a case, the overall fraction of missing entries $\eta$ satisfies
\begin{align*}
\eta &\geq \sum_{l=1}^{M}p_l \frac{1}{n} + (1-\sum_{l=1}^{M}p_l) \frac{n-r}{n} \\
     &\geq (n-r) \rho \frac{1}{n} + (1-(n-r)\rho) \frac{n-r}{n}
\end{align*}

The first inequality holds because for the samples with at least $r+1$ visible entries, there are at least $1$ missing entries per sample, and for the samples with less than $r+1$ visible entries, there are at least $n-r$ missing entries per sample. In addition, $\eta$ also satisfies $\eta \leq \rho$, so we have $\rho \geq \frac{n-r}{n+(n-r-1)(n-r)} = \frac{m_A-1}{n+(m_A-1)(m_A-2)}$, which is a necessary condition for the adversary. Thus, if $\rho < \frac{m_A-1}{n+(m_A-1)(m_A-2)}$, we can learn the structure and impute all the samples with at least $r$ visible entries.
\end{proof}

\subsection{Proof of Theorem~\ref{thm:approx}}
\begin{proof}
Algorithm \ref{alg:approx} and the following analysis borrows significantly from the randomized approximation algorithm for the NP-hard problem Min-Unsatisfy in the work of  \cite{berman2001approximating}. 

If $\tilde{x}$ is in the subspace generated by $A$ (i.e. there exists $z$ such that $\tilde{x} = Az$), then the point $\tilde{x}$ must be either an uncorrupted point or a corrupt point where at least $m_A$ coordinates corrupted. This is because $m_A$ is equal to the minimum number of coordinates needed to move a point on the subspace generated by $A$ to another point on the subspace. 

If the point does not lie on the subspace, it is clearly a corrupted point. If this point is corrupted by more than $m_A/2$, it would be information-theoretically impossible to recover the true point as is shown in Lemma~\ref{lemma:corruptionRecover}. However, changing this point to a different point on the subspace would not matter in the reduction to a robust mean estimation algorithm since it is a outlier either way.

Now assume this point is corrupted by at most $m_A/2$. By Lemma 2 and Theorem 1 of \cite{berman2001approximating}, the for loop results in the best $\tilde{z}$ such that $\|\tilde{x} - A\tilde{z}\|_0 \le \frac{r}{c\ln r} \|\tilde{x} - x^*\|_0$ with high probability. In the case that the point is only corrupted by at most $\frac{c m_A \ln r}{2r}$ coordinates, then $\|\tilde{x} - A\tilde{z}\|_0 \le \frac{m_A}{2}$. But the only point on the subspace such that it only needs at most $\frac{m_A}{2}$ coordinate changes to $\tilde{x}$ is $x^*$. Then this approximation algorithm performs exact recovery of $x^*$ when at most $O(\frac{m_A \ln r}{r})$ coordinates are corrupted. Therefore, preprocessing points with this algorithm and then applying robust mean estimation yields a mean estimate $\hat{\mu}$ such that $\|\hat{\mu} - \mu\|_2 = \tilde{O}(\frac{r}{\ln{r}}\cdot \frac{\alpha n }{m_A})$.

\end{proof}

\subsection{Connection between Recovery for Replacements and Sparse Recovery}\label{sec:reduction}
We first show that the reduction from exact recovery for replacements to sparse recovery.
Let $x^* = Az^*$ be the uncorrupted sample, and $\tilde{x}$ be the same sample with no more than $\delta$ coordinates corrupted.
Let $e^* = \tilde{x} - x^*$, where $\|e^*\|_0 \leq \delta$, and we have $\tilde{x} = Az^* + e^*$. Take a non-trivial matrix $F \in \mathbb{R}^{p \times n}$ ($p < n$) which satisfies $FA = 0$. Apply $F$ to $\tilde{x}$ we have $y = F(A\tilde{x} + e^*) = Fe^*$. 
$x^*$ can be recovered if we know what $e^*$ is, so the recovery of $x^*$ can be reduced to recovering $e^*$ from $y$.

\citet{decodingLP} show that when $\delta < \frac{m_A}{2}$, we can get the exact $e^*$ by solving
\begin{equation*}
\label{eq:l0}
  \min_{e \in \mathbb{R}^n} \|e\|_0 
\text{ subject to } Fe = y 
\end{equation*}

Therefore, we reduce the problem of exact recovery for replacements to the problem of sparse recovery.

On the other hand, sparse recovery can also be reduced to exact recovery for replacements. Given the sparse recovery problem shown above, we take $A \in \mathbb{R}^{n \times r}$ such that the columns of $A$ span the null space of $F$, and $\tilde{x} \in \mathbb{R}^n$ such that $F\tilde{x} = y$. For any $e$ satisfying $Fe = y$, we have $F\tilde{x} = Fe$, and therefore $\tilde{x} = e + Az$ for some $z \in \mathbb{R}^r$. In addition, for any $z \in \mathbb{R}^r$, we can find $e = \tilde{x}-Az$ satisfying $Fe=y$. Thus, solving the exact recovery problem $\min_{z \in \mathbb{R}^r} \| \tilde{x} - Az \|_0$ will also give the sparsest $e$.

\subsection{Computationally Efficient Algorithms for Sparse Recovery}

Basis pursuit (BP)~\cite{decodingLP} and orthogonal matching pursuit (OMP)~\cite{davenport2010analysis} are computationally efficient algorithms that get the exact $e^*$ defined in Section~\ref{sec:reduction} under certain conditions.

BP approximates the sparse recovery problem by 
\begin{equation}
\min_{e \in \mathbb{R}^n} \|e\|_1 
\text{ subject to } Fe = y
\end{equation}

which is convex and can be solved by linear programming.

OMP is a greedy algorithm that gives a solution by the following procedure:
\begin{itemize}
    \item \textbf{Step 1} Initialize the residual $r^0 = y$, the index set $\Lambda^0 = \emptyset$, and the iteration counter $l=0$.
    \item \textbf{Step 2} Find the column of $F$ that has the largest inner product with the current residual and add its index to the index set: $\Lambda^{l+1} = \Lambda^{l} \cup \{ \argmax_{i} |F_i^T r^l| \}$.
    \item \textbf{Step 3} Update the estimation and the residual: $e^{l+1} = \argmin_{v:\supp{(v)} \subseteq \Lambda^{l+1} } \| y-Fv \|_2$, $r^{l+1} = y - Fe^{l+1}$.
    \item \textbf{Step 4} Output $e^{l+1}$ if converged, otherwise increment $l$ and return to Step 2.
\end{itemize}

\newparagraph{RIP-based Guarantee for BP and OMP}
\citet{decodingLP} introduce the Restricted Isometry Property (RIP) that characterizes the orthonormality of matrices when operating on sparse vectors. 

\begin{definition}{(RIP)}
A matrix $F$ satisfies the RIP of order $k$ if
\begin{equation}
    (1 - \zeta)\|c\|^2 \leq \|F c\|^2 \leq (1 + \zeta)\|c\|^2
\end{equation}
for all real coefficients $c$ with $\| c \|_0 \leq k$. $\zeta \in (0,1)$ is a constant.

\end{definition}
Based on the RIP condition, \citet{cai2013sharp} and \citet{davenport2010analysis} show that BP and OMP recover a $K$-sparse $e$ exactly when $F$ satisfies certain RIP conditions, and we state their results here as the following lemma.

\begin{lemma}
\label{lemma:BP_OMP_RIP}
 Consider the problem of recovering a $K$-sparse $e^*$ from $Fe^*$. BP recovers $e^*$ exactly if $F$ satisfies the RIP of order $K$ with $\zeta < \frac{1}{3}$; OMP recovers $e^*$ exactly in $K$ iterations if $F$ satisfies the RIP of order $K+1$ with $\zeta < \frac{1}{3\sqrt{K}}$.
\end{lemma}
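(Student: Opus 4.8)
The plan is to prove the two claims separately, since BP and OMP rely on genuinely different machinery, and in both cases the cleanest route is a reduction to the sharp RIP theorems of \citet{cai2013sharp} and \citet{davenport2010analysis}; below I outline the mechanics I would reconstruct. For the BP claim, I would argue via the null space property. Let $\hat{e}$ be any minimizer of $\min \|e\|_1$ subject to $Fe = Fe^*$, and set $h = \hat{e} - e^*$, which lies in $\ker(F)$ since both $\hat e$ and $e^*$ are feasible. Let $T = \supp(e^*)$, so $|T| \le K$. Optimality of $\hat{e}$ yields the cone constraint $\|h_{T^c}\|_1 \le \|h_T\|_1$, from $\|e^*\|_1 \ge \|\hat{e}\|_1 = \|e^* + h\|_1 \ge \|e^*\|_1 - \|h_T\|_1 + \|h_{T^c}\|_1$. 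It then suffices to show $h = 0$. The heart is the sparse-representation-of-a-polytope lemma of \citet{cai2013sharp}: the tail $h_{T^c}$, whose $\ell_1$ norm is controlled by $\|h_T\|_1$, can be written as a convex combination of $K$-sparse vectors with bounded $\ell_\infty$ norm. Substituting this decomposition into $\|Fh\|_2 = 0$ and bounding the cross terms $F_T^T F(\cdot)$ through the order-$K$ RIP constant $\zeta$ forces a contradiction unless $h = 0$, precisely at the threshold $\zeta < 1/3$. I would verify that the constants in the polytope lemma line up to give exactly $1/3$ rather than a weaker bound.

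For OMP, I would induct on the iteration count to show that in each of the first $K$ iterations the index chosen in Step~2 lies in $T = \supp(e^*)$. Suppose after $l$ iterations $\Lambda^l \subseteq T$; then the least-squares update produces $e^l$ supported on $\Lambda^l$, so the residual has the form $r^l = F(e^* - e^l)$ with $e^* - e^l$ supported on $T$ and hence $K$-sparse. The induction step requires $\max_{i \in T} |F_i^T r^l| > \max_{i \notin T} |F_i^T r^l|$. Using order-$(K+1)$ RIP, I would lower-bound the on-support correlation by roughly $(1-\zeta)\,\|e^* - e^l\|_2 / \sqrt{K}$ (the residual energy $\|F_T^T r^l\|_2 \gtrsim (1-\zeta)\|e^*-e^l\|_2$ spread over at most $K$ active coordinates) and upper-bound the off-support correlation by $\zeta\,\|e^* - e^l\|_2$ (a near-orthogonality estimate from RIP applied to $T$ together with any one index outside it). The greedy selection is then guaranteed correct exactly when $\zeta < 1/(3\sqrt{K})$ pushes the lower bound above the upper bound. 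After $K$ correct selections, $\Lambda^K = T$, and the final least-squares solve over $T$ returns $e^*$ exactly because $F_T$ is injective under order-$K$ RIP, so the residual vanishes.

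The main obstacle is the BP part: obtaining the \emph{sharp} constant $1/3$, rather than a more classical and looser threshold, hinges on the \citet{cai2013sharp} convex-combination/polytope argument instead of a naive cone-plus-RIP estimate, and I would need to track its constants carefully to land on $1/3$ exactly. The OMP part is comparatively routine once the two correlation bounds are in place; the only delicate point is the $1/\sqrt{K}$ factor in the on-support lower bound, which is precisely what forces the $\zeta < 1/(3\sqrt{K})$ scaling and explains why OMP needs a stronger constant than BP. Since both statements are quoted verbatim from \citet{cai2013sharp} and \citet{davenport2010analysis}, the final proof can simply invoke those theorems, with the sketch above recording the steps one would reconstruct.
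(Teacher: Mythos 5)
Your proposal is correct and matches the paper's treatment: the paper states this lemma without proof, importing it verbatim from \citet{cai2013sharp} (BP via the sharp $\zeta < 1/3$ polytope argument) and \citet{davenport2010analysis} (OMP via induction on correct greedy selections), which is exactly what you invoke. Your reconstruction of the internals is faithful as well---the only looseness is the word ``exactly'' in the OMP step, since the two correlation bounds you state actually yield success under the weaker condition $\zeta < 1/(1+\sqrt{K})$, which implies the claimed $\zeta < 1/(3\sqrt{K})$ threshold a fortiori.
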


\newparagraph{Matrices Satisfying RIP}
Gaussian matrices $F \in \mathbb{R}^{p \times n}$ whose entries are independent realizations of $\mathcal{N}(0, \frac{1}{p})$ satisfy RIP with certain probability, which can be shown by the following lemma that comes from \citet{baraniuk2008simple}. 

\begin{lemma}
\label{lemma:gaussianRIP}
Consider a Gaussian matrix $F \in \mathbb{R}^{p \times n}$ whose entries are independent realizations of $\mathcal{N}(0, \frac{1}{p})$. For a given $0<\zeta<1$, there exist constants $c_1, c_2 > 0$ such that the RIP of order $k$ ($k \leq c_1 p / log(n/k)$) with $\zeta$ holds for Gaussian metrics $F$ with probability at least $1-2e^{-c_2 p}$. $c_1, c_2$ have the relation that $c_2 = c_0(\zeta/2) -c_1[1+(1+\log (12/\zeta)) / \log(n/k)]$, where $c_0(x) = x^2/4-x^3/6$ is a function.
\end{lemma}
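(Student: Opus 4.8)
The plan is to follow the concentration-of-measure route of \citet{baraniuk2008simple}: first establish a one-vector concentration bound for $\|Fx\|^2$, then promote it to a uniform RIP guarantee over all $k$-sparse vectors via an $\epsilon$-net plus a union bound, tracking constants so that the stated relation between $c_1$ and $c_2$ emerges exactly. The first step is the pointwise bound: for any fixed $x \in \mathbb{R}^n$, $\Pr[\,|\,\|Fx\|^2 - \|x\|^2\,| \geq \epsilon \|x\|^2\,] \leq 2 e^{-p\,c_0(\epsilon)}$ with $c_0(\epsilon) = \epsilon^2/4 - \epsilon^3/6$. Since the rows $F_j$ of $F$ have i.i.d. $\mathcal{N}(0,1/p)$ entries, $F_j \cdot x \sim \mathcal{N}(0, \|x\|^2/p)$ independently across $j$, so $p\|Fx\|^2/\|x\|^2$ follows a $\chi^2_p$ distribution. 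The claimed tail then follows from the standard Chernoff/MGF computation for chi-squared variables, and the specific cubic correction $-\epsilon^3/6$ comes from retaining the first two terms of $\log(1\pm\epsilon)$ in the optimized Chernoff exponent. This step is routine but it is where the exact form of $c_0$ is pinned down.

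Next I would discretize. Fix a support set $T$ with $|T| = k$ and let $S_T$ be the unit sphere of the coordinate subspace indexed by $T$; choose a $\delta$-net $Q_T$ of $S_T$ with $\delta = \zeta/4$, so that $|Q_T| \le (3/\delta)^k = (12/\zeta)^k$ by the standard volumetric bound. Applying the pointwise bound with $\epsilon = \zeta/2$ and taking a union bound over every net point of every support set --- there are $\binom{n}{k} \le (en/k)^k$ such sets --- the event that $\big|\,\|Fq\|^2 - 1\,\big| \le \zeta/2$ fails for some $q$ has probability at most $2\,(en/k)^k (12/\zeta)^k e^{-p\,c_0(\zeta/2)}$. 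Taking logarithms and using $\log\binom{n}{k} \le k(1 + \log(n/k))$ bounds the failure exponent by $-p\big[c_0(\zeta/2) - \tfrac{k}{p}\big(1 + \log(n/k) + \log(12/\zeta)\big)\big]$.

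Then I would pass from the net to all $k$-sparse vectors by the usual continuity/bootstrapping argument: if $\big|\,\|Fq\|^2 - 1\,\big| \le \zeta/2$ holds on a $\delta$-net with $\delta = \zeta/4$, then bounding $\|Fx\|$ for $x \in S_T$ by its value at the nearest net point plus $\|F(x-q)\|$ and solving the resulting self-referential inequality for the operator norm on $S_T$ yields $|\,\|Fx\|^2 - 1\,| \le \zeta$ on all of $S_T$ --- exactly the RIP of order $k$ with constant $\zeta$ (the factor-two slack between the net accuracy $\zeta/2$ and the final constant $\zeta$ is what the net radius buys). Finally, to recover the stated constant relation, I substitute the hypothesis $k \le c_1 p / \log(n/k)$, so that $k\log(n/k) \le c_1 p$ and $k \le c_1 p/\log(n/k)$; the exponent above then becomes $-p\big[c_0(\zeta/2) - c_1\big(1 + (1 + \log(12/\zeta))/\log(n/k)\big)\big]$, giving failure probability $2e^{-c_2 p}$ with precisely the claimed $c_2$.

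The main obstacle I expect is the constant bookkeeping rather than any single hard inequality: one must choose the net radius $\delta$ and the pointwise accuracy $\epsilon$ so that (i) the net-to-sphere extension yields precisely constant $\zeta$, and (ii) the surviving combinatorial factors $\binom{n}{k}(12/\zeta)^k$ are absorbed into the exponent in the exact form quoted. Keeping $\epsilon = \zeta/2$ and $\delta = \zeta/4$ consistent throughout both the union bound and the extension step is the delicate part; everything else reduces to the standard chi-squared tail estimate and the volumetric net count.
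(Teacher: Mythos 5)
Your proposal is correct: the paper does not prove this lemma itself but imports it verbatim from \citet{baraniuk2008simple}, and your argument---chi-squared concentration giving $c_0(\epsilon)=\epsilon^2/4-\epsilon^3/6$, a $(\zeta/4)$-net of size $(12/\zeta)^k$ per support, a union bound over $\binom{n}{k}\le (en/k)^k$ supports, the net-to-sphere bootstrapping, and the substitution $k\le c_1 p/\log(n/k)$---is precisely the proof in that reference, with the constants tracked so that the stated $c_2$ emerges exactly.
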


Note that one can choose sufficiently small $c_1$ so that $c_2 > 0$.

\newparagraph{Recovery for Gaussian $A$}
When the structure $A$ is a Gaussian matrix, we can perform efficient recovery by BP or OMP, with the following guarantee:
\begin{theorem}
Suppose $x^* = Az^*$ is an uncorrupted sample, where $A \in \mathbb{R}^{n \times r}$ is a Gaussian matrix whose entries are i.i.d. realizations of some Gaussian random variable $\mathcal{N}(0, \sigma^2)$. $\tilde{x} = x^*+e^*$ is the corrupted version of $x^*$, and $e^*$ is $K$-sparse with $K < m_A/2$. $x^*$ can be recovered exactly from $\tilde{x}$ with probability at least $1-2e^{-c_2 p}$ by the following procedure:
\begin{itemize}
    \item \textbf{Step 1:} Let $k = K$. Choose $\zeta < \frac{1}{3}$, and $c_1$ so that $c_2 = c_0(\zeta/2) -c_1[1+(1+\log (12/\zeta)) / \log(n/k)] > 0$. Choose $p$ such that $k \leq c_1 p / log(n/k)$.
    \item \textbf{Step 2:} For $i$ from $1$ to $p$, randomly sample a vector $v_i$ from the unit sphere in the null space of $A^T$, and a scalar $u_i$ from the chi-square distribution $\chi^2_n$ of degree $n$. Construct $F \in \mathbb{R}^{p \times n}$ where the $i^{\text{th}}$ row is $\sqrt{\frac{1}{p} u_i} v_i^T$.
    \item \textbf{Step 3:} Solve $Fe = F\tilde{x}$ by BP and denote the result as $\hat{e}$. Output $\hat{x} = \tilde{x}-\hat{e}$.
\end{itemize}
\end{theorem}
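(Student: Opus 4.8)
The plan is to reduce exact recovery of $x^*$ to a sparse-recovery problem and then certify that the randomly built matrix $F$ satisfies the RIP, so that basis pursuit succeeds. First I would exploit that every row of $F$ lies in the null space of $A^\top$, so $FA = 0$ and therefore $F\tilde{x} = F(Az^* + e^*) = Fe^*$. Hence Step 3 solves $\min_e \|e\|_1$ subject to $Fe = Fe^*$, which is exactly the $\ell_1$ relaxation of recovering the $K$-sparse vector $e^*$ from its measurements $y = Fe^*$, i.e.\ the reduction of Section~\ref{sec:reduction}. By Lemma~\ref{lemma:BP_OMP_RIP}, if $F$ satisfies the RIP of order $K$ with constant $\zeta < 1/3$, then basis pursuit returns $\hat e = e^*$, and consequently $\hat x = \tilde x - \hat e = x^*$. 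Thus the entire theorem reduces to showing that the $F$ produced in Step 2 has the RIP with the stated constant and failure probability $2e^{-c_2 p}$.

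The heart of the argument is to show that, although its rows are confined to the $(n-r)$-dimensional subspace $\ker(A^\top)$, the matrix $F$ exhibits the same RIP behavior as a genuine $p \times n$ Gaussian matrix, which is precisely what Lemma~\ref{lemma:gaussianRIP} controls. I would begin by recording the design behind Step 2: writing each row as $\sqrt{u_i/p}\,v_i^\top$ with $u_i \sim \chi^2_n$ and $v_i$ uniform on the unit sphere of $\ker(A^\top)$, a direct second-moment computation gives, for any fixed $e$, $\mathbb{E}\,\|Fe\|_2^2 = \frac{n}{\,n-r\,}\,\|Pe\|_2^2$, where $P$ is the orthogonal projection onto $\ker(A^\top)$. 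The degree-$n$ chi-square is chosen exactly so that, in expectation over the random Gaussian subspace $\mathrm{range}(A)$, one has $\mathbb{E}\,\|Pe\|_2^2 = \frac{n-r}{n}\|e\|_2^2$, whence $\mathbb{E}\,\|Fe\|_2^2 = \|e\|_2^2$, matching the normalization of a Gaussian sensing matrix. I would then establish the per-vector tail bound $\Pr\big[\,\big|\,\|Fe\|_2^2 - \|e\|_2^2\,\big| \ge \zeta\|e\|_2^2\,\big] \le 2e^{-c_0(\zeta/2)p}$ and feed it into the standard covering argument over $K$-sparse supports (the $\binom{n}{K}(12/\zeta)^K$ net used in Lemma~\ref{lemma:gaussianRIP}), which reproduces the RIP with the exponent $c_2 = c_0(\zeta/2) - c_1[1+(1+\log(12/\zeta))/\log(n/K)]$.

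The step I expect to be the main obstacle is this concentration of measure for the non-standard $F$, because two features push it past the plain Gaussian case. First, $\|Fe\|_2^2 = \frac1p\sum_i u_i\,\langle v_i, e\rangle^2$ mixes heavy-tailed radial factors $u_i \sim \chi^2_n$ with bounded spherical factors $\langle v_i,e\rangle^2$, so the tail requires a product/Bernstein-type estimate rather than a bare $\chi^2$ bound. Second, the target $\|e\|_2^2$ only emerges after averaging over the random subspace: for a fixed $A$ the conditional mean is $\frac{n}{n-r}\|Pe\|_2^2$, so I would need the Gaussian randomness of $A$ to guarantee that every $K$-sparse direction keeps the right fraction of its norm under $P$, uniformly over the net. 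This is exactly where the hypothesis $K < m_A/2$ enters: by the bound from the proof of Theorem~\ref{linear}, any nonzero $Az$ has at least $m_A$ nonzero coordinates, so no $K$-sparse vector lies in $\mathrm{range}(A)$ and sparse directions stay bounded away from the killed subspace, keeping $P$ well conditioned on them. Combining the per-vector tail with the covering bound then yields the RIP with failure probability $2e^{-c_2 p}$, and the recovery guarantee $\hat x = x^*$ follows from Lemma~\ref{lemma:BP_OMP_RIP}.
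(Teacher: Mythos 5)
Your first paragraph matches the paper exactly: $FA=0$ gives $F\tilde{x}=Fe^*$, and Lemma~\ref{lemma:BP_OMP_RIP} converts an RIP certificate for $F$ into exact recovery of $e^*$ and hence of $x^*$. Where you diverge is in how the RIP of $F$ is certified, and your route does not close. The paper's argument here is a single distributional identity: for Gaussian $A$ the subspace $\ker(A^T)$ is uniformly distributed, so each $v_i$ is \emph{marginally} uniform on the unit sphere of $\R^n$, and a uniform direction scaled by an independent $\sqrt{u_i}$ with $u_i\sim\chi^2_n$ is \emph{exactly} a standard Gaussian vector in $\R^n$ (this is the polar decomposition of a Gaussian, not merely a second-moment match). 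After the $1/\sqrt{p}$ scaling each row has $\mathcal{N}(0,1/p)$ entries and Lemma~\ref{lemma:gaussianRIP} applies verbatim, with no new concentration argument. You observe only that $\E\|Fe\|_2^2=\|e\|_2^2$ and then set out to reprove the RIP from scratch via a Bernstein-type tail bound plus a covering argument; that entire program is unnecessary once the distributional identity is noticed.

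More importantly, the substitute program as sketched has a genuine gap. Your covering argument needs, uniformly over all $K$-sparse $e$ in the net, a quantitative two-sided bound $\|Pe\|_2^2 \approx \frac{n-r}{n}\|e\|_2^2$ where $P$ projects onto $\ker(A^T)$. You propose to get this from the hypothesis $K<m_A/2$, arguing that no $K$-sparse vector lies in $\mathrm{range}(A)$ so ``sparse directions stay bounded away from the killed subspace.'' That inference is only qualitative: $\|Ax\|_0\ge m_A$ for $x\notin\ker(A)$ gives $\|Pe\|_2>0$ for sparse $e$, but provides no lower bound on the ratio $\|Pe\|_2^2/\|e\|_2^2$, which could be arbitrarily small for a fixed $A$; the uniform control you need must come from the randomness of $A$ (a separate high-probability event that you would have to state and union-bound), not from $m_A$. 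Indeed, in the paper $K<m_A/2$ plays a different role entirely: it is the identifiability condition guaranteeing that $e^*$ is the unique sparsest solution of $Fe=y$ (Lemma~\ref{lemma:corruptionRecover} and the reduction of Section~\ref{sec:reduction}), without which exact recovery of $x^*$ would be ill-posed no matter how good the RIP constant is. The per-vector tail bound for $\frac1p\sum_i u_i\langle v_i,e\rangle^2$ is likewise asserted rather than proved. To your credit, the instinct that $F$ is not literally an i.i.d.\ Gaussian matrix is sound --- all rows share the subspace $\ker(A^T)$, a dependence the paper's one-line proof silently ignores --- but the repair you propose does not resolve that issue either.
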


\begin{proof}
Since each row of $F$ is sampled from the null space of $A^T$, we have $FA = 0$ and $F\tilde{x} = Fe$. Because $A$ is a Gaussian matrix, $v_i$ is a random unit vector sampled from the $n$ dimension sphere, and each entry of it can be considered as i.i.d. samples from $\mathcal{N}(0, \frac{1}{p})$ after we multiply it by $\sqrt{\frac{1}{p} u_i}$. Hence, $F$ is a Gaussian matrix whose entries are independent realizations of $\mathcal{N}(0, \frac{1}{p})$. The satisfaction of the RIP condition and the exact recovery property follow from Lemma~\ref{lemma:gaussianRIP} and ~\ref{lemma:BP_OMP_RIP} respectively. Therefore, with probability at least $1-2e^{-c_2 p}$, $\hat{e} = e^*$ and $\hat{x} = x^*$.
\end{proof}

Similar result for OMP can be derived by simply replacing the conditions for $k$ and $\zeta$ in Step 1 by $k=K+1$ and $\zeta < \frac{1}{3\sqrt{K}}$.

\subsection{Detailed Experimental Evaluation}
 We show a detailed description of our experiments.
 We consider both real-world data that may not exhibit linear structure and synthetic data that does not always follow a Gaussian distribution.

\begin{figure*}
    \centering
    \subfigure[Gaussian $z$]{\includegraphics[width=0.32\textwidth]{./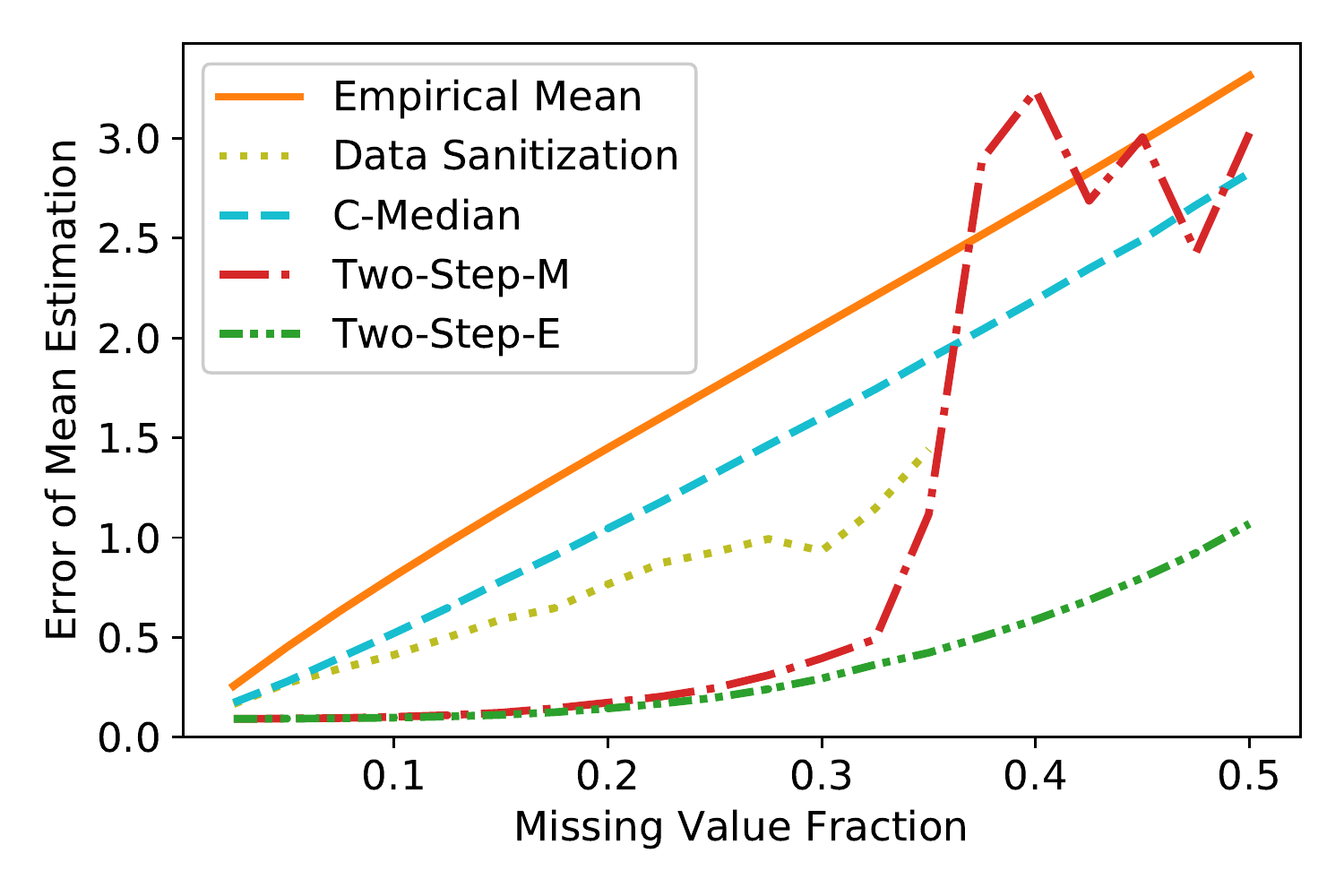}} 
    \subfigure[Uniform $z$]{\includegraphics[width=0.32\textwidth]{./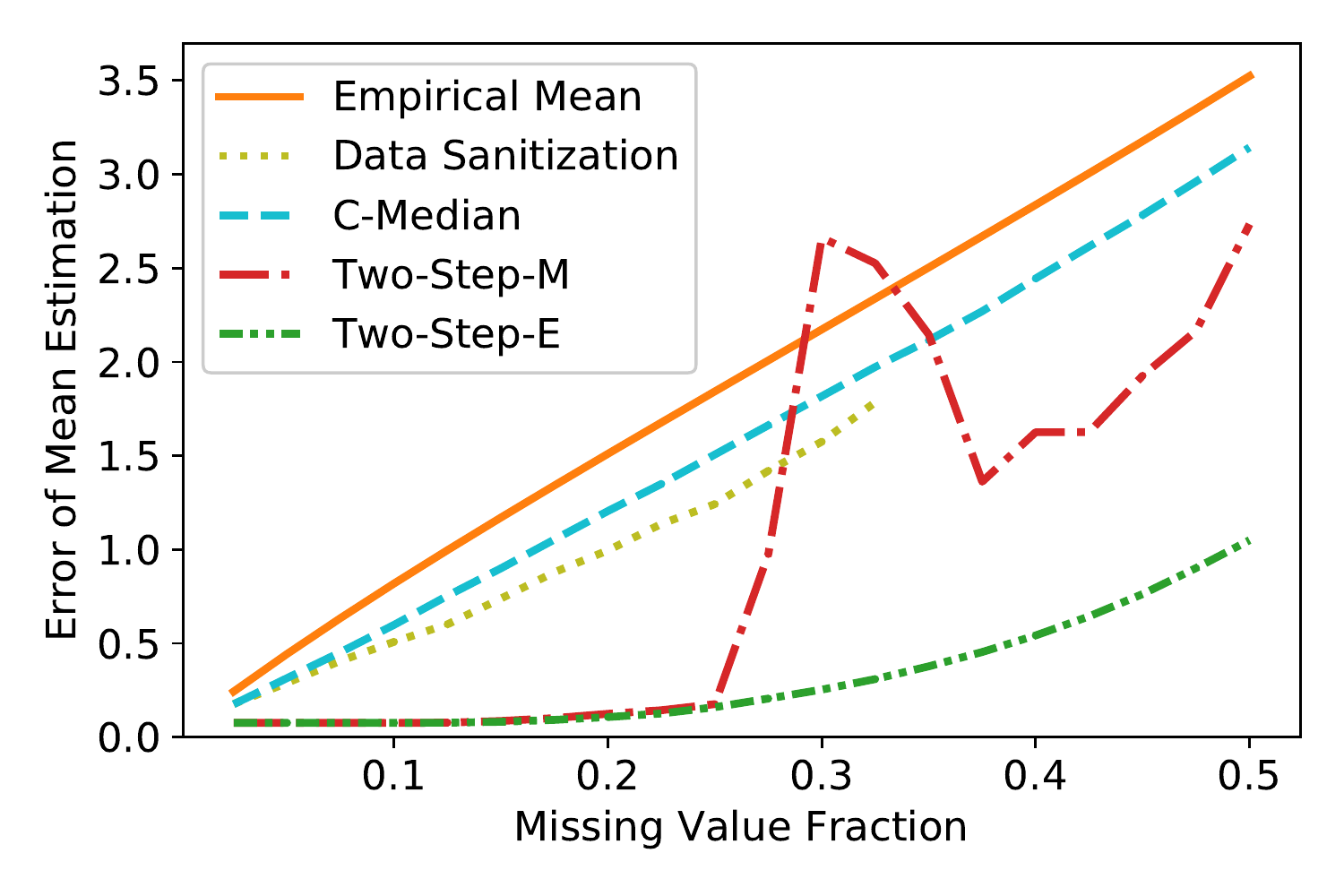}} 
    \subfigure[Exponential $z$]{\includegraphics[width=0.32\textwidth]{./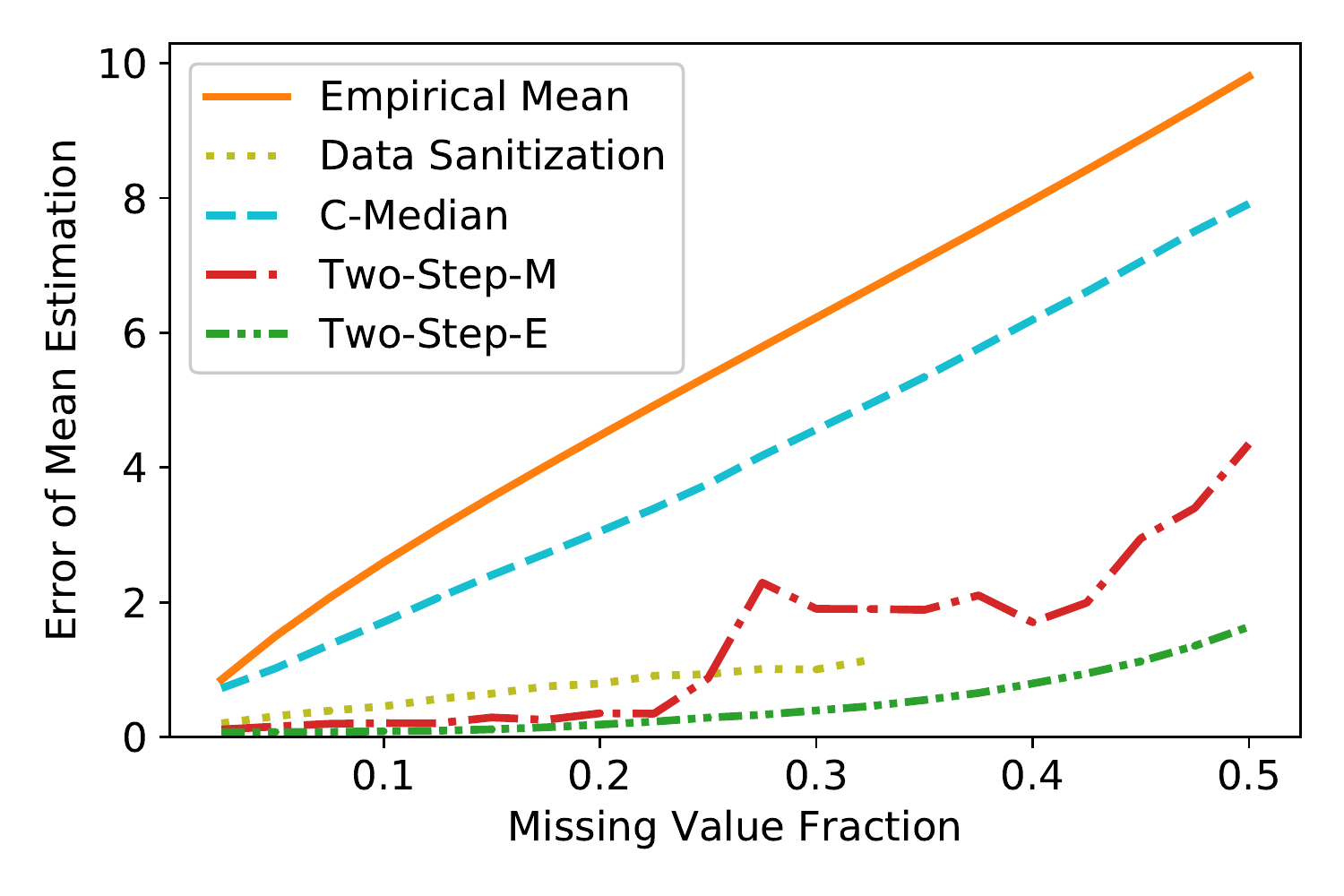}}
    \caption{Mean estimation error (Mahalanobis) for synthetic data sets.}
    \label{fig:syntheticMahalanobis}
\end{figure*}

\begin{figure*}
    \centering
    \subfigure[Gaussian $z$]{\includegraphics[width=0.32\textwidth]{./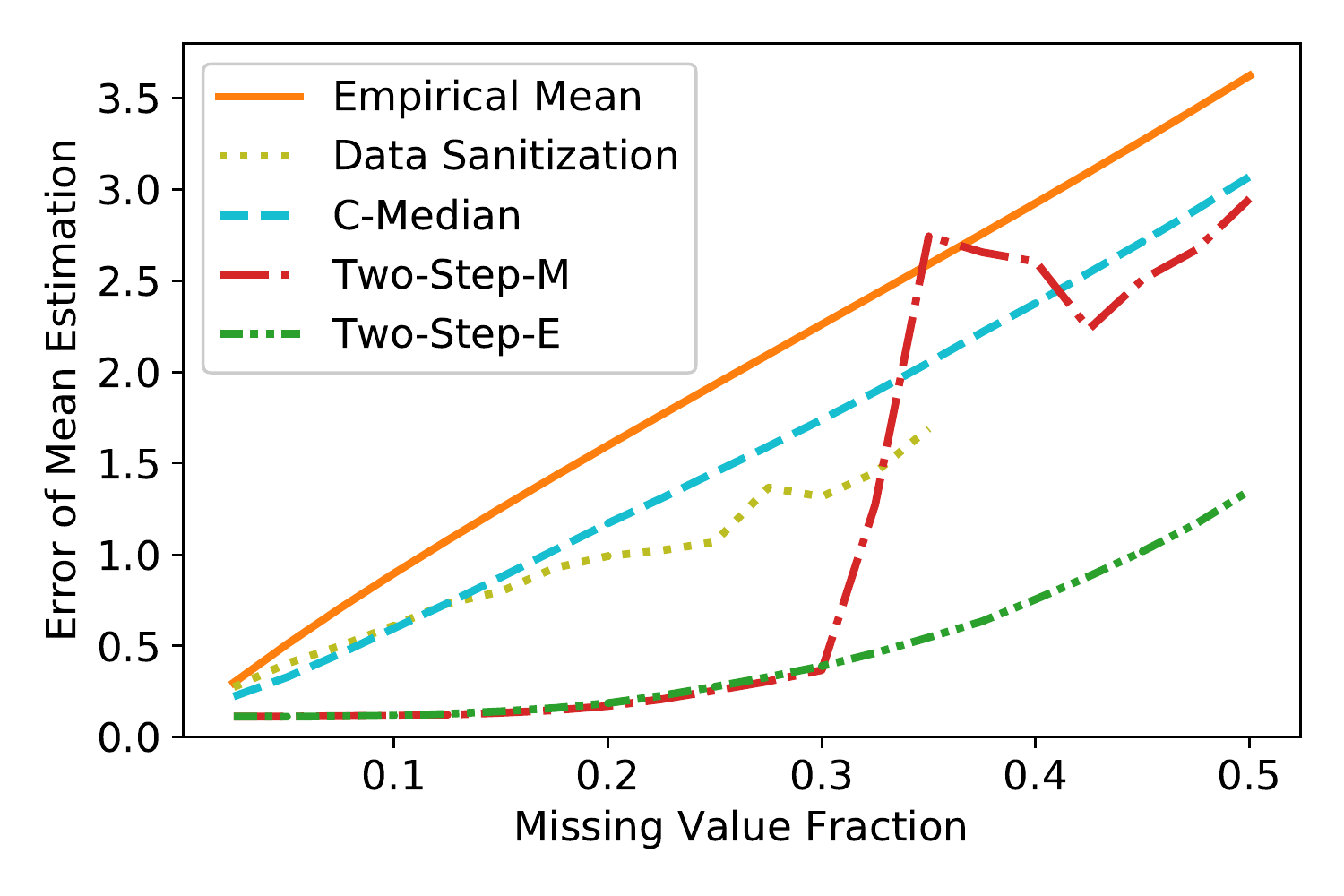}} 
    \subfigure[Uniform $z$]{\includegraphics[width=0.32\textwidth]{./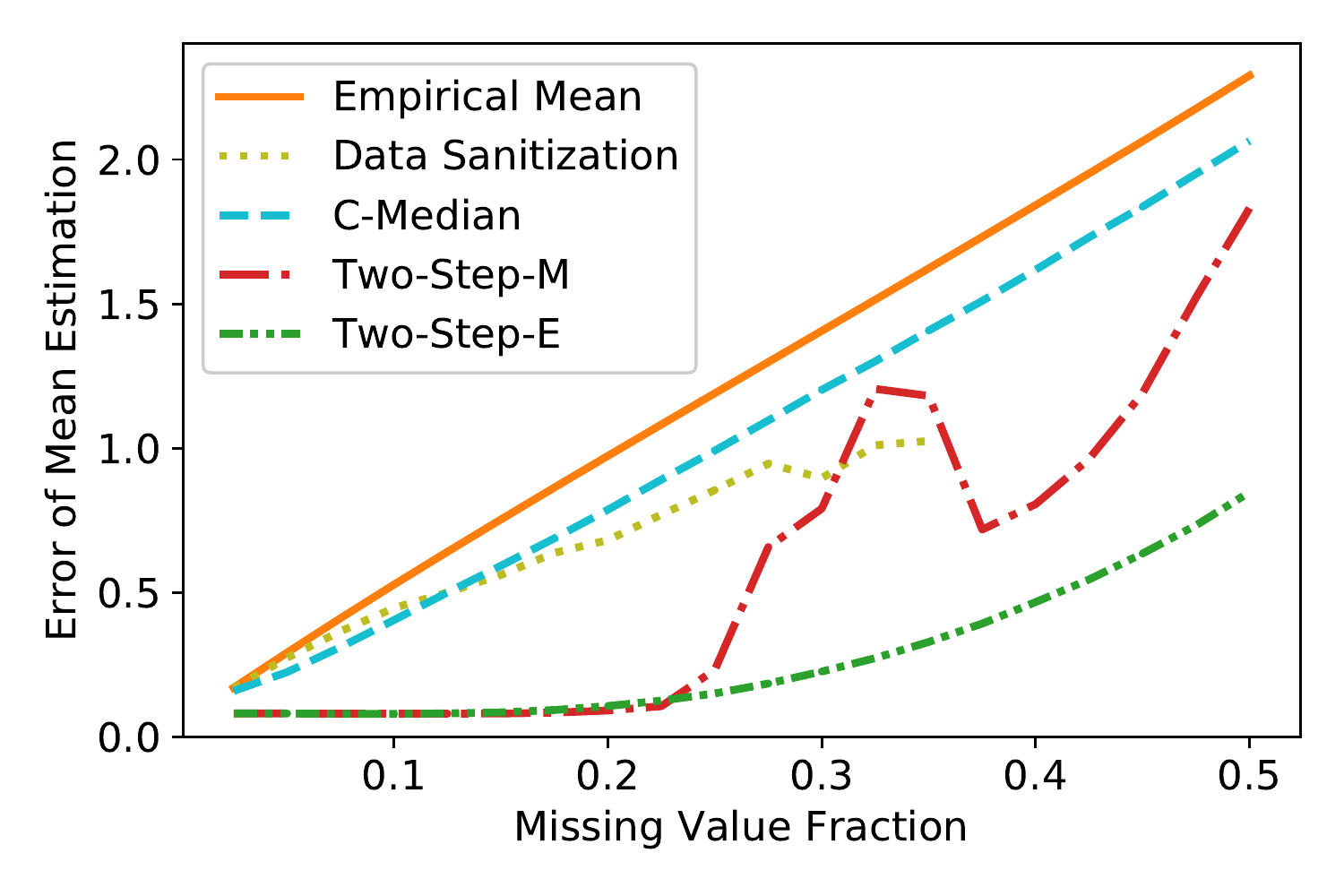}} 
    \subfigure[Exponential $z$]{\includegraphics[width=0.32\textwidth]{./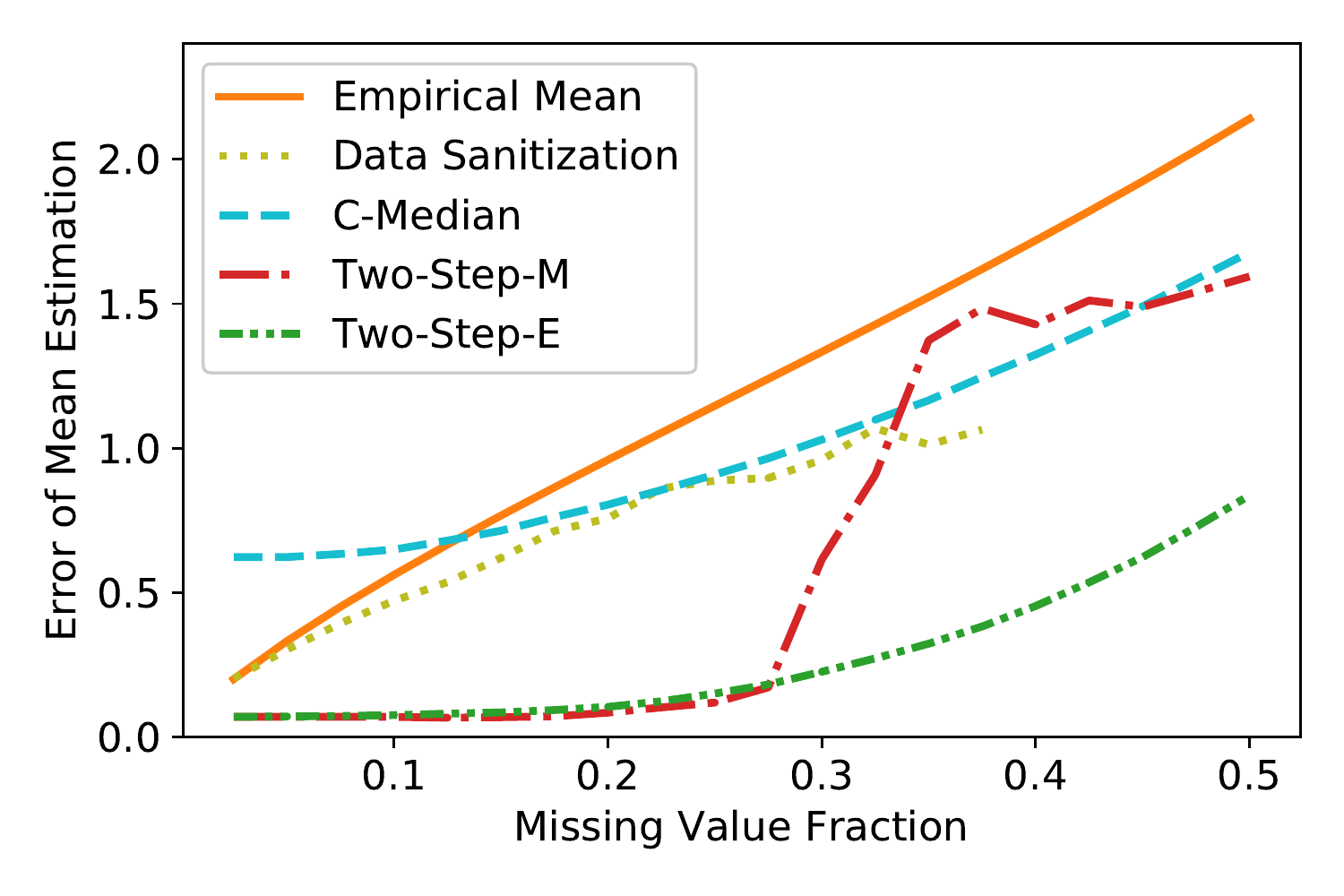}}
    \caption{Mean estimation error (in $l_2$) for synthetic data sets.}
    \label{fig:synthetic}
\end{figure*}

\newparagraph{Methods and Experimental Setup}
We consider the following mean estimation methods:
\begin{itemize}
    \item \textbf{Empirical Mean:} Take the mean for each coordinate, ignoring all missing entries.
    \item \textbf{Data Sanitization:} Remove any samples with missing entries, and then take the mean of the rest of the data.
    \item \textbf{Coordinate-wise Median (C-Median):} Take the median for each coordinate, ignoring all missing entries.
    \item \textbf{Our Method with Matrix Completion (Two-Step-M):} Use iterative hard-thresholded SVD (ITHSVD)~\cite{ithsvd} to impute the missing entries. Take the mean afterwards. We use randomized SVD~\cite{randomSVD} to accelerate.
    \item \textbf{Out Method with Exact Recovery (Two-Step-E):} For each sample, build a linear system based on the structure and solve it. If the linear system is under-determined, do nothing. Then, take the mean while ignoring the remaining missing values.
\end{itemize}

The methods can be classified into three categories, based on the amount of structural information they leverage: (1) Empirical Mean, Data Sanitization, and C-Median ignore the structure information; (2) Two-Step-M assumes there exists some unknown structure but it can be inferred from the visible data; (3) Two-Step-E knows exactly what the structure is and uses it to impute the missing values.

In each experiment presented below, we inject missing values by hiding the smallest $\epsilon$ fraction of each dimension. 
For synthetic data sets, the true mean is derived from the data generation procedure. 
For real-world data sets, we use the empirical mean of the samples before corruption approximate the true mean.
For synthetic data sets, we consider the $l_2$ and Mahalanobis distances to measure the estimation accuracy of different methods.
For real-world data, we only consider the $l_2$ distance between the estimated mean and the true empirical mean of the data before corruption. 

\newparagraph{Mean Estimation on Synthetic Data}
We show that redundancy in the corrupted data can help improve the robustness of mean estimation. 
We test all the methods on synthetic data sets with linear structure ($x = Az$) and three kinds of latent variables ($z$): 1) Gaussian, 2) Uniform, and 3) Exponential. 
Each sample $x_i$ is generated by $x_i = Az_i$, where $z_i$ is sampled from the distribution $D_z$ describing the latent variable $z$. 
We set $A$ to be a diagonal block matrix with two $8 \times 4$ blocks generated randomly and fixed through the experiments. 
In every experiment, we consider a sample with 1,000 data vectors.
To reduce the effect of random fluctuations, we repeat our experiments for five random instances of the latent distribution $D_z$ for each type of latent distribution and take the average error.

The results for the above experiments are shown in Figure~\ref{fig:syntheticMahalanobis}. This figure shows the mean estimation error of different methods measured using the Mahalanobis distance. Additional results with the $l_2$ distance are shown in Figure~\ref{fig:synthetic}.
We see that estimators that leverage the redundancy in the observed data to counteract corruption yield more accurate mean estimates.
This behavior is consistent across all types of distributions and not only for the case of Gaussian distributions that the theoretical analysis in Section~\ref{sec:mean_estimation} focuses on.
We see that the performance of Two-Step-M (when the structure of $A$ is considered unknown) is the same as that of Two-Step-E (when the structure of $A$ is known) when the fraction of missing entries is below a certain threshold.
Following our analysis in Section~\ref{sec:algorithms}, this threshold corresponds to the conditions for which the subspace spanned by the samples can be learned from the visible data.
Finally, we point out that we do not report results for Data Sanitization when the missing fraction is high because all samples get filtered.

\begin{figure*}[t]
    \centering
    \subfigure[Leaf]{\includegraphics[width=0.32\textwidth]{./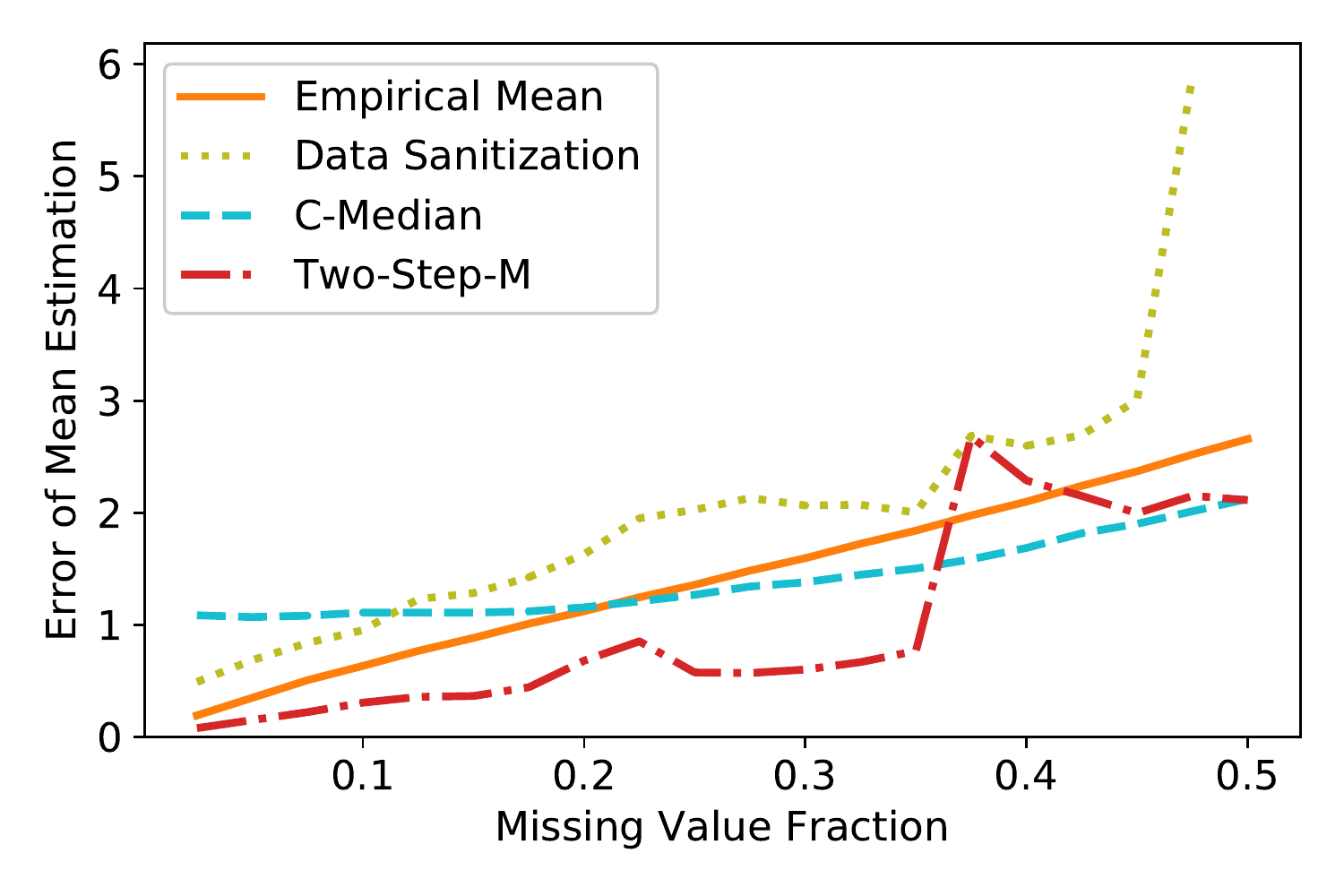}}
    \subfigure[Breast Cancer Wisconsin]{\includegraphics[width=0.32\textwidth]{./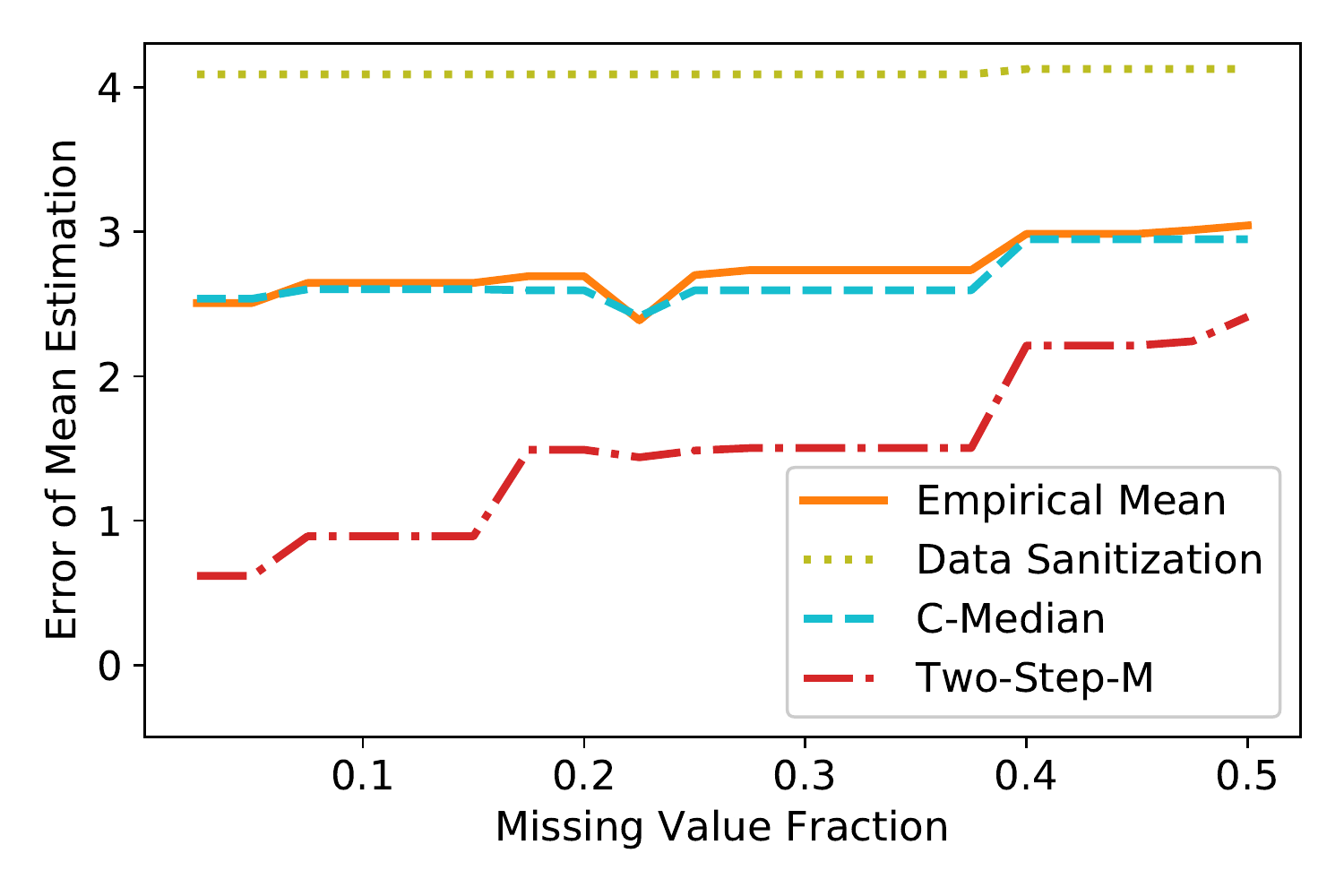}} 
    \subfigure[Blood Transfusion]{\includegraphics[width=0.32\textwidth]{./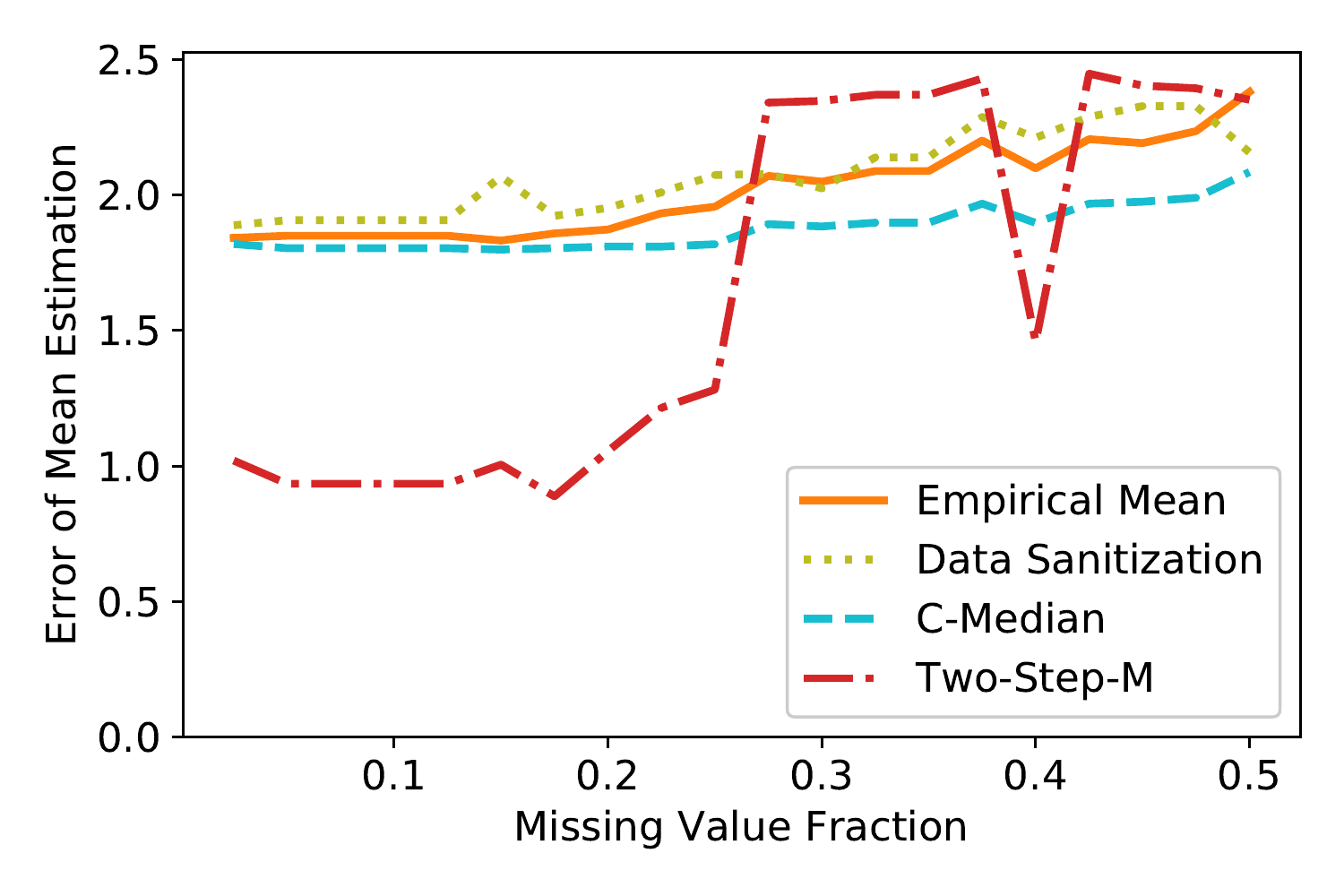}}
    \subfigure[Wearable Sensor]{\includegraphics[width=0.32\textwidth]{./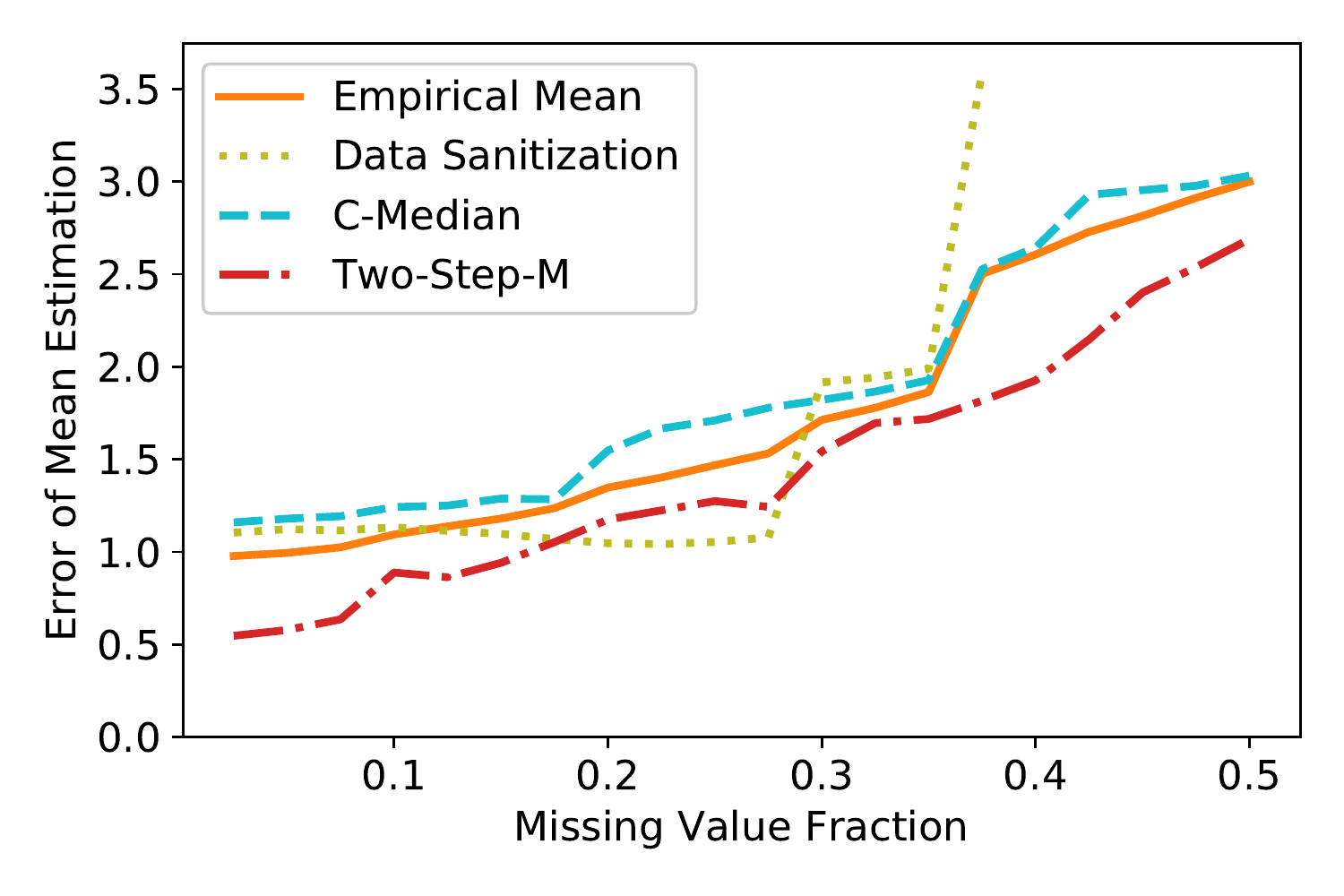}} 
    \subfigure[Mice Protein]{\includegraphics[width=0.32\textwidth]{./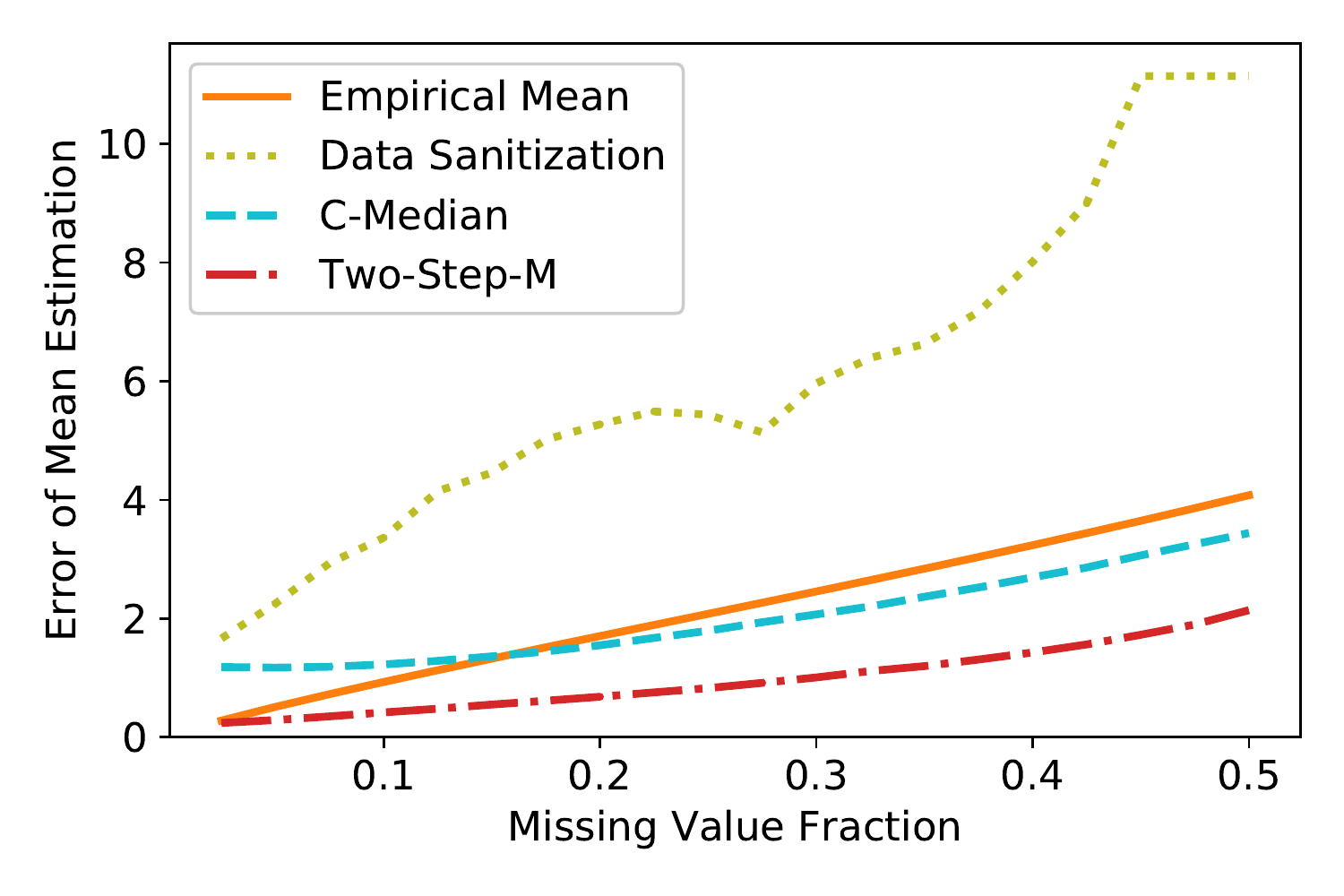}}
    \caption{Error of mean estimation on real-world data sets.}
    \label{fig:realworld_appendix}
\end{figure*}                                                   

\begin{table}[t]
\caption{Properties of the real-world data sets in our experiments.}
\small
\center
\label{tab:dataset}
\vskip 0.15in
\begin{tabular}{lcccr}
\toprule
Data Set & Samples & Features & ITHSVD Rank \\
\midrule
Leaf    & 340 & 14 & 3 \\
Breast Cancer & 69 & 10 & 3\\
Blood Transfusion  & 748 & 5 & 3 \\
Wearable Sensor & 52081 & 9 & 4\\
Mice Protein Expr. & 1080 & 77 & 10 \\
\bottomrule
\end{tabular}
\vskip -0.1in
\end{table}

\newparagraph{Mean Estimation on Real-world Data}
We turn our attention to settings with real-world data with unknown structure. 
We use five data sets from the UCI repository~\cite{UCI} for the experiments in this section.
Specifically, we consider: Leaf \cite{Leaf}, Breast Cancer Wisconsin \cite{Breast}, Blood Transfusion \cite{Transfusion}, Wearable Sensor \cite{Sensor}, and Mice Protein Expression \cite{Mice}. 
For each data set, we consider the numeric features; all of these features are also standardized. 
For all the data sets, we report the $l_2$ error. 
We summarize the size of the data sets along with the rank used for Two-Step-M in Table \ref{tab:dataset}. 
As the structure is unknown, we omit Two-Step-E. 
We show the results in Figure \ref{fig:realworld_appendix}.
We find that Two-Step-M always outperforms Empirical Mean and C-Median on three data sets (Breast Cancer Wisconsin, Wearable Sensor, Mice Protein Expression). 
For the other two (Leaf, Blood Transfusion), the error of Two-Step-M can be as much as two-times lower than the error of the other two methods for small $\epsilon$'s ($\epsilon < 0.35$ for Leaf and $\epsilon < 0.25$ for Blood Transfusion).
We also see that the estimation error becomes very high only for large values of $\epsilon$.
It is also interesting to observe that Data Sanitization performs worse than the Empirical Mean and the C-Median for real-world data.
Recall that the opposite behavior was recorded for the synthetic data setups in the previous section.
Overall, these results demonstrate that structure-aware robust estimators can outperform the standard filtering-based robust mean estimators even in setups that do not follow the linear structure setup in Section~\ref{sec:mean_estimation}.





\end{document}